\def\eqref#1{equation~\ref{#1}}
\def\1{\bm{1}}
\DeclareMathAlphabet{\mathsfit}{\encodingdefault}{\sfdefault}{m}{sl}
\SetMathAlphabet{\mathsfit}{bold}{\encodingdefault}{\sfdefault}{bx}{n}
\theoremstyle{plain}
\newtheorem{theorem}{Theorem}[section]
\newtheorem{lemma}[theorem]{Lemma}
\newtheorem{corollary}[theorem]{Corollary}
\theoremstyle{definition}
\newtheorem{definition}[theorem]{Definition}
\newtheorem{assumption}[theorem]{Assumption}
\theoremstyle{remark}
\newcommand{\EE}{\mathbb{E}}
\newcommand{\II}{\mathbb{I}}
\newcommand{\PP}{\mathbb{P}}
\newcommand{\SSS}{\mathbb{S}}
\newcommand{\ZZ}{\mathbb{Z}}
\newcommand{\cA}{\mathcal{A}}
\newcommand{\cB}{\mathcal{B}}
\newcommand{\cD}{\mathcal{D}}
\newcommand{\cO}{\mathcal{O}}
\newcommand{\cP}{\mathcal{P}}
\newcommand{\cS}{{\mathcal{S}}}
\title{Cooperative Multi-agent RL with Communication constraints}
\author{Nuoya Xiong\thanks{Carnegie Mellon University. Email: \texttt{nuoyax@andrew.cmu.edu}} \qquad Aarti Singh\thanks{Carnegie Mellon University. Email: \texttt{aarti@cs.cmu.edu}}
}
\begin{document}

\maketitle

\begin{abstract}
Cooperative Multi-agent reinforcement learning (MARL) often assumes frequent access to global information in a data buffer, such as team rewards or other agents’ actions, which is typically unrealistic in decentralized MARL systems due to high communication costs. When communication is limited, agents must rely on outdated information to estimate gradients and update their policies. A common approach to handle missing data is called importance sampling, in which we reweigh old data from a base policy to estimate gradients for the current policy.  However, it quickly becomes unstable when the communication is limited (i.e. missing data probability is high), so that the base policy in importance sampling is outdated. To address this issue, we propose a technique called \textit{base policy prediction}, which utilizes old gradients to predict the policy update and collect samples for a sequence of base policies, which reduces the gap between the base policy and the current policy. This approach enables effective learning with significantly fewer communication rounds, since the samples of predicted base policies could be collected within one communication round. Theoretically, we show that our algorithm converges to an $\varepsilon$-Nash equilibrium in potential games with only $\mathcal{O}(\varepsilon^{-3/4})$ communication rounds and $\mathcal{O}(\mathrm{poly}(\max_i |\cA_i|)\cdot \varepsilon^{-11/4})$ samples, improving existing state-of-the-art results in communication cost, as well as sample complexity without the exponential dependence on the joint action space size. We also extend these results to general Markov Cooperative Games to find an agent-wise local maximum. Empirically, we test the base policy prediction algorithm in both simulated games and MAPPO for complex environments. The results show that our algorithms can significantly reduce the communication costs while maintaining good performance compared to the setting without communication constraints, and standard algorithms fail under the same communication cost.
\end{abstract}

\section{Introduction}
In recent years, multi-agent reinforcement learning (MARL) has achieved remarkable empirical success. A large number of MARL applications consider fully cooperative settings, where all agents share a common goal and aim to maximize the joint team reward \citep{yu2022surprising, rashid2020monotonic, sunehag2017value}. In such scenarios, the challenge is to enable effective coordination to jointly optimize the reward together, cope with exponential joint action space and dynamic environments in order to achieve optimal performance. To achieve this goal, many papers assume that all agents can get access to the team reward and other agents' actions at each time, hence each agent can update their policies accordingly. 

However, they often ignore the cost of getting team rewards and other agents' actions. In the real world, each agents usually can only observe partial rewards, while getting global information requires communication with each other. Or, all agents can only get global information by accessing the replay buffer that can store all data. In both scenarios, getting global information could lead to a huge communication cost in a large-scale decentralized MARL system, such as multi-robot systems \citep{orr2023multi}, sensor networks \citep{zhao2023optimizing}, or traffic control \citep{chu2019multi}. For example, in a large-scale robot system to transport packages, many mobile robots are coordinated using MARL \citep{shen2023multi}. However, requiring each individual agent to obtain global information at every timestep incurs substantial communication overhead, which will significantly slow down decision making.
Due to the high cost of acquiring information, agents are often forced to rely on outdated data to infer how their actions influence the group.  This delay in feedback makes effective coordination and collaboration much more challenging. Therefore, it is crucial to design MARL algorithms that can reduce communication costs while still maintaining comparable performance. 
\begin{table}[t]\centering\footnotesize
 \caption{The Comparisons between representative works in both Potential Games and Markov Cooperative Games. Terms other than $|\cA_i|$ and $\varepsilon$ are ignored.}
\begin{tabular}{ccccc}

\midrule[1.5pt]
 & Sample Complexity  & Communication Cost
 & Equilibrium Type\\ \midrule
\cite{song2021can}    & $\cO(\max_{i \in [n]}|\cA_i|/\varepsilon^3)$ &$\cO(\max_{i \in [n]}|\cA_i|/\varepsilon^3)$ & Nash Equilibrium \vspace{0.3em} \\ 
\midrule \cite{liu2021sharp}  & $\cO(\prod_{i=1}^n |\cA_i|/\varepsilon^2)$  & $\cO(\prod_{i=1}^n |\cA_i|/\varepsilon^2)$ & Nash Equilibrium \\
\midrule \makecell{\citep{leonardos2021global}\\ \citep{ding2022independent}} & $\cO(\max_{i \in [n]}|\cA_i|/\varepsilon^{5})$  & $\cO(\max_{i \in [n]}|\cA_i|/\varepsilon^{5})$ & Nash Equilibrium \\ 
\midrule\cite{wang2023breaking} & $\cO(\max_{i \in [n]}|\cA_i|/\varepsilon^2)$ &$\cO(\max_{i \in [n]}|\cA_i|/\varepsilon^2)$ & Coarse Correlated Equilibrium\\
\midrule Algorithm~\ref{alg: NE} and ~\ref{alg: coopertaive}  & $\cO(\max_{i \in [n]}|\cA_i|/\varepsilon^{11/4})$  & $\cO(1/\varepsilon^{3/4})$ & Nash Equilibrium

\\\bottomrule[1.5pt]
\end{tabular}
\label{table:MCG comparison}
\end{table}
 In this work, following standard RL practice, we assume there exists a shared replay buffer that stores all data accessible to each agent, but agents can only get data in a communication round. The goal is to reduce the number of communication rounds, while still keeping decent convergence performance. A natural and widely used way to leverage old data for policy updates in empirical MARL algorithms is importance sampling, where gradients are estimated using samples collected under previous policies and are corrected by an importance weight. However, importance sampling suffers from large variance in the situation where the number of communication rounds is limited and the base policy is too outdated. To mitigate this issue, we provide a strategically modified importance sampling (IS) that can further decrease the number of communication rounds compared to previous works, while keeping comparable performance.  
 The key technique of the modified IS is the \textit{base policy prediction}. Unlike classical importance sampling, which keeps the base policy fixed for several future iterations and updates the new policy using gradient estimates from samples collected under the base policy, we predict the base policy by performing gradient updates but using only the old gradient. This approach allows us to get a smaller difference between the base and new policies, which in turn bounds the variance of the importance sampling estimates. As a result, it reduces the required communication rounds, since the smaller variance of importance sampling allows longer intervals and avoids frequent communication with the data buffer. 

To be more specific, we make the following contributions:

\begin{itemize}[left=0pt]
    \item  As a starting point, we study Potential Games (PGs), which is a fundamental subclass of MARL. We propose a novel algorithm that converges to a $\varepsilon$-Nash Equilibrium (NE) using $\cO\left(\mathrm{poly}(\max_i |\cA_i|)\cdot \varepsilon^{-11/4}\right)$ samples, while requiring only $\cO(\varepsilon^{-3/4})$ communication rounds. 
    \item Furthermore, we extend our framework to general Markov Cooperative Games (MCGs), where the objective is to find a policy that can achieve agent-wise local optima of the total reward in MCG. We propose an algorithm that uses our Potential Game algorithm as an oracle to solve the equilibrium step-by-step, which can converge to a product policy that approximates the agent-wise local optima within 
  $\cO(\max_i |\cA_i|\cdot \varepsilon^{-11/4})$ sample complexity and $\widetilde{\cO}(1/\varepsilon^{3/4})$ communication cost. 
  Table~\ref{table:MCG comparison} provides comparisons with some previous representative works for both PGs and MCGs. Compared to \citep{ liu2021sharp}, we avoid the dependence on exponential size of the joint action space in our result. Compared to \citep{song2021can}, our approach achieves better $\varepsilon$-dependence in both sample complexity and communication cost, and avoids dependence on size of action space in communication cost. Finally, compared to \citep{wang2023breaking}, which only converges to a Coarse Correlated Equilibrium (CCE), our results also have a significantly smaller communication round, with a slightly higher sample complexity.
  \item Empirically, we first test our algorithms in a simulated potential game and a congestion game. Then, we apply our base policy prediction idea to the MAPPO algorithm in more complicated environments. We show that our algorithms can reduce the communication cost while keeping comparable performance, while baseline algorithms with the same communication interval fail.
\end{itemize}








\section{Related Work}
\paragraph{Markov Games} The Markov Game (MG) \citep{littman1994markov} is a widely used framework in multi-agent reinforcement learning. 
Several previous theoretical works \citep{liu2021sharp, tian2021online, liu2022sample} investigate learning NE and establish regret or sample complexity bounds that depend on the exponential action space. 
To break this exponential curse, subsequent works have proposed decentralized algorithms that focus on learning CCE or CE \citep{jin2021v,daskalakis2022complexity,zhan2022decentralized,cui2023breaking,wang2023breaking}. However, these works still require access to other agents' actions at each time.

A line of research focuses on cooperative MARL, where agents aim to collaborate together. 
In the Markov Cooperative Game (MCG), each agent receives the same team reward, and the goal is to maximize this total reward through coordination. This formulation is widely adopted in empirical MARL studies \citep{yu2022surprising, lowe2017multi, rashid2020monotonic, sunehag2017value}. 
On the theoretical side, some works \citep{leonardos2021global, zhang2021gradient, ding2022independent} study the independent policy gradient ascent algorithm in MPG, providing algorithms for learning NE with polynomial sample complexity.  \citep{sun2023provably} improves the previous convergence rate by introducing a suboptimality gap-dependent result.
\paragraph{Importance Sampling in RL}
Many policy gradient approaches in RL use importance sampling to reuse the old data, which enables effective learning with less samples, like TRPO \citep{schulman2015trust}, PPO \citep{schulman2017proximal}, and MARL algorithms like MAPPO \citep{yu2022surprising} and DOP \citep{wang2020dop}. Some works also consider reducing the variance of importance sampling in RL. Clipping is a common technique for variance reduction, as used in PPO \citep{schulman2017proximal} and Retrace \citep{munos2016safe}. The idea of combining importance sampling with a regression model to reduce the variance, known as AIPW \citep{rotnitzky1998semiparametric} in the causal inference literature, was adapted to off-policy evaluation (OPE) in RL by \citep{jiang2016doubly, thomas2016data} named Doubly-Robust OPE. However, it has not been used in MARL to the best of our knowledge.

\paragraph{Delayed Reward}
Our work is also similar to the study of delayed reward setting, in which the reward of the current step is given after several timesteps.  
There is a series of works studying delayed reward in Multi-Armed Bandit. Some works \citep{cesa2016delay} study the constant delay, while some other works \citep{gyorgy2021adapting, zimmert2020optimal,gael2020stochastic} consider adversarial delay. Beyond bandits, some papers also consider RL \citep{jin2022near,lancewicki2022learning, zhan2025adapting} and MARL \citep{zhang2023multi} with delayed reward.
The key difference between these works and our work is that we allow agents to actively request rewards via communication, while they need to adapt to delays from some distribution.



\section{Preliminaries}

Markov Games (MGs) can be represented by a tuple $(n,\cS, \cA,H, \{\PP_h\}_{h \in [H]},\{r_i\}_{i \in [n]})$, where $n \ge 2$ is the number of agents, $\cS$ is the state space, $\cA = \cA_1\times \cA_2\times \cdots \times \cA_n$ is the joint action space for $n$ agents, $H$ is the horizon, $\PP_h : \cS \times \cA \to \Delta(\cS)$ denotes the transition probability at step $h$, and $r_i:\cS \times \cA \mapsto [0,R_{\max}]$ is the reward function for agent $i$.
In Markov Games, a policy for the agent $i$ is denoted as  $\pi_i = \prod_{h=1}^H \pi_{i,h}, $ where for each step $h$, the policy $\pi_{i,h}: \cS \mapsto  \Delta(\cA_i)$ can be regarded as a distribution of the actions given the state. The joint policy is denoted by $\pi: \cS \mapsto \Delta(\cA)$ where the randomness can be correlated. One particular class of policy is the product policy, where the randomness of each agent is independent and $\pi_h(s,\bm{a})$ can be rewritten as $\pi_h(s,\bm{a}) = \prod_{i=1}^n \pi_{i,h}(s,a_i)$.
We define the state value function of agent $i$ under policy $\pi$ as
$V_{i,h}^\pi(s) = \EE_\pi\left[\sum_{h'=h}^{H-1}r_i(s_{h'}, \bm{a}_{h'})\;|\; s_h=s\right],$
where $(s_{h'}, \bm{a}_{h'})$ denotes the global state–action pair at time $h'$.
We assume the initial state is fixed as $s_1$, and the total value function is  $V_{i,1}^\pi(s_1).$ 

The goal of a Markov game is to find an approximate equilibrium policy $\pi$ where no agent can benefit by deviating unilaterally. We define the equilibrium gap based on this concept.
\begin{definition}[Equilibrium Gap]
    The equilibrium gap of a joint policy $\pi \in \Pi$ is defined by 
    $
        \text{Gap}(\pi) = \max_{i \in [n]} \left(V_{i,1}^{\dagger \times \pi_{-i}}(s_1) - V_{i,1}^\pi(s_1)\right),
    $
    where $V_{i,1}^{\dagger\times \pi_{-i}}(s_1)$ represents the value of best response i.e., $V_{i,1}^{\dagger\times \pi_{-i}}(s_1)=\max_{\pi_i' \in \Pi_i}V_{1}^{ \pi_i' \times \pi_{-i}}(s_1)$.
    In particular, a joint policy $\pi$ with an equilibrium gap less than $\varepsilon$ is called $\varepsilon$-CCE (\textbf{C}oarse \textbf{C}orrelated \textbf{E}quilibrium), and a product policy with an equilibrium gap less than $\varepsilon$ is called $\varepsilon$-NE (\textbf{N}ash \textbf{E}quilibrium).
\end{definition}
A particularly important subclass is the Markov Cooperative Game (MCG), in which all agents share the same reward function $r_i\equiv r$ for all $i \in [n]$. Hence, we can simplify the value function as $V_h^\pi(s)$. In Markov Cooperative Game, the definition of equilibrium gap simplifies: $\pi$ has an equilibrium gap less than $\varepsilon$ if it is an approximate agent-wise local maximum of the common reward, i.e.,
$V_1^{ \dagger \times \pi_{-i}}(s_1) \le V_1^\pi(s_1) + \varepsilon.$
That is, no single agent can unilaterally deviate and increase the team reward by more than $\varepsilon$. This agent-wise local maximum is important in cooperative MARL. In fact, an agent-wise local maximum is a point at which the total reward cannot be further increased through unilateral modifications, which implies a stable coordination system. 

In this work, unlike classical games where agents receive rewards at every step, we assume rewards are stored in a buffer and can be accessed only during communication rounds. A communication round allows agents to exchange information but not interact with the environment, while sampling can only be done after the communication round. Formally, in a multi-agent potential game over $T$ rounds with communication at times $T_1,\cdots,T_r$, agents cannot observe others’ actions or rewards between communication rounds, but at each round they can access the shared buffer and coordinate for future steps. We also call the number of communication rounds \textit{the communication cost}.

\section{A Base Case: Potential Games}
To study the general MCG setting, we begin with a simpler base case known as Potential Games (PGs). In PGs, the rewards may differ across agents, and there exists a potential function 
$\phi$ that captures the value differences among them. As we will demonstrate in the next section, the algorithm developed for PGs serves as a fundamental building block and can be incorporated into the framework for solving general MCGs.

 Formally,  we assume that each agent $i$ has the approximated reward function $\hat{r}_i(\bm{a}) \in [0,R_{\max}]$ stored in the shared buffer, which is a bounded unbiased estimator of the true reward $r_i(\bm{a}) \in [0,R_{\max}]$. 
In a PG, there exists a potential function $\phi:\cA \to [0,\phi_{\max}]$ that characterizes the unilateral deviation of all agents. We assume $M = \max\{R_{\max},\phi_{\max}\}\ge 1$ in this section. Formally, denote $\phi(\pi)=\EE_{\bm{a}\sim \pi}[\phi(\bm{a})]$, then the change in individual value  resulting from a unilateral deviation can be captured by the potential function, i.e. \begin{align}\EE_{\bm{a}\sim {\pi_i \times \pi_{-i}}}[r_i(\bm{a})] - \EE_{\bm{a}\sim {\pi'_i\times  \pi_{-i}}}[r_i (\bm{a})]= \phi(\pi_i\times  \pi_{-i}) - \phi(\pi_i'\times \pi_{-i}), \forall i \in [n].\label{eq:potential}\end{align} 
\subsection{Importance Sampling with Base Policy Prediction}

In practical policy-gradient style algorithms like PPO or TRPO, the common solution for not collecting samples and rewards at each round is called Importance Sampling (IS). 
However, the variance of IS estimators can become large when the new and old policies diverge significantly, which presents a key challenge when we want to significantly reduce the communication cost, where the old policy will remain unchanged for a long time.

To control the variance of importance sampling, we propose \textit{Base Policy Prediction}, where instead of fixing a single base policy and reusing it across multiple future iterations, we proactively predict a sequence of base policies in advance. During the communication round, we apply several gradient updates using the current gradient estimate to generate a number of predicted base policies. We then collect samples under all of these predicted base policies. In subsequent iterations, we use the corresponding predicted base policy as the reference distribution in importance sampling. This strategy ensures that the base and target policies remain closer in distribution, thereby reducing variance of IS estimator. This helps us to use old data for $\cO(1/\varepsilon^{1/4}) > \Omega(1)$ rounds, and significantly reduces the number of communication rounds to get new data.

\begin{algorithm}[t]
     \begin{algorithmic}[1]
         \caption{PG$(\cP,T, \varepsilon)$}
         \label{alg: NE}
         \STATE Initalize the policy $\pi^0 = \pi^{\mathrm{base}}.$
         \FOR{$t=1,2,\cdots, T$}
          \STATE Calculate $\hat{\pi}_{i}^{t} (\cA_i)\propto \hat{\pi}_{i}^{t-1}(\cA_i)\cdot \exp(\eta \hat{\ell}_i(a_i,\hat{\pi}^{t-1}))$. \textcolor{blue}{\quad // Natural Policy Gradient}
         \IF{Changing Condition in Eq.~\ref{eq:condition} holds}
         \STATE Change the base policy $\hat{\pi}^{t} = \pi^{\mathrm{base}}$.
         \STATE 
         Compute $\overline{\pi}^{(t+t')}$ for $0\le t'\le T$  by Eq.~\ref{eq:aaline}. 
         \STATE Denote $\Pi_{t} = \{\overline{\pi}^{(t+t')}\mid 0\le t'\le T\}$ as the set of distinct policies of $\overline{\pi}^{(t+t')}$. \textcolor{blue}{\quad // Base Policy Prediction} 
         \FOR{each $\pi \in \Pi^t$}
         \STATE For $i \in [n]$ and $a_i \in \cA_i$, sample $\bm{a}_1,\cdots, \bm{a}_{N} \sim \cA_i \times \overline{\pi}^{t+t'}_{-i}$ to collect the $N=\Theta(1/\varepsilon^2)$ samples $\cD_{\cA_i,t+t'}^{(i)} = \{\hat{r}_i(\bm{a}_j)\}_{j \in [N], i \in [n]}$. \textcolor{blue}{\quad // Collect Samples for Base Policies}  \label{line:sample}
         \ENDFOR
         \STATE Update $\cD_{t+t'}=\bigcup_{i=1}^n \bigcup_{a_i \in \cA_i}\cD_{\cA_i,\overline{\pi}^{(t+t')}}^{(i)} $  as the final dataset for each $0\le t'\le T$.

         \STATE Start a communication round to share all the dataset $\cD_{t+t'}$ for all $t'$ and the current policy $\hat{\pi}^t$ to all agents.\textcolor{blue}{\quad // Access the Data Buffer}
         \ENDIF
         \STATE Reweight $\cD_{\cA_i,t}^{(i)} = \{ (\cA_i, \bm{a}_{-i}), \hat{r}_i(a_i, \bm{a}_{-i})\}_{n \in N}$ by $\hat{r}_i(a_i, \bm{a}_{-i}) \cdot \frac{\hat{\pi}^t(\bm{a}_{-i})}{\overline{\pi}^t(\bm{a}_{-i})}$. Get an estimate of $\hat{\ell}_i(a_i, \hat{\pi}^t) = \EE_{\bm{a}_{-i}\sim \hat{\pi}_{-i}^t}[\hat{r}_i(a_i,\bm{a}_{-i})] = \EE_{\cD_{\cA_i,t}^{(i)}}[\hat{r}_i(a_i, \bm{a}_{-i})]$ for each $a_i \in \cA_i$.  
         \STATE Calculate $g_{t-1} = \sum_{i=1}^n\{\EE_{a_i \sim \hat
         \pi_i^t}[\hat{\ell}_i(\hat{\pi}_i^{t}, \hat{\pi}_{-i}^{t-1})] - \hat{\ell}_i(\hat{\pi}^{t-1})\}.$\textcolor{blue}{\quad // Estimate Nash Gap}
        
         \ENDFOR
         \STATE \textbf{Return} $\hat{\pi}^{k^*}$, where $k^* = \arg\min_{k \in [T-1]}g^k.$
     \end{algorithmic}
\end{algorithm}

Our Algorithm~\ref{alg: NE} $\text{PG}(\cP, T, \varepsilon)$ takes the potential game instance $\cP$, total time $T$ and the target accuracy $\varepsilon$ as input. It is based on the Natural Policy Gradient (NPG)
algorithm:
\begin{align}\hat{\pi}_{i}^{t+1} (a_i)\propto \hat{\pi}_{i}^{t}(a_i)\cdot \exp(\eta \ell_i(a_i,\hat{\pi}^{t})),\label{eq:hatpi}\end{align}
where $\ell_i(a_i,\pi^t) = \EE_{a_i \sim \pi_{-i}^t}[r_i(\bm{a})]$ is the marginalized reward function.
 The algorithm begins with a base policy $\pi^{\mathrm{base}}$ and iteratively updates the joint policy over $T$ iterations. At each round $t$, the algorithm checks whether a predefined changing condition (Eq.~\ref{eq:condition}) is satisfied. If so, the base policy is reset, and a collection of base policies ${\overline{\pi}^{(t+t')}}$ is constructed. 
For each base policy, data is collected by executing actions across all agents with sufficiently many samples. In the following paragraphs, we provide a detailed introduction to key components.

\paragraph{Base Policy Prediction}The collection of base policies plays a central role in our algorithm. 
The natural policy gradient induces the update rule
$
\pi_i^{t+t'}(a_i) \propto \pi_i^t(a_i)\cdot \exp\left(\eta \sum_{j=t}^{t+t'-1}\ell_i(a_i,\pi^j)\right).
$
We construct the base policy for $\pi_i^{t+t'}(\cA_i)$ by
\[
\tilde{\pi}_i^{t+t'}(a_i) \propto \pi_i^t(a_i)\cdot \exp\big(\eta t'\,\hat{\ell}_i(a_i,\pi^t)\big),
\]
where $\hat{\ell}_i$ is the empirical estimate of $\ell_i.$ This estimate is more accurate than old policy $\pi_i^t$ 
since $\hat{\ell}_i(a_i,\pi^t)$ can be regarded as an estimate of $\ell_i(a_i,\pi^{t+t'})$, if the difference between $\pi^t$ and $\pi^{t+t'}$ can be controlled. After getting $\tilde{\pi}_i^{t+t'}$, we further replace $\tilde{\pi}$ by $\overline{\pi}_i$ as follows.
Define $\cA_i^{t'} =  \{a \in \cA_i\mid \tilde{\pi}^{t+t'}_i(a) \le \varepsilon/|\cA_i|\}$. Then, the $\overline{\pi}_i$ is defined by 
\begin{align}
    \overline{\pi}_{i}^{t+t'}(a_i) = \begin{cases} \frac{\varepsilon}{|\cA_i|}, & \text{if } a \in \cA_i^{t'}\\
\left(1-\frac{\varepsilon|\cA_i^{t'}|}{|\cA_i|}\right)\tilde{\pi}_i^{t+t'}(a_i), & \text{else } \label{eq:aaline}
\end{cases}, 
\end{align}

In fact, we redistribute the probability mass by mixing the previous base policy with the uniform distribution over $\cA_i$. 
This modification introduces only an $O(\varepsilon)$ error for each policy, while substantially reducing the number of distinct policies in the collection $\{\pi^{t+t'}\}_{t' \in [T-t]}$.  In fact, 
    under Assumption \ref{assum:gap}, we can prove that for any $t$, the cardinality of the set $\Pi_t$ is bounded by  $$|\Pi_t| \le \sqrt{n}\log(\max_i |\cA_i|/ \varepsilon)/\Delta.$$
Such a reduction is crucial for improving the efficiency of sample collection, since we only need to collect samples for distinct policies.
With the aggregated dataset, for the future steps, each agent uses importance sampling to estimate its expected payoff $\hat{\ell}_i(a_i, \hat{\pi}^t)$  by the data sampled from $\overline{\pi}^t$. Since both the collected samples and the policies of all agents are shared during the most recent communication round, each agent $i$ has sufficient information to reconstruct the current joint policy $\hat{\pi}^t$ on its own, which avoids the extra communication at each round.

\paragraph{Changing Condition} Another key component is the condition of communication, in which agents can get access to the global data ${\cD_{t+t'}}$ and synchronize their policies $\hat{\pi}^t$. With this condition, we need to make sure a communication round is started whenever the base policy estimation becomes too outdated and the variance of importance sampling becomes too large. Note that from the base policy prediction, the variance of the importance sampling can be bounded if the difference of the marginalized reward  $\sum_{j=0}^{t'-1}\ell_i(a_i, \hat{\pi}^{t+j}) - \ell_i(a_i, \hat{\pi}^t) = \cO(1)$. Hence, our changing condition is designed to make sure that this difference will not exceed a threshold.
To be more specific, our rule for starting a communication round is that there exists some $i$ and some actions $a_i \in \cA_i$, such that \begin{align} \underbrace{\left|\hat{\ell}_i(a_i, \hat{\pi}^{t+t'-1}) - \hat{\ell}_i(a_i,\hat{\pi}^t) \right|\ge \frac{16}{(n-1)t'\phi_{\max}\ln(1/n\varepsilon)}}_{\text{(A)}}, \ \   \textrm{or}   \ \underbrace{t'\ge \frac{1}{n^{1/4}\varepsilon^{1/4}\phi_{\max}\ln(1/n\varepsilon)}}_{\text{(B)}}.\label{eq:condition}\end{align}
Intuitively, the first condition ensures that the estimation error of the marginalized reward is small, which implies the variance of the importance sampling estimator remains bounded. The second condition prevents long periods without communication. 

\subsection{Theoretical Results}
First, we adopt the sub-optimality gap assumption in \cite{sun2023provably}. 
\begin{assumption}\label{assum:gap}
    For any policy $\pi$, define $\cA_{i,\pi}^* = \arg\max_{a \in \cA_i} \ell_i (a,\pi)$ and $a^*_{i,\pi} \in \cA_{i,\pi}^*.$ Also, define $a_{i,\pi}^{**}=\arg\max_{a \in \cA_i\setminus \cA_{i,\pi}^*}\ell_i(a,\pi)$. Then, we define the suboptimality-gap for agent $i$ and $\pi$ as  $\Delta_{i,\pi}=\ell_i(a_{i,\pi}^*,\pi) - \ell_i(a_{i,\pi}^{**},\pi),$ and $c_{i,\pi} = \sum_{a_i \in \cA_{i,\pi}^*}\pi_i(a_i)$. We assume that two constants $c>0$ and $\Delta>0$ such that 
    \begin{align*}
    \min_{i \in [n], t \in [T]}c_{i,\hat{\pi}^t}\ge c,\qquad 
         \min_{i \in [n],t \in [T]}\Delta_{i,\hat{\pi}^t} \ge \Delta.
    \end{align*}
\end{assumption}
Intuitively, $c > 0$ ensures that the algorithm does not converge to a point with a small gradient norm that is still far from the Nash equilibrium. Some work have shown that in such ill-conditioned cases, an exponential iteration complexity becomes unavoidable.  $\Delta > 0$ indicates that there is a non-negligible gap between the gradients evaluated at the best and the second-best policies, which plays a crucial role in establishing the $\cO(1/\varepsilon)$ convergence rate shown in \cite{sun2023provably}. Now we provide our results for the potential game.

\begin{theorem}\label{thm:negap}
    Denote $M=\max\{\phi_{\max}, R_{\max}\} \ge 1$. Under Assumption~\ref{assum:gap}, following Algorithm \ref{alg: NE} with learning rate $\eta = \frac{1}{2nM}$, if $\varepsilon \le \min\{1/n,1/4\}$, with probability at least $1-\delta$, policies $\{\hat{\pi}^t\}_{t \in [T]}$ satisfy that
    \begin{align}
        \frac{1}{T-1}\sum_{t=1}^{T-1} \text{Gap}(\hat{\pi}^t)\le 8n\varepsilon\cdot M^2\left(1+\frac{1}{c\Delta}\right).
    \end{align}
    when $T=2/\varepsilon$. The total communication cost is bounded by $\cO\left(\frac{n^{1/4}\phi_{\max}}{\varepsilon^{3/4}}\right)$, and the sample complexity is at most $\widetilde{\cO}\left(\frac{nM^3(\sum_{i=1}^n|\cA_i|)\log(1/\delta)}{\varepsilon^{11/4}\Delta}\right)$, where $\widetilde{\cO}$ ignores some logarithm terms. Moreover, the output policy $\pi^{k^*}$ satisfies that $\text{Gap}(\hat{\pi}^{k^*}) \le 28n\varepsilon\cdot M^2\left(1+\frac{1}{c\Delta}\right)^2.$\label{thm:main}
\end{theorem}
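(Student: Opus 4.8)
The plan is to split the argument into an idealized convergence bound that pretends the exact marginalized rewards $\ell_i$ are available, followed by a perturbation bound controlling the importance-sampling error. For the idealized part I would first note that the update in Eq.~\ref{eq:hatpi} is exactly entropic mirror ascent run independently by each agent, and that by the potential identity in Eq.~\ref{eq:potential} the marginalized reward $\ell_i(\cdot,\pi)$ equals, up to agent-independent terms, the partial derivative of $\phi(\pi)=\EE_{\bm a\sim\pi}[\phi(\bm a)]$ in agent $i$'s direction. Consequently each NPG step increases the potential up to a second-order term, and summing the standard one-step mirror-ascent inequality over $t\in[T]$ telescopes $\phi$ (bounded by $\phi_{\max}$) into an $\cO(1/T)$ bound on an averaged stationarity measure. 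To upgrade stationarity to $\text{Gap}(\hat\pi^t)$ I would invoke Assumption~\ref{assum:gap} exactly as in \cite{sun2023provably}: the constant $c$ forces enough mass on near-optimal actions so that a small gradient implies a small best-response gap, and $\Delta$ sharpens the rate to $\cO(1/T)$, which produces the factor $(1+1/(c\Delta))$ and the leading $1/T=\varepsilon/2$ term.

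\textbf{Estimation error — the main obstacle.} The harder part is showing that replacing $\ell_i$ by the importance-sampling estimate $\hat\ell_i$ inflates every bound by only $\cO(\varepsilon)$. The controlling quantity is the density ratio $\hat\pi^t_{-i}/\overline\pi^t_{-i}$ in the reweighting step, whose boundedness governs the estimator variance and is precisely what Base Policy Prediction and the changing condition are engineered to enforce. I would argue by induction over the rounds inside an epoch that the predicted base policy $\overline\pi^{t+t'}$ stays multiplicatively close to the true iterate $\hat\pi^{t+t'}$: this uses that $\hat\ell_i(\cdot,\pi^t)$ is a valid surrogate for $\ell_i(\cdot,\pi^{t+t'})$ as long as condition (A) of Eq.~\ref{eq:condition} has not fired, so the accumulated drift $\sum_{j}\ell_i(a_i,\hat\pi^{t+j})-\ell_i(a_i,\hat\pi^t)$ is $\cO(1)$, together with the $\varepsilon/|\cA_i|$ flooring in Eq.~\ref{eq:aaline}, which keeps every base coordinate bounded away from zero. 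Given bounded ratios, a Bernstein bound with $N=\Theta(M^2/\varepsilon^2)$ samples yields $|\hat\ell_i-\ell_i|=\cO(\varepsilon)$ for a fixed $(i,a_i,t)$, and a union bound over agents, actions, and rounds supplies the $\log(1/\delta)$ and $\sum_i|\cA_i|$ factors and sets the failure probability to $\delta$. The obstacle is exactly the circularity here: accuracy of $\hat\ell_i$ depends on closeness of $\overline\pi$ to $\hat\pi$, which itself depends on the accuracy of the gradients used to predict $\overline\pi$; the induction must therefore carry the policy-closeness invariant and the estimation-error invariant simultaneously, with the changing condition acting as the stopping rule that keeps both within tolerance. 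Feeding $|\hat\ell_i-\ell_i|=\cO(\varepsilon)$ back into the telescoping of the first paragraph gives the claimed $8n\varepsilon M^2(1+1/(c\Delta))$ bound.

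\textbf{Communication and sample complexity.} The communication cost is the number of epochs, one per reset. Condition (B) caps every epoch length at $t'=\cO(1/(n^{1/4}\varepsilon^{1/4}\phi_{\max}))$, so since the (B)-terminated epochs each run the full cap and together contribute at most $T$, there are at most $\cO(n^{1/4}\phi_{\max}\varepsilon^{-3/4})$ of them when $T=2/\varepsilon$. For (A)-terminated epochs I would use a total-variation budget: the threshold in (A) scales like $1/t'$, so each (A)-termination consumes a fixed amount of variation in some $\hat\ell_i$, while the total variation of $\ell_i\in[0,M]$ along the trajectory is bounded by a Cauchy--Schwarz argument against the telescoped potential; this forces the (A)-epoch count to the same $\varepsilon^{-3/4}$ order, so the total communication is $\cO(\varepsilon^{-3/4})$ (getting the tight dimension dependence here is the delicate accounting). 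For samples, I multiply the number of communications by the per-epoch count $|\Pi_t|\cdot N\cdot\sum_i|\cA_i|$, using the stated cardinality bound $|\Pi_t|\le\sqrt{n}\log(\max_i|\cA_i|/\varepsilon)/\Delta$ (a consequence of the flooring in Eq.~\ref{eq:aaline} collapsing saturated coordinates) and $N=\Theta(M^2/\varepsilon^2)$; combining $\varepsilon^{-3/4}$ communications with $\varepsilon^{-2}$ samples per policy yields the $\varepsilon^{-11/4}$ rate together with the $\sum_i|\cA_i|/\Delta$ and $M^3$ factors.

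\textbf{Best-iterate guarantee.} Finally, to control $\text{Gap}(\hat\pi^{k^*})$ for the returned iterate I would relate the computable selection statistic $g^k$ to the true gap. Under Assumption~\ref{assum:gap}, $g^k$ is a one-step-improvement surrogate that both approximates the true gap up to an $\cO(\varepsilon)$ estimation error and a $(1+1/(c\Delta))$ factor, and is small on average by the first paragraph's bound. Selecting the minimizing index gives $g^{k^*}\le(\text{average gap})+\cO(\varepsilon)$, and converting $g^{k^*}$ back into $\text{Gap}(\hat\pi^{k^*})$ costs a second $(1+1/(c\Delta))$ factor; composing these two conversions is what produces the squared $(1+1/(c\Delta))^2$ in the final display.
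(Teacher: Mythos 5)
Your proposal is correct and follows essentially the same route as the paper's proof: a potential-telescoping/mirror-ascent convergence bound sharpened by Assumption~\ref{assum:gap}, a joint induction on estimation accuracy and base-policy closeness with the changing condition enforcing a bounded importance ratio, a potential-budget-plus-Cauchy--Schwarz count of (A)-terminated epochs alongside the trivial cap on (B)-terminated ones, the $|\Pi_t|$ cardinality bound for sample complexity, and the two-sided $g^k$-versus-Gap comparison yielding the $(1+\frac{1}{c\Delta})^2$ factor for the returned iterate. The only place you are coarser than the paper is the (A)-epoch accounting, where the paper makes the per-epoch potential gain quantitative as $\Omega(1/(n t^3\phi_{\max}^3\ln^2(1/n\varepsilon)))$ via Pinsker plus the potential-difference lemma and then applies Cauchy--Schwarz twice to get $(A^*)^4/T^3\le\sum_s (t_s-t_{s-1})^{-3}$, but your sketch points at exactly this mechanism.
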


Theorem~\ref{thm:main} establishes that, given prior knowledge of the target accuracy $\varepsilon$, running the algorithm for $T = 1/\varepsilon$ iterations guarantees that the NE-gap converges to $\cO(\varepsilon)$. The following corollary extends this result by showing that a convergence guarantee can still be obtained without prior knowledge of $\varepsilon$ and without a predefined time horizon.

 \begin{corollary}\label{coroll: negap}
     Denote $\varepsilon_i = \frac{1}{2^{i-1}n}$ and $I_i = (2n\cdot (2^{i-1}-1), 2n\cdot (2^{i}-1)]$. For any Potential Game instance $\cP$, during round $t \in I_i$, we execute 
$
\text{PG}(\cP, n\times 2^{i}, \varepsilon_i)
$
to obtain the sequence of policies $\{\hat{\pi}^t\}_{t \in I_i}$. 
We refer to this overall procedure as the \textbf{PG-Unknown}$(\cP)$ protocol. 
Suppose the protocol is run for $T$ rounds, producing the sequence 
$\{\hat{\pi}^t\}_{t \in [T]}$. 
Define $s = \lfloor \log_2 (T/2n) \rfloor$, then if the output of $\textbf{PG-Unknown}(\cP)$ is defined as the product policy $\pi^{\text{out}}=PG(\cP, n\times 2^{s}, \varepsilon_{s})$, with probability at least $1-\delta$, the output policy satisfies that 
     \begin{align*}
         \text{Gap}(\pi^{\text{out}})\le  28n\varepsilon_{s}\cdot M^3\left(1+\frac{1}{c\Delta}\right)^2 \le  \frac{280nM^3(1+\frac{1}{c\Delta})^2}{T}
     \end{align*}
with $\widetilde{\cO}\left(\frac{nM^3(\sum_{i=1}^n |\cA_i|)T^{11/4}}{\Delta}\right)$ sample complexity and  $\widetilde{\cO}\left(n^{1/4}\phi_{\max}T^{3/4}\right)$ communication cost.
 \end{corollary}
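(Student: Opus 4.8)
The plan is to recognize \textbf{PG-Unknown} as a standard geometric (doubling) schedule and reduce everything to a single per-episode invocation of Theorem~\ref{thm:main}, followed by a geometric summation of the per-episode costs. First I would check that the schedule is internally consistent: the interval $I_i$ has length $2n(2^{i}-1)-2n(2^{i-1}-1)=n\cdot 2^{i}$, which is exactly the horizon passed to $\text{PG}(\cP,n\times 2^{i},\varepsilon_i)$ during $I_i$; and since $\varepsilon_i=\frac{1}{2^{i-1}n}$, we have $2/\varepsilon_i=n\cdot 2^{i}$, so every episode is run for precisely $T=2/\varepsilon_i$ iterations, matching the hypothesis of Theorem~\ref{thm:main}. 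One also verifies $\varepsilon_i\le\min\{1/n,1/4\}$ for the episodes relevant to the output, so the theorem indeed applies.

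For the gap bound I would apply the output guarantee of Theorem~\ref{thm:main} to the single episode indexed by $s=\lfloor\log_2(T/2n)\rfloor$, whose return value is $\pi^{\text{out}}$. This gives $\text{Gap}(\pi^{\text{out}})\le 28n\varepsilon_s M^2(1+\frac{1}{c\Delta})^2\le 28n\varepsilon_s M^3(1+\frac{1}{c\Delta})^2$, where the last inequality only uses $M\ge 1$. It then remains to convert $\varepsilon_s$ into a $1/T$ rate: the floor definition gives $2^{s}>\tfrac{1}{2}\cdot\tfrac{T}{2n}=\tfrac{T}{4n}$, hence $2^{s-1}>\tfrac{T}{8n}$ and $\varepsilon_s=\frac{1}{2^{s-1}n}<\frac{8}{T}$. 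Substituting yields $28n\varepsilon_s M^3(1+\frac1{c\Delta})^2<\frac{224nM^3(1+\frac1{c\Delta})^2}{T}\le\frac{280nM^3(1+\frac1{c\Delta})^2}{T}$, which is the claimed bound (with room to spare since $28\cdot 8=224\le 280$). Because we only invoke the theorem on episode $s$, the failure probability is $\delta$ as stated; if one also wants every per-episode cost bound to hold simultaneously, a union bound over the $\cO(\log T)$ episodes rescales $\delta$ and contributes only logarithmic factors absorbed into $\widetilde{\cO}$.

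For the sample and communication complexity I would sum the per-episode bounds of Theorem~\ref{thm:main} over $i=1,\dots,s$, upper-bounding the partially completed final episode by a full one (which changes only constants). Writing $1/\varepsilon_i=2^{i-1}n$, the communication cost of episode $i$ is $\cO\big(n^{1/4}\phi_{\max}(2^{i-1}n)^{3/4}\big)=\cO\big(n\phi_{\max}2^{3(i-1)/4}\big)$, and the sample cost is $\widetilde{\cO}\big(\tfrac{nM^3(\sum_i|\cA_i|)}{\Delta}(2^{i-1}n)^{11/4}\big)=\widetilde{\cO}\big(\tfrac{n^{15/4}M^3(\sum_i|\cA_i|)}{\Delta}2^{11(i-1)/4}\big)$. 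Both are geometric in $i$ with ratio strictly above $1$, so the sums are dominated by the top term: $\sum_{i\le s}2^{3(i-1)/4}=\cO(2^{3s/4})$ and $\sum_{i\le s}2^{11(i-1)/4}=\cO(2^{11s/4})$. Using $2^{s}\le T/2n$ gives $2^{3s/4}=\cO\big((T/n)^{3/4}\big)$ and $2^{11s/4}=\cO\big((T/n)^{11/4}\big)$; collecting the powers of $n$ (in particular the cancellation $n^{15/4}/n^{11/4}=n$ in the sample bound, and $n/n^{3/4}=n^{1/4}$ in the communication bound) produces the advertised $\widetilde{\cO}\big(n^{1/4}\phi_{\max}T^{3/4}\big)$ communication cost and $\widetilde{\cO}\big(\tfrac{nM^3(\sum_i|\cA_i|)T^{11/4}}{\Delta}\big)$ sample complexity.

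I do not expect a genuinely hard step: the argument is essentially bookkeeping around the doubling trick once Theorem~\ref{thm:main} is available as a black box. The part demanding the most care is the index arithmetic — confirming that the interval lengths coincide with the horizons, that the relation $s=\lfloor\log_2(T/2n)\rfloor$ produces exactly the constant that makes $224\le 280$, and that the $n$-exponents track correctly through the two geometric sums so the final cancellations yield $n^{1/4}$ and $n$ respectively.
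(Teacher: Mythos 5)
Your proposal is correct and follows essentially the same route as the paper: apply Theorem~\ref{thm:negap} to the single episode indexed by $s$, translate $\varepsilon_s$ into an $\cO(1/T)$ rate via $2^{s}\le T/2n<2^{s+1}$, and sum the per-episode sample and communication bounds over the doubling schedule. The only cosmetic difference is that you bound the sums $\sum_i \varepsilon_i^{-11/4}$ and $\sum_i \varepsilon_i^{-3/4}$ by geometric domination of the top term, whereas the paper bounds each term by its maximum and pays an extra $(s+1)=\cO(\log T)$ factor absorbed into $\widetilde{\cO}$; both yield the stated rates.
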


\section{General MCG}
\begin{algorithm}[t]
     \begin{algorithmic}[1]
         \caption{MCG$(T)$}
         \label{alg: coopertaive}
         \STATE \textbf{Initial}: $\pi^1$ to be the uniform policy $\pi_{i,h}^1(\cdot \mid s)=\text{Unif}(\cA_i)$ for all $(i,s,h)$. $\cB_h^0 = \emptyset$.
         \FOR{$t=1,2,\cdots,T$}
            \STATE Execute $\pi^t$ to get  $\{s_h\}_{h \in [H]}$. Collect $\cB_{h}^t = \cB_h^{t-1}\cup \{s_h\}$ in the shared buffer. 
            \IF {When $t-I_t = 2^a$ for some $a \in \ZZ$} 
             \STATE \textcolor{blue}{// Checking Changing Condition for Speicific Timestep}
            \STATE Start the communication round to get $\cB_h^t$ from the buffer. $T_h(s) = \sum_{s' \in \cB_h^t}\II\{s'=s\}$.
            \IF{$T_h^t(s) \ge 2T_h^{I_t}(s)$} 
           \STATE\textcolor{blue}{// Doubling Trick for Changing Condition}
            \STATE Set exploration policy $\overline{\pi}^t \to \text{Unif}(\{\pi^\tau\}_{\tau \in [t]})$ and $\overline{V}_{H+1}^{t+1}(s) = 0$ for each $s \in \cS.$ 
            \FOR{$h=H,\cdots, 1$}
            \STATE Compute $\pi_h^{t+1} = \text{PG-Share}_h (\overline{\pi}^t,  \overline{V}_{h+1}^{t+1},\lfloor \sqrt{t}\rfloor + 1)$ \textcolor{blue}{\quad // Potential Game Oracle }
            \STATE Compute $\overline{V}_{h}^{t+1} = \text{V-Approx}_h (\overline{\pi}^t, \pi_h^{t+1}, \overline{V}_{h+1}^{t+1}, t)$. \textcolor{blue}{\quad // Backward Calculation}
            \ENDFOR
            \STATE Set $I_{t+1}=t$.
            \ENDIF
            \ELSE 
            \STATE Set $\pi_h^{t+1}=\pi_h^t.$ $I_{t+1}=I_t.$
            \ENDIF
         \ENDFOR
         \STATE For each distinct policy $\pi \in \{\tilde{\pi}^t\}_{t \in [T]}$, collect $\cO(H^2 \log(1/\delta)/\varepsilon^2)$ trajectories for each joint policy $a_i \times \pi_{-i}, i \in [n], a_i \in \cA_i.$ Start a communication round to get empirical estimates $V^{\pi}(s_1)$ and $\{V^{a_i \times \pi_{-i}}_1(s_1)\},$ denoted as $\hat{V}^\pi(s_1)$ and $\{\hat{V}_1^{a_i \times \pi_{-i}}(s_1)\}$. 
         \STATE Calculate  $g(\pi) = \sum_{i=1}^n \max_{a_i \in \cA_i}\hat{V}_1^{a_i \times \pi_{-i}}(s_1) - \hat{V}^\pi(s_1)$ for each different policy $\pi \in \{\tilde{\pi}^t\}_{t \in [T]}$. Denote $t^* = \min_{t \in [T]} g(\tilde\pi^t).$ \textcolor{blue}{\quad // Estimate Equilibrium Gap} 
         \STATE \textbf{Return} $\tilde{\pi}^{t^*}$.
     \end{algorithmic}
\end{algorithm}
Now we study the general MCG setting. We assume that the reward is bounded by $[0,1]$, so the upper bound of the value function will be $H$. The key difficulty when applying the Potential Game oracle into this framework is that, the transition dynamics cannot be directly estimated through the transition kernel $\hat{\PP}(s'\mid s,\bm{a})$ and dynamic programming, as the number of parameters in the transition kernel grows exponentially.  Hence, we borrow the idea from V-learning  \citep{wang2023breaking}, in which the algorithm learns the value function directly and apply the previous potential game oracle to update the policy.

To be more specific, at each iteration, we will sample from the policy $\pi^t$ to collect the data in $\cB_h^t$, which is necessary for checking the trigger condition. Since checking the trigger condition needs communication for data sharing, we use a doubling trick (Line 4) to decide the timestep for checking the trigger condition. 
When the trigger condition is checked and satisfied, an exploration policy $\overline{\pi}^t$ is constructed by uniformly sampling from past policies. Then, starting from $\overline{V}_{H+1}^{t+1}=0$, the algorithm iteratively applies two key subroutines: (i) the $\text{PG-Share}$ subprocedure, which uses the potential game oracle to compute an updated policy $\pi_h^{t+1}$ based on the exploration policy and the current value estimates at the step $h+1$, and (ii) the $\text{V-Approx}$ subprocedure, which updates the approximate value function $\overline{V}_{h}^{t+1}$ based on the exploration policy and the new policy. If the trigger condition is not met, the policy remains the same as the previous iteration. Finally, it estimates the equilibrium gap and outputs the policy with the minimum gap. The detailed introduction and analysis of the subprocedure PG-Share and V-Approx are provided in Appendix~\ref{appendix: MCG}.
The following Theorem~\ref{thm:general MCG} guarantees that Algorithm~\ref{alg: coopertaive} returns a $\cO(\varepsilon)$ agent-wise local maximum.

\begin{theorem}\label{thm:general MCG}
    Following Algorithm~\ref{alg: coopertaive} with $T = 1/\varepsilon^2$, suppose Assumption~\ref{assum:gap} holds for all potential game instances we constructed in Algorithm~\ref{alg: PG-share}, with probability at least $1-\delta$,  the outputted policy $\tilde{\pi}^{t^*}$ satisfied that 
    \begin{align*}
        \text{Gap}(\tilde{\pi}^{t^*})\le \widetilde{\cO}\left(n^2SH^4\cdot \left(1+\frac{1}{c\Delta}\right)^2\right)\cdot \varepsilon
    \end{align*}
    with $\widetilde{\cO}\left(\frac{S^2H^5 (\sum_{i=1}^n|\cA_i|)}{\varepsilon^{11/4}\Delta}\right)$ sample complexity and $\widetilde{\cO}\left(\frac{n^{1/4}S^2H^3}{\varepsilon^{3/4}}\right)$ communication cost, which implies a $\widetilde{\cO}(\varepsilon)$ agent-wise local maximum.
\end{theorem}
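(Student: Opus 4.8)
The plan is to reduce the Markov Cooperative Game to a sequence of \emph{stage-wise potential games} solved by backward induction over the horizon $h=H,H-1,\ldots,1$, invoking Theorem~\ref{thm:main} as a black-box oracle at each stage. The key structural fact is that, for any fixed continuation-value estimate $\overline{V}_{h+1}$, the stage game at step $h$ in which every agent receives payoff $\hat{r}_h(s,\bm{a})+\EE_{s'}[\overline{V}_{h+1}(s')]$ is a common-payoff game, hence a potential game whose potential equals the shared stage value. Consequently the subroutine $\text{PG-Share}_h$ may run the Potential Game algorithm of Algorithm~\ref{alg: NE} on this induced game to output a policy $\pi_h^{t+1}$ that is an approximate stage Nash equilibrium. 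Because the continuation value $\overline{V}_{h+1}$ is learned directly (V-learning) rather than through the transition kernel $\hat{\PP}(s'\mid s,\bm{a})$, the per-stage cost inherits the $\sum_i|\cA_i|$ dependence of Theorem~\ref{thm:main} and never incurs the exponential $\prod_i |\cA_i|$ factor.

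First I would establish clean guarantees for the two subroutines. For $\text{PG-Share}_h$, applying Theorem~\ref{thm:main} with the stage reward bounded by the value range $[0,H]$ (so that the effective $M$ is $\cO(H)$) yields a stage equilibrium gap of order $\widetilde{\cO}(H^2(1+1/(c\Delta)))\varepsilon$ under Assumption~\ref{assum:gap}. For $\text{V-Approx}_h$, I would show that $\overline{V}_h^{t+1}(s)$ concentrates around the true value $V_h^{\pi^{t+1}}(s)$ of the freshly computed policy; this holds at every state $s$ visited often enough under the exploration mixture $\overline{\pi}^t=\text{Unif}(\{\pi^\tau\}_{\tau\in[t]})$, and running the main loop for $T=1/\varepsilon^2$ iterations makes the Monte-Carlo estimation error $\cO(1/\sqrt{T})=\cO(\varepsilon)$ up to $S,H$ factors. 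The nested doubling scheme --- checking the trigger only at times $t-I_t=2^a$ and firing it only when a visitation count $T_h^t(s)$ doubles --- caps the number of triggered backward passes at $\widetilde{\cO}(SH)$, since each of the $SH$ counts can double at most $\log T$ times.

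Next I would propagate the stage errors backward with a performance-difference decomposition. Writing $\text{Gap}(\pi)=\max_i\big(V_1^{\dagger\times\pi_{-i}}(s_1)-V_1^\pi(s_1)\big)$, I would telescope agent $i$'s best-response advantage into a sum of per-step one-shot deviation gains; each such term is controlled by the stage equilibrium gap of $\pi_h$ plus the approximation error of $\overline{V}_{h+1}$ against the true continuation value. Summing over the $H$ steps contributes one factor of $H$, the value magnitude $\cO(H)$ contributes another, and the uniform-mixture coverage together with the V-learning concentration supplies the $S$ and remaining $H$ factors, giving the advertised $\widetilde{\cO}(n^2SH^4(1+1/(c\Delta))^2)\varepsilon$ bound on the average gap. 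Selecting $t^*=\arg\min_t g(\tilde\pi^t)$ using the final empirical estimates $\hat{V}^\pi(s_1)$ and $\{\hat{V}_1^{a_i\times\pi_{-i}}(s_1)\}$ then upgrades the average-gap guarantee to a bound on the single returned policy, exactly as in the last clause of Theorem~\ref{thm:main}.

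The hard part will be the coupling between value estimation and policy improvement in the backward pass. Unlike fixed-policy evaluation, $\overline{V}_{h+1}$ is computed under a policy that is itself only approximately optimal, so I must ensure the V-Approx error enters the recursion \emph{additively} rather than compounding multiplicatively over the $H$ backward steps; otherwise the $H$-dependence would blow up exponentially. The mechanism that prevents this is precisely the mixture exploration policy $\overline{\pi}^t$: it must cover not only the states visited under $\pi^t$ but every state reachable by any agent's best response, so that the stage equilibrium property holds on the occupancy measure appearing in the performance-difference lemma. Proving a uniform coverage bound that yields a clean additive error recursion, while keeping the $S$, $H$, and $\varepsilon$ exponents at the claimed values, is the crux; once it is in place, the stated sample and communication complexities follow by summing the per-stage costs of Theorem~\ref{thm:main} over the $H$ stages and the $\widetilde{\cO}(SH)$ triggered updates.
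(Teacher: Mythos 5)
Your overall architecture matches the paper's: stage-wise common-payoff potential games solved by the PG oracle in a backward pass, V-learning-style direct value estimation to avoid the $\prod_i|\cA_i|$ blowup, the doubling trigger capping the number of backward passes at $\widetilde{\cO}(SH\log T)$, and a final empirical-gap selection of $t^*$. But there is a genuine gap in the step you yourself flag as the crux. You propose to control the best-response advantage via a performance-difference telescope whose occupancy measure is that of the \emph{deviating} policy $\dagger\times\pi_{-i}$, and to close the argument by proving that the uniform mixture $\overline{\pi}^t=\text{Unif}(\{\pi^\tau\}_{\tau\in[t]})$ covers ``every state reachable by any agent's best response.'' No such uniform coverage bound holds in general: the mixture of past executed policies gives no visitation guarantee at states that are only reached under deviations that were never played, so the per-stage equilibrium guarantee of Lemma~\ref{thm: PG-share} (whose error scales as $1/(\sqrt{t}\,\PP_{\overline{\pi}^t}(s_h=s))$) is vacuous exactly where your telescope needs it.

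The paper resolves this not by coverage but by \emph{optimism}: $\text{V-Approx}_h$ adds the bonus $3G(N_h(s),t)$ to the Monte-Carlo estimate, and Lemma~\ref{lemma:V} shows $\overline{V}_h(s)\ge \EE_{\pi_h}[r_h+\PP_{h+1}\overline{V}_{h+1}](s)+\overline{G}(\overline{\pi}^t,t)$ pointwise. Combined with the stage-game guarantee this yields, by backward induction, $\overline{V}_h^{t+1}(s)\ge V_h^{\dagger,\pi_{-i}}(s)$ for \emph{all} states --- at under-visited states the inequality holds because the bonus is large there, not because the stage game was solved well there. The matching upper bound $\overline{V}_h^{t+1}(s)\le V_h^{\pi^{t+1}}(s)+7\sum_{h'}\EE_{\pi^{t+1}}[\overline{G}]$ then charges the cumulative error only to the \emph{on-policy} occupancy of the executed policies, which the mixture does cover, and the sum $\sum_t\sum_h\EE_{s_h\sim\pi^{t+1}}[\overline{G}(\overline{\pi}^t,t)]$ is bounded by the elliptical-potential-style Lemma~\ref{lemma: sum} to give the $\sqrt{T}$ rate. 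Without the optimistic bonus (which your proposal omits entirely --- you describe $\text{V-Approx}$ as a plain concentration estimate of $V_h^{\pi^{t+1}}$), the additive error recursion you want cannot be established, and the claimed $S$, $H$, and $\varepsilon$ exponents do not follow.
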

\section{Experiments}

In this section, we test our base policy prediction algorithms in both simulated games and more complicated environments. Since all previous theoretical works in Table~\ref{table:MCG comparison} have no empirical results, we compare our base policy prediction algorithm with baseline algorithms (standard NPG or MAPPO) with the same communication interval.

\paragraph{Potential Games}\label{sec:potential game experiment}
We follow the experimental setup in \cite{sun2023provably}, but modify the action space of three agents to be of size $10$. More details are provided in Appendix~\ref{appendix:env details}.  We set the number of episodes to $N=5000$. Then, let parameter $K$ indicates that a communication round is initiated after every $K$ episodes. Thus, there are in total $[5000/K]$ communication rounds.
The baseline algorithms are NPG without importance sampling and NPG with naive importance sampling, where the old policy is used as the base policy throughout the 500 episodes between communication rounds.  Then, we implement our Base Policy Prediction (BPP) algorithm, in which the base policy is updated using the old gradient every 100 episodes. To provide the empirical evidence that our BPP mechanism successfully reduce the number of communication rounds without destroying the performance, we also add standard NPG that communicates and receives the reward for each episode. Figure~\ref{fig:placeholder} shows that with $K=500$ and $5000/500=10$ communication rounds, our algorithm has the best performance across other algorithms with the same number of communication rounds, and performs as good as standard NPG, which communicates at each episode.



\begin{figure}[t]
\setlength{\abovecaptionskip}{2pt}
    \centering
    \captionsetup{labelfont={color=blue}, textfont={color=blue}}
    \begin{subfigure}{\linewidth}
    \centering
    \begin{subfigure}{0.4\linewidth}
        \includegraphics[width=\linewidth]{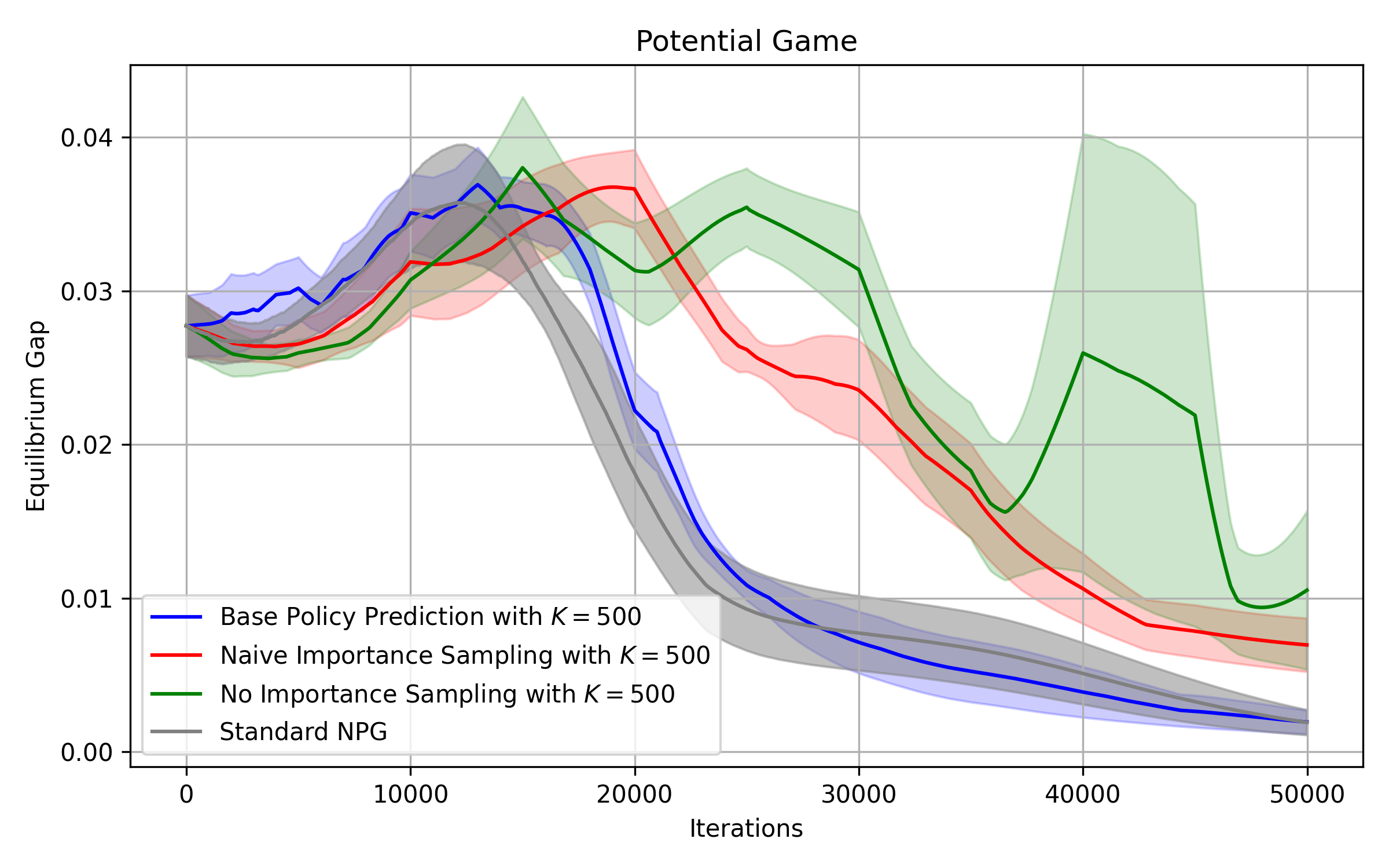}
        \caption{Potential Games}
        \label{fig:placeholder}
    \end{subfigure}
    \begin{subfigure}{0.4\linewidth}
        \includegraphics[width=\linewidth]{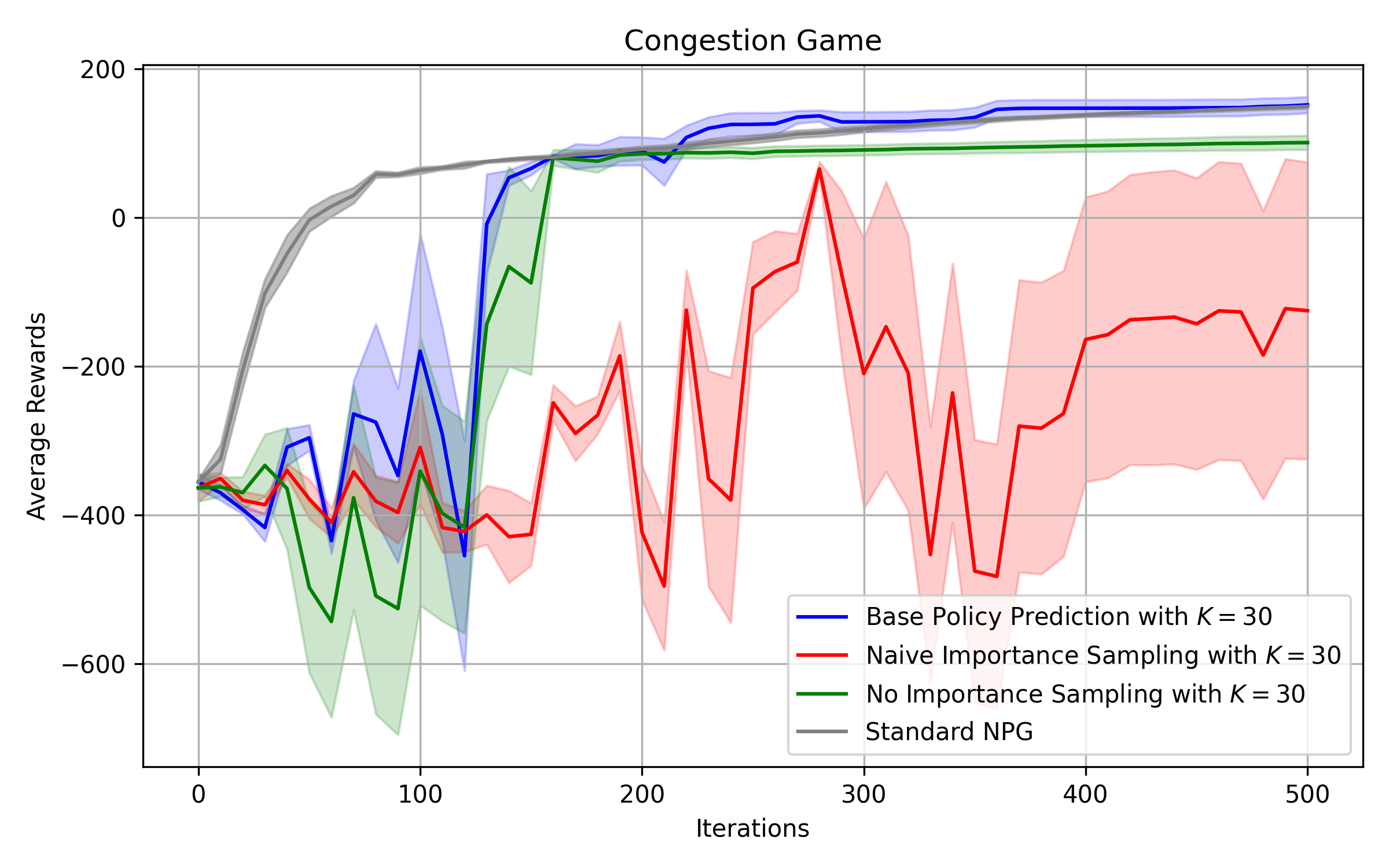}
        \caption{Congestion Games}\label{fig:placeholder_congestiongames}
        
    \end{subfigure}
    \end{subfigure}

    \caption{Convergence Result for Different Algorithms.}
\end{figure}
\vspace{-1em}
\paragraph{Congestion Games}
We adopt the congestion game setting from \citep{sun2023provably}. The environment details are provided in Appendix~\ref{appendix:env details}.  We set the communication interval as $30$, and we precompute the base policy for every $5$ rounds in our algorithm. The baseline algorithms are the same as above. The reward curves are shown in Figure~\ref{fig:placeholder_congestiongames}, which shows that our approach achieves the highest rewards, and the naive algorithm without importance sampling converges to a suboptimal solution due to biased gradient estimates. Moreover, when the communication interval is large, the base policy in the naive importance sampling method becomes overly outdated, leading to instability and failure to converge.
\vspace{-1em}
\paragraph{Experiments on MAPPO}
We also evaluate our base policy prediction algorithm on two Multi-Agent Particle Environments (MPE) \citep{lowe2017multi}, \textit{Spread} and \textit{Reference},  and three SMAC environments \textit{1c3s5z}, \textit{MMM} and \textit{3s\_vs\_3z}. In the SMAC environment, the communication interval is required to be large only after the win rate exceeds a specified threshold (set to 30\% in our experiments), in order to avoid the cold-start issue.
We choose MAPPO \citep{yu2022surprising} as our baseline algorithm. In MAPPO, agents receive new data every 10 episodes. We extend this by introducing a communication interval $I$, where agents collect data only once every $10\times I$ gradient updates. At each communication round, $I$ base policies are predicted, and each base policy is used for 10 updates. In the first experiment with Figure~\ref{fig:convergencepotentialmappo}, we show that a larger $I$ slows convergence but achieves comparable final performance, even with communication intervals 10 times longer. This demonstrates that the base policy prediction algorithm enables effective learning even when the communication interval is much larger than usual.
The second experiment (rightmost two figures in Figure~\ref{fig:convergencepotentialmappo} shows that when the communication interval in naive MAPPO is scaled to $10\times 10 = 100$, the algorithm fails to converge. These findings indicate that our method can both preserve the convergence guarantees of MAPPO and significantly reduce the communication cost. 


\begin{figure}[H]
\setlength{\abovecaptionskip}{2pt}
    \centering

    \begin{subfigure}{\linewidth}
        \centering
        \begin{subfigure}{0.45\linewidth}
            \includegraphics[width=\linewidth]{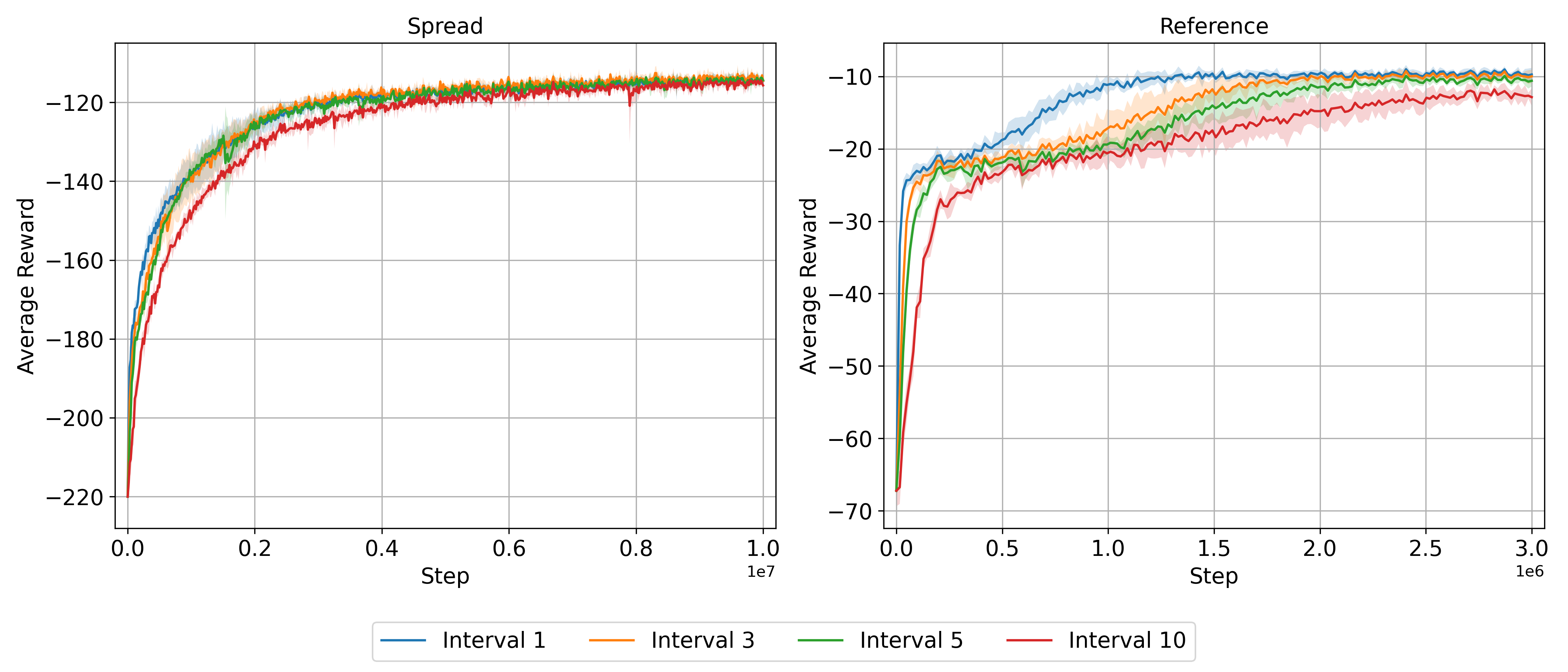}
            \label{fig:sub1}
        \end{subfigure}
        \begin{subfigure}{0.45\linewidth}
            \includegraphics[width=\linewidth]{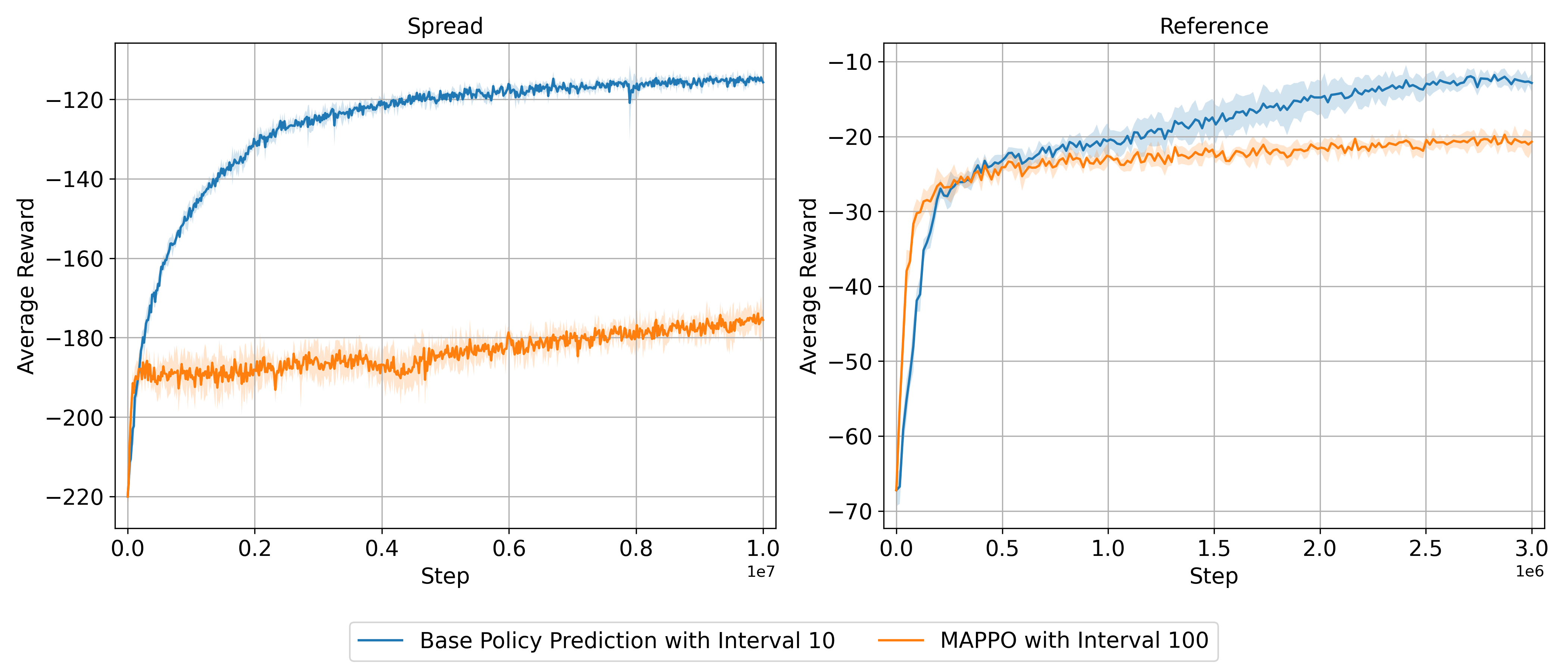}
            \label{fig:sub4}
        \end{subfigure}
        
        \label{fig:group1}
    \end{subfigure}
    

    
    \caption{The Left two figures show the comparisons of base policy prediction under different communication intervals. The right two figures show the comparisons between MAPPO and base policy prediction.}
    \label{fig:convergencepotentialmappo}
\end{figure}
\vspace{-1em}
\begin{figure}[H]
\setlength{\abovecaptionskip}{5pt}
    \centering

    \begin{subfigure}{\linewidth}
        \centering
        \begin{subfigure}
        {0.8\linewidth}\setlength{\abovecaptionskip}{1pt}
            \includegraphics[width=\linewidth]{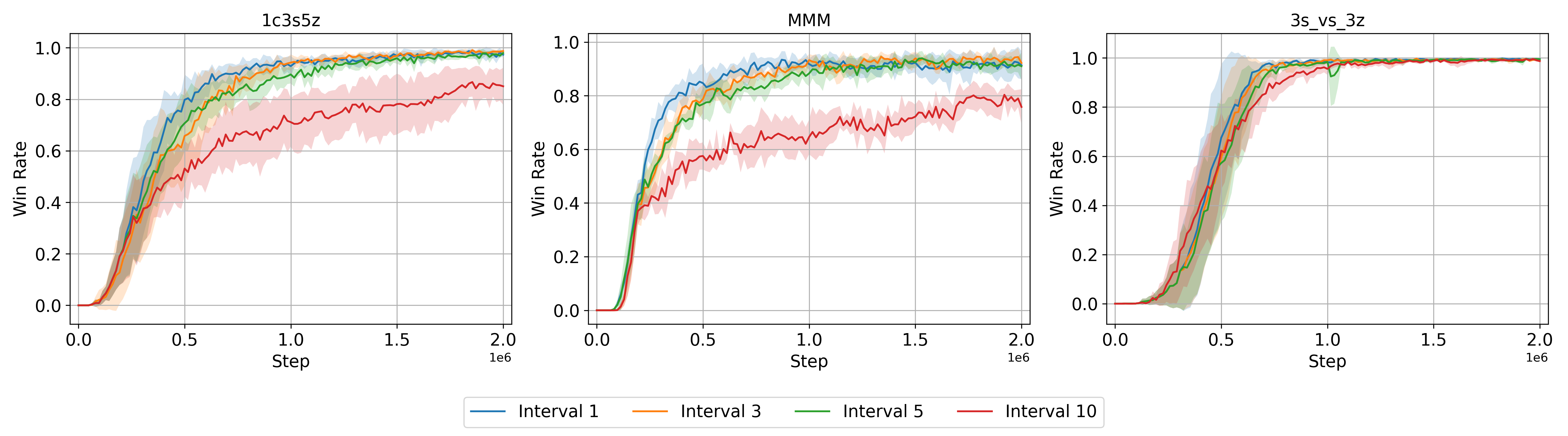}
            \label{fig:sub3_smac}
        \end{subfigure}
        \caption{Convergence results for Base Policy Prediction with different intervals}
        \label{fig:group1_smac}
    \end{subfigure}

    \begin{subfigure}{\linewidth}
    \setlength{\abovecaptionskip}{1pt}
        \centering
        \begin{subfigure}{0.8\linewidth}
            \includegraphics[width=\linewidth]{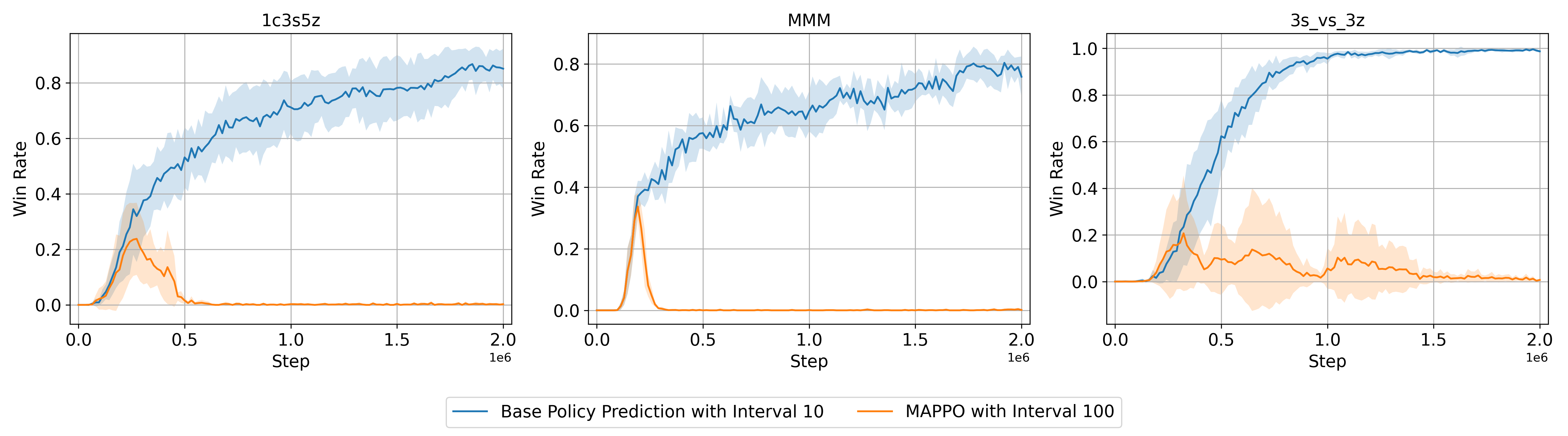}
            \label{fig:sub6_smac}
        \end{subfigure}
        \caption{Comparison with Naive MAPPO}
        \label{fig:group_smac2}
    \end{subfigure}
    
    \caption{Convergence results for SMAC environment}
    \label{fig:convergence}
\end{figure}



\section{Conclusion}
In this work, we study multi-agent RL under communication constraints, with the goal of reducing communication rounds during learning. We introduce a base policy prediction technique that pre-computes a series of base policies for future importance sampling. Theoretically, we show that NPG with importance sampling and base policy prediction improves prior results in both communication efficiency and sample complexity. We further extend our potential game algorithm to general MCGs and design an algorithm that returns a product policy approximating agent-wise local optima. Empirically, our method enables effective learning while substantially reducing communication rounds compared to baseline approaches.

\section*{Ethics}
This paper studies multi-agent reinforcement learning with communication constraints. We do not identify any ethical concerns associated with this research.
\section*{Reproducibility Statement}
The supplementary material contains the codebase for this paper.

\bibliographystyle{apalike}
\bibliography{main}

\newpage
\appendix
\begin{center}
    {\LARGE \bfseries Appendix}
\end{center}
\section{Analysis of PG Algorithms}
\subsection{Proof of Theorem~\ref{thm:negap}}
We first re-study the natural policy gradient descent.
Suppose at round $t$, we can get the reward function $r$ and the policy $\pi^t,$ we can define the marginalized loss function as 
$$\ell_i(a, \pi^t) = \EE_{\bm{a}_{-i}\sim \pi_{-i}^{t}}[r_i(\bm{a})]\ge 0.$$
 Then, the natural policy gradient algorithm has the following update:
$$\pi^{t+1}_{i} \propto  \pi^{t}_{i}\cdot \exp(\eta\ell_i(a, \pi^t)).$$

Now we analyze the interval between communication rounds.
If the policy at the last communication round is $\pi^{\mathrm{base}}, $ without loss of generality, we can assume that the last communication round is round $0$ such that $\pi_0 = \pi^{\mathrm{base}}$.  Then, performing natural policy gradient update for $t$ rounds yields \begin{align}\pi_{i}^{t+1}(a) \propto \pi^{\mathrm{base}}_{i}(a) \cdot \exp(\eta \sum_{j=0}^{t}\ell_i(a,\pi^{j})).\label{eq:npg update}\end{align} We now use importance sampling to get the approximation of $\pi^t(a). $

We denote the approximated one of the $\pi^t$ as $\hat{\pi}^t$. Now we analyze the interval between communication rounds. We suppose that we can share the policy and get the policy $\pi^0$, we can denote $\hat{\pi}^0 = \pi^0$. We first estimate the marginalized loss function as $$\hat{\ell}_i(a_i,\hat{\pi}^0)=\frac{1}{|\cD_{\cA_i,0}^{(i)}|}\sum_{r_i(\bm{a}) \in \cD_{\cA_i,0}^{(i)}}r_i(\bm{a}) \approx \EE_{\bm{a}\sim \cA_i \times \hat{\pi}^0_{-i}}[r_i(\bm{a})]$$ for each action $a_i \in \cA_i$ using samples from $\hat{\pi}^0.$ 
Then, to estimate the update Eq.\ref{eq:npg update}, we calculate the following term:

\begin{align}\tilde{\pi}^{t+1}_{i}(a) \propto  \hat{\pi}^{\mathrm{base}}_{i}(a)\cdot \exp \left(\eta \sum_{j=0}^{t}\hat{\ell}_i(a,\hat{\pi}^0)\right) = \hat{\pi}^{\mathrm{base}}_{i}(a)\cdot \exp(\eta (t+1)\hat{\ell}_i(a,\hat{\pi}^0)),\label{eq:tildepi}\end{align}
and the dataset $\cD_t =\bigcup_{i=1}^n \bigcup_{a_i \in \cA_i}\cD_{\cA_i,t}^{(i)}$ is calculated by executing the action following the distribution $a_i \times \tilde{\pi}_{-i}^t(\bm{a}_{-i})$ for each agent $i$ and $a_i \in \cA_i$.


Then, at the time $t$, suppose we have derived $\hat{\pi}^1, \cdots, \hat{\pi}^{t}$ and $\{\hat{\ell}(a,\hat{\pi}^j)\}_{j \le t}$ at this interval without communication round, we let $$\hat{\pi}_{i}^{t+1}(a) \propto \pi^{\mathrm{base}}_{i}(a)\cdot \exp(\eta \sum_{j=0}^{t}\hat{\ell}_i(a,\hat{\pi}^j)),$$
and then  $\ell_i(a,\hat{\pi}_h^t)$ can be estimated using importance sampling and shared data. To be more specific, we use
\begin{align*}\hat{\ell}_i(a_i,\pi^t) &= \EE_{\bm{a}_{-i}\sim \hat{\pi}_{-i}^t}[r_i(a_i,\bm{a}_{-i})] \\&= \EE_{\bm{a}_{-i}\sim \tilde{\pi}_{-i}^t}\left[\frac{\hat{\pi}_{-i}^t(\bm{a}_{-i})}{\overline{\pi}_{-i}^t(\bm{a}_{-i})}r_i(a_i,\bm{a}_{-i})\right] \\&= \EE_{\bm{a}_{-i}\sim \cD^t}\left[\frac{\hat{\pi}_{-i}^t(\bm{a}_{-i})}{\overline{\pi}_{-i}^t(\bm{a}_{-i})}\hat{r}_i(a_i,\bm{a}_{-i})\right].\end{align*}
as the estimated loss function. Now we analyze the communication rounds and the iteration complexity. 
First, we want to show that for any $t \ge 0,$ we have
\begin{align}|\hat{\ell}_i(a,\hat{\pi}^t) - \ell_i(a,\hat{\pi}^t)| \le \varepsilon.\label{estimate ell}\end{align}
For $t = 0$, which implies that the interval ends at time $t$,  $\cD_{\cA_i, 0}^{(i)}$ is collected from $\cA_i \times \hat{\pi}^0$ and $\hat{r}_i$ is a unbiased and bounded estimate of $r_i$. 
Hence, we have 
\begin{align*}
    |\hat{\ell}_i(a,\hat{\pi}^0) - \ell_i(a,\hat{\pi}^0)| &= \left|\frac{1}{|\cD_{\cA_i,0}^{(i)}|}\sum_{\hat{r}_i(\bm{a}) \in \cD_{\cA_i,0}^{(i)}}\hat{r}_i(\bm{a}) -\EE_{\bm{a}\sim \cA_i \times \hat{\pi}^0_{-i}}[r_i(\bm{a})]\right|\le \frac{R_{\max}}{\sqrt{2N}}\sqrt{\log(2/\delta)}\le \varepsilon.
\end{align*}
The final inequality follows from Hoeffding’s concentration inequality and the fact that $N \ge \frac{R_{\max}^2\log(2/\delta)}{\varepsilon^2}$. 

For each $t \ge 1$, to apply Hoeffding’s inequality, since the data point is bounded as $R_{\max}$, we need to verify that the reweighted estimator remains bounded as a constant. Hence, we need to check the likelihood ratio between the base policy and the current policy is bounded by a constant.
Now we analyze the likelihood ratio of the importance sampling, i.e., $$\frac{\hat{\pi}_{-i}^t(\bm{a}_{-i})}{\overline{\pi}_{-i}^t(\bm{a}_{-i})}=\frac{\hat{\pi}_{-i}^t(\bm{a}_{-i})}{\tilde{\pi}_{-i}^t(\bm{a}_{-i})}\cdot \frac{\tilde{\pi}_{-i}^t(\bm{a}_{-i})}{\overline{\pi}_{-i}^t(\bm{a}_{-i})}=\prod_{i' \neq i}\frac{\hat{\pi}_{i'}^t(a_{i'})}{\tilde{\pi}_{i'}^t(a_{i'})}\cdot \frac{\tilde{\pi}_{i'}^t(a_{i'})}{\overline{\pi}_{i'}^t(a_{i'})}.$$
Hence, we focus on controlloing the ratio $\frac{\hat{\pi}_i^t(a_i)}{\overline{\pi}_i^t(a_i)}$ for each $i \in [n].$
Suppose Eq~\ref{estimate ell} holds for any $t'\le t-1$,  then for the time $t$, by the Eq.\ref{eq:hatpi} and Eq.\ref{eq:tildepi}, for each  $i \in [n]$, we have \begin{align}&\frac{\hat{\pi}_{i}^t(a_{i})}{\tilde{\pi}_{i}^t(a_{i})}\nonumber\\&=\exp\left(\eta \sum_{j=0}^{t-1}\hat{\ell}_i(a_i,\hat{\pi}^j) - \eta t\hat{\ell}_i(a_i,\hat{\pi}^0)\right)\cdot \frac{\sum_{a \in \cA_i}\pi^{\mathrm{base}}(a)\cdot \exp(\eta \sum_{j=0}^{t-1}\hat{\ell}_i(a_i,\hat{\pi}^0))}{\sum_{a \in \cA_i}\pi^{\mathrm{base}}(a)\cdot \exp(\eta \sum_{j=0}^{t-1}\hat{\ell}_i(a_i,\hat{\pi}^j))}\nonumber\\&\le \exp\left(\max_{a_i \in \cA_i}\left|\eta \sum_{j=0}^{t-1}\hat{\ell}_i(a_i,\hat{\pi}^j) -  t\hat{\ell}_i(a_i,\hat{\pi}^0)\right|\right) \cdot \frac{\sum_{a \in \cA_i}\pi^{\mathrm{base}}(a)\cdot \exp( \sum_{j=0}^{t-1}\hat{\ell}_i(a_i,\hat{\pi}^0))}{\sum_{a \in \cA_i}\pi^{\mathrm{base}}(a)\cdot \exp(\eta \sum_{j=0}^{t-1}\hat{\ell}_i(a_i,\hat{\pi}^j))}\nonumber\\&\le \exp\left(2\max_{a_i \in \cA_i}\left| \sum_{j=0}^{t-1}\hat{\ell}_i(a_i,\hat{\pi}^j) -  t\hat{\ell}_i(a_i,\hat{\pi}^0)\right|\right).\label{eq:ratio bound}
\end{align}
The first inequality uses the fact that $\eta = 1/8n \le 1$. The last inequality uses the fact that \begin{align*}&\sum_{a \in \cA_i}\pi^{\text{base}}(a)\exp(\sum_{j=0}^{t-1}\hat{\ell}_i(a_i, \hat{\pi}^0)) \\&\le \max_{a_i \in \cA_i}\exp\left(\left| \sum_{j=0}^{t-1}\hat{\ell}_i(a_i,\hat{\pi}^j) -  t\hat{\ell}_i(a_i,\hat{\pi}^0)\right|\right)\cdot \sum_{a \in \cA_i}\pi^{\text{base}}(a) \exp( \sum_{j=0}^{t-1}\hat{\ell}_i(a_i, \hat{\pi}^j))\end{align*} since each $\pi^{\text{base}}(a) \ge 0.$

Now we consider the term $\frac{\tilde{\pi}_i^t(a_i)}{\overline{\pi}_i^t(a_i)}$.
By definition, we will have 
$$\frac{\tilde{\pi}_{i}^t(a_i)}{\overline{\pi}_{i}^t(a_i)}\le \frac{1}{1-\varepsilon}.$$
This inequality is because  $\tilde{\pi}^t_{i}(a_i) > \overline{\pi}^t_{i}(a_i)$ implies that $a_i \notin \cA_i^t.$ Hence, $\frac{\tilde{\pi}_i^t(a_i)}{\overline{\pi}^t_i(a_i)}  = 1-\frac{\varepsilon|\cA_i^t|}{|\cA_i|} \ge 1-\varepsilon$.


Hence, if the error $\varepsilon  \le 1/n$, the variance is bounded by \begin{align}\frac{\hat{\pi}_{-i}^t(\bm{a}_{-i})}{\overline{\pi}_{-i}^t(\bm{a}_{-i})}&\le \frac{1}{(1-\varepsilon)^n}\prod_{k\neq i} \max_{a_i \in \cA_i}\exp( \sum_{j=0}^{t-1}\hat{\ell}_i(a_i,\hat{\pi}^j) - t\hat{\ell}_i(a_i,\hat{\pi}^0)) \nonumber\\&\qquad\le \frac{1}{(1-\frac{1}{n})^n }\cdot \exp\left((n-1)\cdot \max_{i}\max_{a_i \in \cA_i}\left( \sum_{j=0}^{t-1}\hat{\ell}_i(a_i,\hat{\pi}^j) -  t\hat{\ell}_i(a_i,\hat{\pi}^0))\right)\right)\nonumber\\&\qquad\le 2e\cdot \exp\left((n-1)\cdot \max_{i}\max_{a_i \in \cA_i}\left( \sum_{j=0}^{t-1}\hat{\ell}_i(a_i,\hat{\pi}^j) -  t\hat{\ell}_i(a_i,\hat{\pi}^0))\right)\right).\label{ineq: bound of variance}\end{align}
We now turn to controlling the variance bound. The key idea is that whenever the upper bound in Eq.~\ref{ineq: bound of variance} exceeds a prescribed threshold, or when the number of rounds since the last communication becomes too large, all agents simultaneously initiate a new communication round. To be more specific, our rule for starting a communication round is that there exists some $i$ and some actions $a_i \in \cA_i$, such that \begin{align} \underbrace{\left|\hat{\ell}_i(a_i, \hat{\pi}^{t-1}) - \hat{\ell}_i(a_i,\hat{\pi}^0) \right|\ge \frac{16}{(n-1)t \phi_{\max}\ln (1/n\varepsilon)}}_{\text{(A)}}, \ \   \textrm{or}   \ \underbrace{t\ge \frac{1}{n^{1/4}\varepsilon^{1/4}\phi_{\max}\ln(1/n\varepsilon)}}_{\text{(B)}}.\label{appendix:condition}\end{align} Intuitively, the first condition ensures that the variance of the importance sampling estimator remains uniformly bounded, thereby guaranteeing the reliability of subsequent updates. The second condition prevents long time without communication, which could otherwise lead to out-dated policy for importance sampling and divergence across agents. 

Since the requirement (A) and (B) are not satisfied during the interval, we know that  for all $j \in [t-1]$, $i \in [n]$ and $a_i \in \cA_i$, the changing condition does not hold, it implies that the variance is bounded by a constant, i.e., 
\begin{align*}
     \frac{\hat{\pi}_{-i}^t(\bm{a}_{-i})}{\overline{\pi}_{-i}(\bm{a}_{-i})}   &\le 2e\cdot \exp\left((n-1)\cdot \sum_{j=0}^{t-1}\frac{16}{(n-1)(j+1) \phi_{\max}\ln(1/n\varepsilon)}\right)\\&\le 2e\cdot \exp\left(\frac{16(1+\ln t)}{\phi_{\max}\ln(1/n\varepsilon)}\right)\\&\le 2\exp (17).
\end{align*}
The second inequality uses the fact that $\sum_{j=1}^t \frac{1}{t} \le 1+\ln(t)$ and the upper bound of the $t$ in (B).

 
 Hence, by getting $N \ge \widetilde{\cO}\left(\frac{M^2\log(1/\delta)}{\epsilon^2}\right)$ samples for each agent $i$ and each action $a_i \in \cA_i$, we can get 
\begin{align}|\hat{\ell}_i(a_i,\hat{\pi}^t) - \ell_i(a_i,\hat{\pi}^t)| \le \varepsilon/2, \forall i \in [n], a_i \in \cA_i\label{eq: ell estimate error}\end{align}
for any timestep $t$.
Now we analyze when the interval ends, which implies that either (A) or (B) will hold. We assume that at time step $t$, the update rule is satisfied and interval ends.
\paragraph{Situation 1: (A) holds}
If  at time step $t$, the update rule $(\text{A})$ is satisfied,  then there exists one $i$ and $a_i \in \cA_i$ such that  
\begin{align*}
    \frac{16}{(n-1)t \phi_{\max}\ln (1/\sqrt{n\varepsilon})} &\le \left|\hat\ell_i(a_i,\hat{\pi}^{t-1}) - \hat\ell_i(a_i, \hat{\pi}^0)\right| \le \varepsilon + \left|\ell_i(a_i,\hat{\pi}^{t-1})-\ell_i(a_i,\hat{\pi}^0)\right|.
    \end{align*}
Now denote $\pi_{-i}^{a:b} = \prod_{k\neq i, a\le k\le b}\pi_k$. 
Then, for any $\pi, \pi'$, we have 
\begin{align*}
    \ell_i (a_i, \pi) - \ell_i(a_i, \pi') &= \EE_{\bm{a}\sim a_i \times \pi_{-i}}[r_i(\bm{a})] - \EE_{\bm{a}\sim a_i \times \pi'_{-i}}[r_i(\bm{a})] \\
    &=\sum_{k\neq i}\EE_{\bm{a}\sim a_i \times \pi_{-i}^{ 1:k-1}\times (\pi_{-i}')^{k:n}}[r
    _i(\bm{a})] - \EE_{\bm{a} \sim a_i \times \pi_{-i}^{ 1:k-1}\times (\pi_{-i}')^{k:n-1}}[r_i(\bm{a})]\\
    &=\sum_{k\neq i} \phi(a_i \times \pi_{-i}^{ 1:k-1}\times (\pi_{-i}')^{k:n})- \phi(a_i \times \pi_{-i}^{ 1:k}\times (\pi_{-i}')^{k+1:n-1})\\
    &=\phi(a_i \times \pi_{-i}) - \phi(a_i \times \pi'_{-i}),
\end{align*}
we can get
    \begin{align*}\varepsilon +   \left|\ell_i(a_i,\hat{\pi}^{t-1})-\ell_i(a_i,\hat{\pi}^0)\right|&= \varepsilon +  \left|\phi(a_i \times \hat{\pi}_{-i}^{t-1}) -  \phi(a_i\times \hat{\pi}_{-i}^0)\right|,
\end{align*} 
which implies that 
\begin{align}
    \left|\phi(a_i \times \hat{\pi}_{-i}^{t-1}) -  \phi(a_i\times \hat{\pi}_{-i}^0)\right|\ge \frac{16}{`(n-1)t \phi_{\max}\ln(1/n\varepsilon)} - \varepsilon \ge \frac{16}{\sqrt{2}(n-1)t \phi_{\max}\ln(1/n\varepsilon)},\label{eq:1}
\end{align}
The last inequality is because $t\le \frac{1}{n^{1/4}\varepsilon^{1/4}\phi_{\max}\ln(1/n\varepsilon)}$. Then it implies that 
\begin{align*}
    \varepsilon \le \frac{1}{nt^4\phi_{\max}^4\ln(1/n\varepsilon)^4} \le \frac{16}{2\sqrt{2}(n-1)t \phi_{\max}\ln(1/n\varepsilon)}.
\end{align*}

Now we denote $\pi^t_i (a_i)\propto \hat\pi_i^{t-1} (a_i)\cdot \exp(\eta \ell_i(a_i,\hat
\pi^{t-1}))$ is the true policy with the gradient update, since $\hat\pi^t_i (a_i) \propto \hat\pi^{t-1}_i(a_i)\cdot \exp (\eta \hat \ell_i(a_i, \hat{\pi}^{t-1} ))$, by the estimate error bound Eq~\ref{eq: ell estimate error} we can use the following lemma to bound the TV distance between two policies.
\begin{lemma}\label{lemma:TVdis}
    Suppose $\pi_1 \propto \pi \cdot \exp(\eta f_1(a)), $ $\pi_2 \propto \pi \cdot \exp(\eta f_2(a))$ with $|f_1(a) - f_2(a)| \le  \varepsilon/2$ for any action $a.$ Then, the TV distance is bounded by 
    \begin{align*}
        TV(\pi_1, \pi_2) \le \eta \varepsilon.
    \end{align*}
\end{lemma}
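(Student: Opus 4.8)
The plan is to reduce the total variation distance to a statement about the pointwise likelihood ratio $r(a) = \pi_1(a)/\pi_2(a)$, which is easy to control because both distributions are multiplicative tilts of the same base policy $\pi$. Writing $\pi_1(a) = \pi(a)e^{\eta f_1(a)}/Z_1$ and $\pi_2(a) = \pi(a)e^{\eta f_2(a)}/Z_2$ with normalizers $Z_1,Z_2$, and setting $g(a) = f_1(a) - f_2(a)$ (so $|g(a)| \le \varepsilon/2$), we have $r(a) = (Z_2/Z_1)\,e^{\eta g(a)}$. Since $\pi_1(a) - \pi_2(a) = \pi_2(a)(r(a)-1)$, the TV distance becomes $TV(\pi_1,\pi_2) = \tfrac12\sum_a|\pi_1(a)-\pi_2(a)| = \tfrac12\,\EE_{a\sim\pi_2}\!\left[|r(a)-1|\right]$, so it suffices to show that $r(a)$ is uniformly close to $1$.

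First I would bound the ratio of normalizers. Writing $Z_1 = \sum_a \pi(a)e^{\eta f_2(a)}e^{\eta g(a)}$ and using $e^{-\eta\varepsilon/2}\le e^{\eta g(a)}\le e^{\eta\varepsilon/2}$ gives $e^{-\eta\varepsilon/2}\le Z_1/Z_2\le e^{\eta\varepsilon/2}$, hence $Z_2/Z_1\in[e^{-\eta\varepsilon/2},e^{\eta\varepsilon/2}]$. Combining this with the same bound on $e^{\eta g(a)}$ yields the pointwise two-sided estimate $e^{-\eta\varepsilon}\le r(a)\le e^{\eta\varepsilon}$ for every action $a$. Since $e^{\eta\varepsilon}-1\ge 1-e^{-\eta\varepsilon}$, this gives $|r(a)-1|\le e^{\eta\varepsilon}-1$, and therefore $TV(\pi_1,\pi_2)\le \tfrac12\left(e^{\eta\varepsilon}-1\right)$.

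The last step, and the only point that requires care, is converting $\tfrac12(e^{\eta\varepsilon}-1)$ into the claimed linear bound $\eta\varepsilon$. This follows from the elementary inequality $e^x-1\le 2x$, valid for $x$ in the relevant range (e.g. $x\le \ln 2$), applied with $x=\eta\varepsilon$; in our setting $\eta=\tfrac{1}{2nM}$ and $\varepsilon\le 1$, so $\eta\varepsilon$ is small and the inequality holds comfortably. I expect this small-parameter bookkeeping to be the main (if minor) obstacle, since a naive pointwise comparison only produces $e^{\eta\varepsilon}-1$, which is $\ge \eta\varepsilon$ and hence points the wrong way without the regime restriction. As an alternative that avoids the range restriction entirely, one can interpolate via the exponential family $\pi_s\propto\pi\exp(\eta((1-s)f_2+sf_1))$, compute $\tfrac{d}{ds}\pi_s(a)=\eta\,\pi_s(a)\bigl(g(a)-\EE_{\pi_s}[g]\bigr)$, and bound $\tfrac12\sum_a\bigl|\tfrac{d}{ds}\pi_s(a)\bigr|=\tfrac{\eta}{2}\,\EE_{\pi_s}\bigl|g-\EE_{\pi_s}g\bigr|\le \tfrac{\eta}{2}\cdot\tfrac{\varepsilon}{2}$, using that the mean absolute deviation is at most half the range of $g$; integrating over $s\in[0,1]$ gives $TV(\pi_1,\pi_2)\le\eta\varepsilon/4\le\eta\varepsilon$ directly.
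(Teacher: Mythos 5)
Your proposal is correct, and it is worth separating its two halves. Your first route is essentially the paper's argument: both proofs control the pointwise likelihood ratio by sandwiching the tilt $e^{\eta g(a)}$ and the normalizer ratio between $e^{-\eta\varepsilon/2}$ and $e^{\eta\varepsilon/2}$. The difference is in how the exponential bound is converted to a linear one. The paper bounds the positive part $\pi_1(a)-\pi_2(a)\le \pi_1(a)\bigl(1-e^{-\eta\varepsilon}\bigr)$ and sums, so the final step is the unconditional inequality $1-e^{-x}\le x$; you instead bound $|r(a)-1|\le e^{\eta\varepsilon}-1$, which sits on the wrong side of linearity and forces you to invoke $e^{x}-1\le 2x$, valid only for $x\lesssim 1.26$. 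You correctly flag this, and in the paper's application $\eta=\tfrac{1}{2nM}$ and $\varepsilon\le 1/4$ give $\eta\varepsilon\le 1/16$, so nothing breaks — but the one-sided version the paper uses proves the lemma as stated with no restriction on $\eta\varepsilon$, and you could recover it within your own framework by writing $TV=\sum_a\pi_2(a)\bigl(1-r(a)\bigr)^+\le 1-e^{-\eta\varepsilon}$. Your second route, interpolating along $\pi_s\propto\pi\exp(\eta((1-s)f_2+sf_1))$ and integrating $\tfrac12\sum_a\bigl|\tfrac{d}{ds}\pi_s(a)\bigr|=\tfrac{\eta}{2}\,\EE_{\pi_s}\bigl|g-\EE_{\pi_s}g\bigr|\le\tfrac{\eta\varepsilon}{4}$, is genuinely different from the paper: it is unconditional, yields the sharper constant $\eta\varepsilon/4$, and even the crude deviation bound $\EE_{\pi_s}|g-\EE_{\pi_s}g|\le\varepsilon$ would still give $\eta\varepsilon/2$. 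Either of your arguments suffices; the interpolation one is the cleaner and more general of the two.
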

\begin{proof}
    \begin{align*}
        TV(\pi_1, \pi_2)  &= \sum_{a \in \cA} \left(\frac{\pi(a)\cdot \exp(\eta f_1(a))}{\sum_{a' \in \cA}\pi(a')\cdot \exp(\eta f_1(a'))}-\frac{\pi(a)\cdot \exp(\eta f_2(a))}{\sum_{a' \in \cA}\pi(a')\cdot \exp(\eta f_2(a'))}\right)\\
        & \le \sum_{a \in \cA}\frac{\pi(a)\cdot \exp(\eta f_1(a))}{\sum_{a' \in \cA}\pi(a')\cdot \exp(\eta f_1(a'))}\cdot \left(1-\frac{\exp(-\eta \varepsilon/2)}{\exp(\eta \varepsilon/2
        )}\right)\\
        &\le 1-e^{-\eta \varepsilon} \le \eta \varepsilon.
    \end{align*}
\end{proof}
Hence, by the subadditivity of the TV distance, we have $$TV(\hat{\pi}^t, \pi^t) \le\sum_{i=1}^n TV(\hat{\pi}^t_i, \pi_i^t) \le  \eta n\varepsilon \le  \varepsilon/2\phi_{\max}.$$ 
The last inequality uses the fact that 
$\eta = \frac{1}{2nM} \le \frac{1}{2n\phi_{\max}}$. (Recall that $M = \max\{\phi_{\max}, R_{\max}\}.$
Then we can derive
\begin{align}
    \Big|\phi(a_i\times \hat{\pi}_{-i}^{t-1} ) - \phi(a_i\times \hat{\pi}_{-i}^0)\Big| &\le \sum_{j=0}^{t-2}\Big|\phi(a_i\times \hat{\pi}_{-i}^{j+1}) - \phi(a_i \times \hat
    \pi_{-i}^j)\Big|\nonumber\\
    & \le t\cdot (\varepsilon/2\phi_{\max})\cdot \phi_{\max}+  \sum_{j=0}^{t-2}  \Big|\phi(a_i\times \pi_{-i}^{j+1}) - \phi(a_i \times \hat
    \pi_{-i}^j)\Big|\nonumber\\
    &\le  t\varepsilon/2+  \sum_{j=0}^{t-2} \phi_{\max}\cdot TV(a_i \times 
    \pi_{-i}^{j+1}, a_i \times \hat
    \pi_{-i}^j)\nonumber\\
    &\le t\varepsilon/2+ \sum_{j=0}^{t-2}\frac{\phi_{\max}}{\sqrt{2}}\cdot \sqrt{\text{KL}(a_i \times 
    \pi_{-i}^{j+1}\| a_i \times \hat
    \pi_{-i}^j)}.\label{eq:2}
\end{align}
The last inequality uses the Pinsker's inequality that $TV(P,Q)^2 \le \frac{1}{2}\text{KL}(P\|Q).$
Now we know that 
$$\text{KL}(a_i \times 
    \pi_{-i}^{j+1}\| a_i \times \hat
    \pi_{-i}^j) = \sum_{k\neq i}\text{KL}(\pi_{k}^{j+1}\| \hat{\pi}_k^j) \le \text{KL}(\pi^{j+1}\|\hat\pi^j).$$
Hence, by Eq~\ref{eq:1} and Eq~\ref{eq:2}, we have 
\begin{align}
    \frac{16}{(n-1)t \phi_{\max}^2\ln(1/n\varepsilon)} \le t\varepsilon/2+ \sum_{j=0}^{t-2}\sqrt{\text{KL}(\pi^{j+1}\|\hat\pi^j)},\label{eq:3}
\end{align}
which implies that 
\begin{align}
    \sum_{j=0}^{t-2}\sqrt{\text{KL}(\pi^{j+1}\|\hat\pi^j)} \ge \frac{16}{\sqrt{2}(n-1)t \phi_{\max}^2\ln(1/n\varepsilon)} .
\end{align}
for $ \varepsilon \le \frac{1}{n^2t^4\phi_{\max}^4 \ln^4(1/n\varepsilon)}$.
Now we consider the KL divergence term.
We first introduce the potential difference lemma \citep{sun2023provably}, which can help us to link the KL divergence with the potential gap. 
\begin{lemma}\label{lemma:potential}[\citep{sun2023provably}]
    For $\pi^{j+1}_i\propto \hat{\pi}_i^{j}(a)\cdot \exp(\eta \ell_i(a,\hat{\pi}^j))$ with $\eta = \frac{1}{2nM}$, we have 
    \begin{align*}
        \phi(\pi^{j+1}) - \phi(\hat\pi^j) &\ge (1-R_{\max}\sqrt{n}\eta)\sum_{i=1}^n \langle \pi^{j+1}_i - \hat\pi^j_i, \ell(\cdot, \hat{\pi}^j)\rangle \\& =\frac{1-R_{\max}\sqrt{n}\eta}{\eta}\cdot  \text{KL}(\pi^{j+1}_i\| \hat
        \pi_i^j)\\
        & \ge n\phi_{\max}\text{KL}(\pi^{j+1}_i\| \hat
        \pi_i^j)\ge 0.
    \end{align*}
\end{lemma}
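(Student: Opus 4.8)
The plan is to prove the chain in two moves: first turn the natural-policy-gradient (NPG) inner products into Kullback--Leibler terms, and then lower-bound the potential increase by these inner products up to a multiplicative factor, using only the potential-game identity \eqref{eq:potential} together with boundedness of the rewards.

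First I would establish the NPG identity underlying the middle (in)equality. Writing the update in log-space, $\eta\,\ell_i(a,\hat{\pi}^j)=\log\frac{\pi_i^{j+1}(a)}{\hat{\pi}_i^j(a)}+\log Z_i$ for a normalizing constant $Z_i$, and pairing against $\pi_i^{j+1}-\hat{\pi}_i^j$ makes the $\log Z_i$ term vanish since both are probability vectors. This yields $\eta\,\langle \pi_i^{j+1}-\hat{\pi}_i^j,\ \ell_i(\cdot,\hat{\pi}^j)\rangle=\mathrm{KL}(\pi_i^{j+1}\|\hat{\pi}_i^j)+\mathrm{KL}(\hat{\pi}_i^j\|\pi_i^{j+1})\ge \mathrm{KL}(\pi_i^{j+1}\|\hat{\pi}_i^j)\ge 0$, so each inner product is $\tfrac{1}{\eta}$ times a symmetrized divergence; summing over $i$ recovers the middle line (the stated ``$=$'' should be read as ``$\ge$'', keeping only the forward $\mathrm{KL}$).

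Next I would lower-bound $\phi(\pi^{j+1})-\phi(\hat{\pi}^j)$ by the first-order term. Introduce the hybrids $\sigma^{(k)}$ in which agents $1,\dots,k$ already play $\pi^{j+1}$ and the rest play $\hat{\pi}^j$, and telescope $\phi(\pi^{j+1})-\phi(\hat{\pi}^j)=\sum_k[\phi(\sigma^{(k)})-\phi(\sigma^{(k-1)})]$. Each increment changes only agent $k$, so \eqref{eq:potential} rewrites it exactly as $\langle \pi_k^{j+1}-\hat{\pi}_k^j,\ \ell_k(\cdot,\sigma^{(k-1)})\rangle$. I would then replace the shifted marginal loss $\ell_k(\cdot,\sigma^{(k-1)})$ by $\ell_k(\cdot,\hat{\pi}^j)$; since the two differ only through the already-updated agents $m<k$, boundedness of $r_k$ by $R_{\max}$ gives $\|\ell_k(\cdot,\sigma^{(k-1)})-\ell_k(\cdot,\hat{\pi}^j)\|_\infty\le R_{\max}\sum_{m<k}\mathrm{TV}(\pi_m^{j+1},\hat{\pi}_m^j)$. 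Aggregating these mismatches with Pinsker's inequality and the Step-one identity (which bounds $\|\pi_i^{j+1}-\hat{\pi}_i^j\|_1^2$ by $O(\eta)$ times the inner product) folds the error into a multiplicative factor, leaving $\phi(\pi^{j+1})-\phi(\hat{\pi}^j)\ge(1-R_{\max}\sqrt{n}\,\eta)\sum_i\langle \pi_i^{j+1}-\hat{\pi}_i^j,\ell_i(\cdot,\hat{\pi}^j)\rangle$.

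Finally I would combine the two estimates and substitute $\eta=\frac{1}{2nM}$ with $M\ge\max\{\phi_{\max},R_{\max}\}$: then $\frac{1-R_{\max}\sqrt{n}\,\eta}{\eta}=2nM-R_{\max}\sqrt{n}\ge 2nM-Mn=nM\ge n\phi_{\max}$, which closes the chain down to $n\phi_{\max}\sum_i\mathrm{KL}(\pi_i^{j+1}\|\hat{\pi}_i^j)\ge 0$. I expect the main obstacle to be the error-aggregation in the previous paragraph: a naive Cauchy--Schwarz bound on the cross-agent coupling produces a factor $(1-O(R_{\max}n\,\eta))$ rather than the stated $(1-R_{\max}\sqrt{n}\,\eta)$, so recovering the sharp $\sqrt{n}$ dependence requires handling the coupling in the $\ell_2$ geometry of the product simplex more carefully. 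Reassuringly, the final conclusion is insensitive to this gap: with $\eta=\frac{1}{2nM}$ even the cruder factor gives $\frac{1-O(R_{\max}n\eta)}{\eta}=2nM-R_{\max}n\ge nM\ge n\phi_{\max}$, so the usable bound $\phi(\pi^{j+1})-\phi(\hat{\pi}^j)\ge n\phi_{\max}\sum_i\mathrm{KL}(\pi_i^{j+1}\|\hat{\pi}_i^j)$ survives regardless.
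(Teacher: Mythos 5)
Your proposal is correct in substance, and it is more self-contained than the paper's own proof: the paper simply defers the first inequality to Lemma~3.1 of the cited prior work and then notes that the middle line is the definition of KL and the last line follows from $1-R_{\max}\sqrt{n}\eta\ge 1/2$ and $\eta\le\frac{1}{2n\phi_{\max}}$, whereas you actually reconstruct the argument. Your two building blocks are exactly the right ones: the log-space pairing $\eta\langle\pi_i^{j+1}-\hat{\pi}_i^j,\ell_i(\cdot,\hat{\pi}^j)\rangle=\mathrm{KL}(\pi_i^{j+1}\|\hat{\pi}_i^j)+\mathrm{KL}(\hat{\pi}_i^j\|\pi_i^{j+1})$ (and you are right that the displayed ``$=$'' in the lemma should be read as ``$\ge$'' once only the forward KL is retained, and that a $\sum_i$ is missing on the right-hand side), and the hybrid-policy telescoping of $\phi$ via the potential identity in \eqref{eq:potential}, with the cross-agent coupling absorbed through Pinsker plus the first identity. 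The one place you do not fully recover the paper's statement is the constant: your Cauchy--Schwarz aggregation yields $\bigl(1-O(R_{\max}n\eta)\bigr)$ rather than $\bigl(1-R_{\max}\sqrt{n}\eta\bigr)$, and obtaining the $\sqrt{n}$ would require the sharper $\ell_2$-geometry argument from the cited lemma. You correctly observe, however, that with $\eta=\frac{1}{2nM}$ both prefactors exceed a universal constant times $1$, so the usable conclusion $\phi(\pi^{j+1})-\phi(\hat{\pi}^j)\ge n\phi_{\max}\sum_i\mathrm{KL}(\pi_i^{j+1}\|\hat{\pi}_i^j)\ge 0$ --- which is all the downstream analysis uses --- survives either way. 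So the gap is cosmetic relative to how the lemma is applied, but if you wanted to state the lemma with the literal $(1-R_{\max}\sqrt{n}\eta)$ factor you would still need to import the sharper coupling bound rather than derive it from your sketch.
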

\begin{proof}
   The proof of the first inequality  is exactly the same as  Lemma 3.1 in \citep{sun2019model}. The second line uses the definition of the KL divergence, and the last line uses the fact that $1-R_{\max}\sqrt{n}\eta \ge 1/2$ and $\eta \le \frac{1}{2n\phi_{\max}}$. \qed
\end{proof}
Hence, by Eq.~\ref{estimate ell} and Lemma~\ref{lemma:TVdis}, we know that \begin{align}TV(\hat{\pi}^{j+1}, \pi^{j+1} )\le \sum_{i=1}^n TV(\hat\pi_i^{j+1}, \pi_i^{j+1})
\le \eta n \varepsilon . \label{eq:TV for true and false}\end{align} 
The last inequality uses the fact that $1-\eta \varepsilon \ge  1-\frac{\varepsilon}{8n} \ge  1/2.$ Then, we can get $$\phi(\hat
\pi^{j+1})- \phi(\hat{\pi}^j) \ge \phi(\hat{\pi}^{j+1})-\phi(\pi^{j+1})\ge 0 - \eta\phi_{\max} n\varepsilon \ge -\eta \phi_{\max}n \varepsilon \ge  -\varepsilon/2.$$

Combining Lemma~\ref{lemma:potential} and Eq~\ref{eq:3}, we have 
\begin{align*}
    \frac{16}{\sqrt{2}(n-1)t \phi_{\max}^2\ln(1/n\varepsilon)} &\le  \sum_{j=0}^{t-2}\sqrt{\frac{\phi(\pi^{j+1}) - \phi(\hat\pi^j)}{n\phi_{\max}}} \le \sqrt{2t}\cdot \sqrt{\sum_{j=0}^{t-2}\left(\frac{\phi(\pi^{j+1})-\phi(\hat{\pi}^j)}{n\phi_{\max}}\right)}\\
    &\le \sqrt{2t}\cdot \sqrt{\frac{t\varepsilon}{2n\phi_{\max}} + \sum_{j=0}^{t-2}\frac{\phi(\hat\pi^{j+1})-\phi(\hat{\pi}^j)}{n\phi_{\max}}}\\
    &\le \sqrt{2t}\cdot \sqrt{\frac{t\varepsilon}{2n\phi_{\max}} + \frac{\phi(\hat{\pi}^{t-1})-\phi(\hat{\pi}^0)}{n\phi_{\max}}},
\end{align*}
which implies that 
\begin{align}
    \frac{256}{4(n-1)^2 t^3\phi_{\max}^4\ln(1/n\varepsilon)^2} - \frac{t\varepsilon}{2n\phi_{\max}} \le \frac{\phi(\hat
    \pi^{t-1}) - \phi(\hat{\pi}_0)}{n\phi_{\max}}.\nonumber
\end{align}
By some simple algebra and $R_{\max} \ge 1$, we can get 
\begin{align*}
  \phi(\hat
    \pi^{t-1}) - \phi(\hat{\pi}_0) \ge \frac{256R_{\max}}{4nt^3\phi_{\max}^3 \ln(1/n\varepsilon)^2} - \frac{t\varepsilon}{2}\ge \frac{256}{6nt^3\phi_{\max}^3 \ln(1/n\varepsilon)^2} 
\end{align*}

The last inequality uses the fact that $t \le \frac{1}{n^{1/4}\varepsilon^{1/4}\phi_{\max}\ln(1/n\varepsilon)}$, which implies that \begin{equation}
\frac{t\varepsilon}{2} \le \frac{256}{12nt^3\phi_{\max}^3\ln(1/n\varepsilon)^2}.\label{eq:eps}
\end{equation}

Now we can further derive  \begin{align}
    \phi(\hat{\pi}^t)-\phi(\hat{\pi}^0) &\ge \frac{256}{6n t^3\phi_{\max}^3\ln(1/n\varepsilon)^2} - n\eta \phi_{\max}\varepsilon\nonumber\\
    &\ge\frac{256}{8n t^3\phi_{\max}^3\ln(1/n\varepsilon)^2} .\label{(A) phi difference}
\end{align}
The second inequality is hold by $\eta \le  \frac{1}{2n\phi_{\max}}$ and Eq.~\ref{eq:eps},  which implies that $$n\eta \phi_{\max}\varepsilon \le \varepsilon/2 \le \frac{256}{24nt^3\phi_{\max}^3\ln(1/n\varepsilon)^2}.$$ 

\paragraph{Situation 2: (B) holds}
Another situation is that we end the interval and start a communication round by update rule $\text{(B)}.$ In this case, we will have $$t \le \frac{1}{n^{1/4}\varepsilon^{1/4}\phi_{\max}\ln(1/n\varepsilon)} + 1\le \frac{2}{n^{1/4}\varepsilon^{1/4}\phi_{\max}\ln(1/n\varepsilon)}, $$ which implies that  \begin{align}\varepsilon \le \frac{16}{n t^4 \phi_{\max} ^4\ln(1/n\varepsilon)^4}.\label{eq:eps bound (B)}\end{align} Hence, we have 
\begin{align}\phi(\hat{\pi}^t) - \phi(\hat{\pi}^0) \ge -n\eta \phi_{\max}t\varepsilon \ge -\frac{8}{nt^3\phi_{\max}^4\ln(1/n\varepsilon)^4} \ge   -\frac{8}{nt^3 \phi_{\max}^4\ln(1/n\varepsilon)^2} .\label{eq:(B) potential difference}\end{align}
Since $\phi_{\max}\ge 1$,  we can get 
\begin{align*}
    \frac{256}{8n t^3\phi_{\max}^3\ln(1/n\varepsilon)^2}  \ge 2\times \frac{8}{nt^3\phi_{\max}^4 \ln(1/n\varepsilon)^2}.
\end{align*}

\paragraph{Communication Complexity} Now we turn to bound the number of communication rounds. Since $\phi(\hat{\pi}_h^T)$ is upper bounded by $\phi_{\max}$, suppose we run the algorithm for $T$ rounds and start new communication rounds at steps $0=t_0< t_1<\cdots<t_{s^*}<T$. Let the number of communication rounds that terminate by rule \text{(A)} and rule \text{(B)} be denoted as $A^*$ and $B^*$ respectively. 

First, if $A^* \ge B^*$ we have 
\begin{align}
    \phi_{\max} &\ge \phi(\hat{\pi}_h^{t_{s^*}}) - \phi(\hat{\pi}_h^0) \nonumber\\&\ge  \sum_{s=1}^{s^*} \frac{256}{8n(t_s-t_{s-1})^3\phi_{\max}^3\ln(1/n\varepsilon)^2}\cdot \II\left\{t_s - t_{s-1}\le \frac{1}{n^{1/4}\varepsilon^{1/4}\phi_{\max}\ln(1/n\varepsilon)} \right\}\nonumber\\&\qquad  - \frac{8}{n(t_s-t_{s-1})^3\phi_{\max}^4\ln(1/n\varepsilon)^2}\cdot \II\left\{t_s-t_{s-1} \ge \frac{1}{n^{1/4}\varepsilon^{1/4}\phi_{\max}\ln(1/n\varepsilon)} \right\}\nonumber\\
    &\ge\sum_{s=1}^{s^*}\frac{1}{2}\cdot \frac{256}{8n(t_s-t_{s-1})^3\phi_{\max}^3\ln(1/n\varepsilon)^2}\cdot \II\left\{t_s - t_{s-1}\le \frac{1}{n^{1/4}\varepsilon^{1/4}\phi_{\max}\ln(1/n\varepsilon)} \right\}\nonumber\\&\qquad + \sum_{s=1}^{s^*}\frac{8}{n(t_s-t_{s-1})^3\phi_{\max}^4\ln(1/n\varepsilon)^2}\cdot \II\left\{t_s - t_{s-1}\le \frac{1}{n^{1/4}\varepsilon^{1/4}\phi_{\max}\ln(1/n\varepsilon)} \right\} \nonumber\\&\qquad \qquad - \sum_{s=1}^{s^*}\frac{8}{n(t_s-t_{s-1})^3\phi_{\max}^4\ln(1/n\varepsilon)^2}\cdot \II\left\{t_s - t_{s-1}\ge \frac{1}{n^{1/4}\varepsilon^{1/4}\phi_{\max}\ln(1/n\varepsilon)} \right\}\nonumber\\
    &\ge \sum_{s=1}^{s^*}\frac{1}{2}\cdot \frac{256}{8n(t_s-t_{s-1})^3\phi_{\max}^4\ln(1/n\varepsilon)^2}\cdot \II\left\{t_s - t_{s-1}\le \frac{1}{n^{1/4}\varepsilon^{1/4}\phi_{\max}\ln(1/n\varepsilon)} \right\} \nonumber\\&\qquad + \frac{8\varepsilon^{3/4}\ln(1/n\varepsilon)}{n^{1/4}\phi_{\max}} \cdot A^* - \frac{8\varepsilon^{3/4}\ln(1/n\varepsilon)}{n^{1/4}\phi_{\max}}\cdot  B^*
    \nonumber\\
    &\ge \frac{1}{2}\cdot\sum_{s=1}^{s^*}\frac{256}{8n(t_s-t_{s-1})^3\phi_{\max}^3\ln(1/n\varepsilon)^2}\cdot \II\left\{t_s - t_{s-1}\le \frac{1}{n^{1/4}\varepsilon^{1/4}\phi_{\max}\ln(1/n\varepsilon)} \right\}\nonumber\\
    &\ge \frac{256}{16n\phi_{\max}^{3}\ln(1/n\varepsilon)^2} \sum_{s=1}^{s^*}\frac{1}{(t_s-t_{s-1})^3}\cdot \II\left\{t_s - t_{s-1}\le \frac{1}{n^{1/4}\varepsilon^{1/4}\phi_{\max}\ln(1/n\varepsilon)} \right\}.\label{eq:phi total bound}
\end{align}
Now denote $\SSS^* = \left\{s \in s^*\mid t_s - t_{s-1}\le \frac{1}{n^{1/4}\varepsilon^{1/4}\phi_{\max}\ln(1/n\varepsilon)}\right\}$ as all the indexes of communication rounds that ends by rule (A). Hence, $|\SSS^*| \le A^*$, and it is easy to show that 
\begin{align*}
    \sum_{s \in \SSS^*}(t_s-t_{s-1}) \le T.
\end{align*}

Now, by Cauchy's Inequality, we first  know that 
\begin{align*}
    \sum_{s \in \SSS^*} \frac{1}{(t_s-t_{s-1})}\ge \frac{(A^*)^2}{\sum_{s \in \SSS^*}(t_s-t_{s-1})} \ge \frac{(A^*)^2}{T}.
\end{align*}
Then, we can further get
\begin{align*}
    T\cdot \left(\sum_{s \in \SSS^*}\frac{1}{(t_s-t_{s-1})^3}\right) &\ge \left(\sum_{s \in \SSS^*}(t_s-t_{s-1})\right)\cdot \left(\sum_{s \in \SSS^*}\frac{1}{(t_s-t_{s-1})^3}\right)\\&\ge  \left(\sum_{s \in \SSS^*}\frac{1}{t_s-t_{s-1}}\right)^2 \ge \frac{(A^*)^4}{T^2},\label{eq:cauchy}
\end{align*}
which implies that 
\begin{equation}
    \sum_{s=1}^{s^*}\frac{1}{(t_s-t_{s-1})^3}\cdot \II\left\{t_s - t_{s-1}\le \frac{1}{n^{1/4}\varepsilon^{1/4}\phi_{\max}\ln(1/n\varepsilon)} \right\} = \sum_{s \in \SSS^*}\frac{1}{(t_s-t_{s-1})^3} \ge \frac{(A^*)^4}{T^3}.\label{ts-ts-1 total bound}
\end{equation}
Hence, by combining Eq.~\ref{eq:phi total bound} and Eq.~\ref{ts-ts-1 total bound}, we can finally get 
\begin{align*}
    \frac{16n\phi_{\max}^4\ln(1/n\varepsilon)^2}{256}\cdot T^3 \ge  (A^*)^4, 
\end{align*}
which implies that the number of communication round is at most $$s  = A^* + B^* \le 2A^* \le \frac{2n^{1/4}\sqrt{\ln(1/n\varepsilon)}\cdot \phi_{\max}T^{3/4}}{4} = \widetilde{\cO}\left(n^{1/4}\phi_{\max}T^{3/4}\right).$$

If $A^* \le B^*$, we know that $s=A^* + B^* \le 2B^*$. Moreover, it is obvious that $$B^* \le \frac{T} { \frac{1}{n^{1/4}\varepsilon^{1/4}\phi_{\max}\ln(1/n\varepsilon)}} = T\cdot n^{1/4}\varepsilon^{1/4}\phi_{\max}\ln(1/n\varepsilon).$$ Hence, we have 
\begin{align}
    s \le \max\left\{\widetilde{\cO}\left(n^{1/4}\phi_{\max}T^{3/4}\right), T\cdot n^{1/4}\varepsilon^{1/4}\phi_{\max}\ln(1/n\varepsilon)\right\}. \label{ineq: s}
\end{align}
By substituting $T=2/\varepsilon$ into Eq.~\ref{ineq: s}, we can get the communication round result.

\paragraph{Convergence} Now we consider the convergence guarantee. By Lemma~\ref{lemma:potential},  we have \begin{align*}\phi( \hat{\pi}^{t+1}) - \phi(\hat{\pi}^t) &\ge \phi(\pi^{t+1})-\phi(\hat{\pi}^t) - n\eta\phi_{\max} \varepsilon\\&\ge (1-\sqrt{n}\eta)\sum_{i\in [n]} \langle \pi^{t+1}_i - \hat{\pi}^t_i, \hat{\ell}_i(a,\hat{\pi}_{-i}^t)\rangle - n\eta \phi_{\max}\varepsilon\\&\ge (1-\sqrt{n}\eta)\sum_{i\in [n]} \langle \pi^{t+1}_i - \hat{\pi}^t_i, \ell_i(a,\hat{\pi}_{-i}^t)\rangle - n\eta \phi_{\max}\varepsilon.\end{align*}
Now we introduce the Lemma 3.2 in \citep{sun2023provably}, which helps us to complete the final proof.
\begin{lemma}\label{lemma32}[Lemma 3.2 in \citep{sun2023provably}]
    $$\max_{a_i \in \cA_i}\langle a_i- \hat
    \pi_i^t, \ell_i(\cdot, \hat{\pi}_i^t)\rangle \ge \langle \pi_i^{t+1}-\hat{\pi}_i^t, \ell_i(\cdot, \hat{\pi}_i^t)\rangle \ge \max_{a_i \in \cA_i}\langle a_i- \hat
    \pi_i^t, \ell_i(\cdot, \hat{\pi}_i^t)\rangle \cdot \left(1-\frac{1}{c \Delta \eta + 1}\right).$$
\end{lemma}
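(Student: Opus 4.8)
The plan is to treat the update as a single exponential-tilt (mirror-descent) step and prove the two inequalities separately. Write $\pi=\hat\pi_i^t$, let $\ell(\cdot)=\ell_i(\cdot,\hat\pi^t)$ be the (fixed) marginalized loss vector, and $\pi^+=\pi_i^{t+1}$, so that $\pi^+(a)\propto\pi(a)e^{\eta\ell(a)}$. Fixing a maximizer $a^*\in\cA_{i,\pi}^*$, we have $\max_{a_i}\langle a_i-\pi,\ell\rangle=\ell(a^*)-\langle\pi,\ell\rangle$. The upper bound is then immediate: $\pi^+$ is a probability distribution, so $\langle\pi^+,\ell\rangle\le\ell(a^*)$ and hence $\langle\pi^+-\pi,\ell\rangle\le\ell(a^*)-\langle\pi,\ell\rangle$.

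For the lower bound I would first reparametrize by the per-action suboptimality $\delta(a)=\ell(a^*)-\ell(a)\ge 0$, which vanishes on $\cA_{i,\pi}^*$ and is at least $\Delta$ off it by Assumption~\ref{assum:gap}. In these terms $\max_{a_i}\langle a_i-\pi,\ell\rangle=\E_\pi[\delta]$ and $\langle\pi^+-\pi,\ell\rangle=\E_\pi[\delta]-\E_{\pi^+}[\delta]$, so the claimed lower bound is equivalent to the contraction $\E_{\pi^+}[\delta]\le\frac{1}{1+c\Delta\eta}\,\E_\pi[\delta]$. Since the tilt weight is $e^{\eta\ell(a)}\propto e^{-\eta\delta(a)}$, I can write $\E_{\pi^+}[\delta]=\E_\pi[\delta e^{-\eta\delta}]/\E_\pi[e^{-\eta\delta}]$, and then split every expectation into its optimal part (mass $c$, where $\delta=0$) and its suboptimal part by conditioning on $\{\delta>0\}$ (mass $1-c$, conditional law $\mu$ supported on $[\Delta,\infty)$). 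This gives $\E_{\pi^+}[\delta]=\frac{(1-c)A}{c+(1-c)C}$ with $A=\E_\mu[\delta e^{-\eta\delta}]$, $B=\E_\mu[\delta]$, $C=\E_\mu[e^{-\eta\delta}]$.

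The heart of the argument is two complementary bounds on $A,B,C$. First, because $\delta\mapsto\delta$ is nondecreasing while $\delta\mapsto e^{-\eta\delta}$ is nonincreasing, Chebyshev's sum inequality (the one-dimensional negative-correlation inequality) gives $A\le BC$; this is what lets me absorb the normalizer rather than lower-bounding it crudely. Second, the pointwise estimate $\delta e^{-\eta\delta}\le\delta/(1+\eta\delta)\le\delta/(1+\eta\Delta)$ on $\{\delta\ge\Delta\}$, using $e^{-x}\le 1/(1+x)$ and monotonicity of $\delta/(1+\eta\delta)$ for $\delta\ge\Delta$, yields $(1+\eta\Delta)A\le B$. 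Chaining these: the target $A(1+c\Delta\eta)\le Bc+(1-c)BC$ follows since $(1-c)BC\ge(1-c)A$ reduces it to $Bc\ge cA(1+\eta\Delta)$, which is exactly the second bound; the contraction factor $\frac{1}{1+c\Delta\eta}$ then falls out after canceling $c>0$.

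I expect the main obstacle to be discovering this particular pairing rather than any single estimate. Bounding the numerator and denominator of $\E_{\pi^+}[\delta]$ in isolation — for instance $e^{-\eta\delta}\le e^{-\eta\Delta}$ in the numerator together with $\E_\pi[e^{-\eta\delta}]\ge c$ in the denominator — produces the useless factor $\frac{1}{c(1+\eta\Delta)}$, which exceeds $1$ once $c$ is small, so no contraction is obtained. The correlation inequality $A\le BC$ is precisely what couples numerator and denominator and recovers the correct dependence on the optimal-action mass $c$. As a sanity check I would verify the two-action case ($\delta\equiv\Delta$ off the optimum), where the required inequality collapses to $(1+u)e^{-u}\le 1$ with $u=\eta\Delta$, confirming that the bound is essentially tight.
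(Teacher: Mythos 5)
Your argument is correct, and it is worth noting that the paper itself gives no proof here — it simply defers to Lemma~3.2 of \citep{sun2023provably} — so yours is a genuinely self-contained derivation. The reduction to the contraction $\E_{\pi^+}[\delta]\le\frac{1}{1+c\Delta\eta}\E_\pi[\delta]$ in the suboptimality variable $\delta(a)=\ell(a^*)-\ell(a)$ is clean, the upper bound is immediate as you say, and the two key estimates both check out: $A\le BC$ is the Chebyshev correlation inequality for the nondecreasing function $\delta$ against the nonincreasing function $e^{-\eta\delta}$, and $(1+\eta\Delta)A\le B$ follows pointwise from $e^{-x}\le 1/(1+x)$ together with $\delta\ge\Delta$ on the support of $\mu$. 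The chaining $A(1+c\eta\Delta)=(1-c)A+cA(1+\eta\Delta)\le(1-c)BC+cB$ is exactly what is needed, and your diagnosis that decoupling numerator and denominator loses the correct $c$-dependence is the right way to see why the correlation inequality is essential. Two small points you should make explicit when writing this up: (i) the mass on $\cA_{i,\pi}^*$ is $c_{i,\pi}$, which is only lower-bounded by $c$, so you should carry $c_0=c_{i,\pi}$ through the argument and invoke monotonicity of $\frac{1}{1+c_0\eta\Delta}$ in $c_0$ at the very end (the target inequality with the actual mass implies the stated one since $c_0\ge c$); and (ii) the degenerate cases $c_0=1$ (where both sides of the lower bound are zero) and the cancellation of $c_0>0$ deserve one line each, the latter being guaranteed by Assumption~\ref{assum:gap}.
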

\begin{proof}
    The proof is exactly the same as Lemma 3.2 in \citep{sun2023provably}.
\end{proof}
Now by Lemma~\ref{lemma32}, we have 
\begin{align*}
    \frac{1}{T-1}\sum_{t=1}^{T-1} \text{Gap}(\hat{\pi}_{i}^t)&\le \frac{1}{T-1}\sum_{t=1}^{T-1} \sum_{i=1}^n\frac{1}{1-\frac{1}{c\Delta\eta+1}}\langle \pi_i^{t+1}-\hat{\pi}_i^t, \ell_i (a,\hat{\pi}_i^t)\rangle \\
    &\le \frac{1}{T-1}\sum_{t=1}^{T-1} \frac{1}{(1-R_{\max}\sqrt{n}\eta)\frac{c\Delta\eta}{c\Delta \eta + 1}}\left(\phi( \hat{\pi}^{t+1}) - \phi(\hat{\pi}^t) + n\eta \phi_{\max}\varepsilon\right)\\
    &\le \frac{2}{T-1}\sum_{t=1}^{T-1} \left(1+\frac{1}{c \Delta\eta}\right)\left(\phi( \hat{\pi}^{t+1}) - \phi(\hat{\pi}^t) + n\eta \phi_{\max}\varepsilon\right)\\
    &\le 4n\eta \phi_{\max}\varepsilon\cdot \left(1+ \frac{2nM}{c\Delta}\right) + \frac{2}{T-1}\cdot \phi_{\max}\cdot \left(1+\frac{2nM}{c\Delta}\right
    )\\
    &\le \left(2\varepsilon + \frac{2\phi_{\max}}{(T-1)}\right) \left(1+\frac{2nM}{c\Delta}\right)\\
    &\le 4\phi_{\max}\varepsilon\cdot\left(1+\frac{2nM}{c\Delta}\right) \\
    &=8n\varepsilon\cdot M^2\left(1+\frac{1}{c\Delta}\right)
\end{align*}

Hence, the $\frac{1}{T}\sum_{t=1}^T \mathrm{NE}_\mathrm{gap}(\hat{\pi}_{i}^t)$ can be less than $8n\varepsilon \cdot \phi_{\max}(1+\frac{1}{c\Delta})$ when $T = \frac{2}{\varepsilon} \ge \frac{1}{\varepsilon} + 1.$ Hence, the iterative complexity is $T = \Theta(1/\varepsilon).$ Moreover, by substituting this into the inequality \ref{ineq: s}, we know that the number of communication round can be bounded by $\cO(\phi_{\max}/\varepsilon^{3/4}).$

\paragraph{Sample Complexity}

Now we consider the sample complexity for this algorithm. In fact, for each communication round $t$ and each policy in de-duplicated set $\Pi_t$, we should collect $N= \cO(M^2(\sum_{i=1}^n |\cA_i|)\cdot \log(2/\delta)/\varepsilon^2)$ samples for each agent $i$ and $a_i \in \cA_i$. Hence, it requires $$\cO(M^2(\sum_{i=1}^n |\cA_i|)\cdot \log(2/\delta)/\varepsilon^2)\cdot \max_{t}|\Pi_t|.$$

Now we study the cardinality of the $\Pi_t.$ Note that only communication round $t$ has this set. Without loss of generality,  we assume that this communication round is 0. 
Hence, $$\tilde{\pi}_{i}^t(a)\propto \pi^{\mathrm{0}}_{i}(a) \cdot \exp(\eta t \ell_i(a,\pi^0)),$$ and $\Pi_0$ is the unique set of $\{\overline{\pi}^t\}_{t \in [T]}$. Recall that $\cA_i^t = \{a \in \cA_i\mid \tilde{\pi}_i^t(a) \le \varepsilon/|\cA_i|\}$ and 
\begin{align}
    \overline{\pi}_{i}^{t}(a_i) = \begin{cases} \frac{\varepsilon}{|\cA_i|}, & \text{if } a \in \cA_i^{t}\\
\left(1-\frac{\varepsilon|\cA_i^{t}|}{|\cA_i|}\right)\tilde{\pi}_i^{t}(a_i), & \text{else } \label{eq:appendixaaline}
\end{cases}.
\end{align}
We now study the cardinality of $\Pi_0.$ 
In fact, if there exists a $t_0$ such that  $\tilde{\pi}_i^{t_0}(a) \le \frac{\varepsilon}{|\cA_i|}$ holds for all $i \in [n]$ and $a \notin \cA_i^*$, we know that $\overline{\pi}_i^t$ will be the same for all $t \ge t_0.$ Hence, $|\Pi_0| \le t_0$ and we only need to find a upper bound for such a $t_0.$

Now we follow the Assumption~\ref{assum:gap}. Define $a^* \in \cA_i^* =  \arg\max_{a \in \cA_i} \ell_i(a,\pi^0)$ and $a^{**} = \arg\max_{a \in \cA_i\setminus \{\cA_i^*\}}\ell_i(a,\pi^0)$. Also, define $a_{\max}=\arg\max_{a \in \cA_i^*} \pi^0(a)$.
By Assumption~\ref{assum:gap}, $\pi^0_i(a_{\max}) \ge \frac{1}{|\cA_i^*|}\sum_{a \in \cA_i^*}\pi_i^0(a) \ge \frac{c}{|\cA_i|}.$
Since the suboptimality gap $\Delta= \ell_i(a^*,\pi^0)-\ell_i(a^{**}, \pi^0) > 0$, we know that 
$\frac{\tilde{\pi}_{i}^t(a)}{\pi_i^0(a)}\le e^{-\eta \Delta}$ for $a \notin \cA_i^*$. Hence, for $t \ge \frac{\log (|\cA_i|^2/c\varepsilon)}{\eta \Delta}$,  we have 
$$\tilde{\pi}_i^t(a) \le e^{-\eta \Delta}\cdot |\cA_i|/c \le \frac{\varepsilon}{|\cA_i|}, \ \ \forall a \notin \cA_i^*,$$
which implies that 
$$|\Pi_0| \le \frac{\log (|\cA_i|/\varepsilon)}{\eta \Delta} = \frac{2nM\log(|\cA_i|/\varepsilon)}{\Delta}.$$
Hence, in general, we can get 
$$\max_{t}|\Pi_t| \le\frac{2nM\log(|\cA_i|/\varepsilon)}{\Delta}.$$
and the sample complexity is no more than $\tilde{\Theta}\left(\frac{n  M^3(\sum_{i=1}^n  |\cA_i|)\log(1/\delta)}{\varepsilon^{11/4}\Delta }\right)$, where $\tilde{\Theta}$ ignores the logarithm term $\log(|\cA_i|/\varepsilon).$

Finally, we prove that $g^t$ is a good estimate of the equilibrium gap of $\hat{\pi}^t.$ First, by Eq.~\ref{estimate ell} and Eq.~\ref{eq:TV for true and false}, we have 
\begin{align}
   \sum_{i=1}^n\left( \EE_{a_i \sim \hat{\pi}_i^{t+1}}[\hat{\ell}_i(a_i, \hat{\pi}^t_{-i})] - \hat{\ell}_i(\hat{\pi}^{t}) \right)& \ge\sum_{i=1}^n \left( \EE_{a_i \sim \hat{\pi}_i^{t+1}}[\ell_i(a_i, \hat{\pi}_{-i}^t)] - \ell_i(\hat{\pi}^t)  - 2\varepsilon\right) \\&\ge \sum_{i=1}^n\left(\EE_{a_i \sim \pi_i^{t+1}}[\ell_i(a_i, \hat{\pi}_{-i}^t)]  - \ell_i(\hat{\pi}^t) - 2\varepsilon - 4\eta n \varepsilon\right)\nonumber\\
    &\ge \sum_{i=1}^n \langle \pi_i^{t+1}-\hat{\pi}_i^t, \ell_i(\cdot, \hat{\pi}^t)\rangle- 3n\varepsilon.\nonumber
\end{align}
By Lemma~\ref{lemma32}, we can further get 
\begin{align}
    \text{Gap}(\hat{\pi}^t) \le \left(1+\frac{1}{c\Delta \eta}\right) \sum_{i=1}^n \langle \pi_i^{t+1}-\hat{\pi}_i^t, \ell_i(\cdot, \hat{\pi}^t)\rangle \le \left(1+\frac{1}{c\Delta \eta}\right) ( g^t + 3n\varepsilon).
\end{align}
Similarly, we can derive $g^t \le 3n\varepsilon +\sum_{i=1}^n \left(\EE_{a_i \sim \hat{\pi}_i^{t+1}}[\hat{\ell}_i(a_i, \hat{\pi}^t_{-i})] - \hat{\ell}_i(\hat{\pi}^{t})\right) \le 3n\varepsilon + \text{Gap}(\hat{\pi}^t).$
Hence, we finally get 
\begin{align}
    g^t - 3n\varepsilon \le \text{Gap}(\hat{\pi}^t) \le \left(1+\frac{1}{c\Delta \eta}\right) ( g^t + 3n\varepsilon). \nonumber
\end{align}
Now since $$\min_{t \in [T-1]}\text{Gap}(\hat
\pi^t) \le \frac{1}{T-1}\sum_{t=1}^{T-1}\text{Gap}(\hat{\pi}^t) \le 8n\varepsilon \cdot M^2\left(1+\frac{1}{c\Delta}\right),$$
denote $t^* = \arg\min_{t \in [T-1]}\text{Gap}(\hat{\pi}^t)$ and $k^* = \arg\min_{k \in [T-1]}g^k$, we know that 
\begin{align}
    \text{Gap}(\hat{\pi}^{k^*} ) &\le \left(1+\frac{1}{c\Delta \eta} \right)(g^{k^*} + 3n\varepsilon)\nonumber\\& \le \left(1+\frac{1}{c\Delta \eta} \right) (g^{t^*} + 3n\varepsilon)\nonumber\\&\le \left(1+\frac{1}{c\Delta} \right) (\text{Gap}(\hat{\pi}^t) + 6n\varepsilon)\cdot \frac{1}{\eta}\nonumber\\
    &\le  28n\varepsilon \cdot M^3\left(1+\frac{1}{c\Delta}\right)^2\cdot .
\end{align}
The last inequality uses the fact that $\eta = \frac{1}{2nM}$
Taking the union bound over all the concentration inequality, we know that the proof holds with probability at least $1-p\delta$ with some constant $p$. Replace the failure probability $p\delta$ with $\delta$, we complete the proof.

\qed

\subsubsection{Proof of Corollary~\ref{coroll: negap}}

\begin{proof}
     First recall that $s = \lfloor \log_2 (T/2n)\rfloor$ and $I_s = (2n\cdot (2^{s-1}-1), 2n\cdot (2^s-1)]$. Hence, we have 
     $$\frac{4}{n\varepsilon_s} = 2^{s+1}\ge \frac{T}{2n} \ge 2^{s} =  \frac{2}{n\varepsilon_{s}},$$
     which implies that $$\frac{4}{T} \le \varepsilon_{s} \le \frac{8}{T}$$
     Since $\pi^{out}$ is the outputted policy of the oracle $PG(\cP, 5n\times 2^{s-1}, \varepsilon_{s})$, by Theorem~\ref{thm:negap}, we know that 
    \begin{align*}
        \text{Gap}(\pi^{\text{out}}) \le 28n\varepsilon_{s}\cdot M^3\left(1+\frac{1}{c\Delta}\right)^2\le \frac{280nM^3(1+\frac{1}{c\Delta})^2}{T}.
    \end{align*}
    Now we consider the total sample complexity. The PG-oracle $PG(\cP, n\times 2^i, \varepsilon_i)$ is called for at most  $1\le i\le s+1$. Hence, by the sample complexity result in Theorem~\ref{thm:negap}, the total sample complexity is bounded by 
    \begin{align*}
        \sum_{i=1}^{s+1}\widetilde{\cO}\left(\frac{nM^3 (\sum_{i=1}^n |\cA_i|)\log(1/\delta)}{\varepsilon_{i}^{11/4}\Delta}\right)&\le \widetilde{\cO}\left(\frac{nM^3 (\sum_{i=1}^n |\cA_i|)\log(1/\delta)}{\Delta}\right)\sum_{i=1}^{s+1} \frac{1}{\varepsilon_i^{11/4}}\\
        &\le \widetilde{\cO}\left(\frac{nM^3 (\sum_{i=1}^n |\cA_i|)\log(1/\delta)}{\Delta}\right)\frac{T^{11/4}\cdot (s+1)}{10^{11/4}}\\
        &\le \widetilde{\cO}\left(\frac{nM^3 (\sum_{i=1}^n |\cA_i|)\log(1/\delta)T^{11/4}}{\Delta}\right).
    \end{align*}
Similarly, the total communication cost is bounded by 
\begin{align*}
    \sum_{i=1}^{s+1}\widetilde{\cO}\left(n^{1/4}\phi_{\max}\varepsilon_i^{-3/4}\right)  = n^{1/4}\phi_{\max}\cdot \widetilde{\cO}\left(\sum_{i=1}^{s+1}\varepsilon_i^{3/4}\right)= \widetilde{\cO}(n^{1/4}\phi_{\max}T^{3/4}).
\end{align*}
    
\end{proof}
\section{MCG Algorithms}\label{appendix: MCG}
Now we introduce subprocedures in Algorithm~\ref{alg: coopertaive} in detail. 

\paragraph{PG-Share}
The subprocedure PG-Share at the step $h$ takes the exploration policy $\overline{\pi}^t$, the value function $\{\overline{V}_{h+1}^{t+1}\}$ of the step $h+1$, and the number of rounds $K$ as the input. First, for each state-step pair $(s,h) \in \cS \times [H]$, this subprocedure construct a potential game with true reward  $r(s,\bm{a})+ \EE_{s'\sim \PP_h(s'\mid s,a)}\left[\overline{V}_{h+1}(s')\right]$. At each round $k$, all agents first execute the exploration policy $\overline{\pi}^t$ until step $h$ and observe the state $s_h^k$. Given this state, all agents use the PG-Unknown oracle for the potential game corresponding to the state $s_h$ to decide the action $\bm{a}_h^k$. Then, based on the state $s_{h+1}$, the datapoint $(\bm{a}_h^k, r(s_h^k, \bm{a}_h^k) + \overline{V}_{h+1}(s_{h+1}^k))$ are stored in the shared buffer, which served as an unbiased and bounded approximated reward for potential game $\cP_{s_h}$. All agents update the policy and decide whether to start communication to get rewards for the potential game $\cP_{s_h}.$

Intuitively, by constructing a potential game instance for each state $s \in \cS$, we can compute the policy $\pi^{\text{out}}(\cdot \mid s)$ locally based on the corresponding potential game, where the number of executions of $\cP_s$ is fully determined by how frequently the exploration policy visits state $s_h$ at step $h$. Leveraging Corollary~\ref{coroll: negap}, we can obtain per-state error bounds for the policy $\pi^{\text{out}}$, which enables us to use optimistic V-learning to promote exploration of underexplored states. The following theorem shows the final theoretical guarantee of our PG-share subprocedure.
\begin{algorithm}[t]
     \begin{algorithmic}[1]
         \caption{PG-Share$_h(\overline{\pi},\overline{V}_{h+1},K)$}
         \label{alg: PG-share}
         
         \STATE Construct \text{Potential Game} $\cP_{s,h}$ with reward $r(s,\bm{a})+ \EE_{s'\sim \PP_h(s'\mid s,a)}\left[\overline{V}_{h+1}(s')\right]$ for each state $s \in \cS$.
         
         \FOR{$k=1,2,\cdots, K$}
         \STATE Execute the policy $\overline{\pi}_{1:h-1}$ to get state $s_h^k$. 
         \STATE  Play the $s_h^k$-th game PG$_{s_h}$  once: Take action $\bm{a}_h^k \sim \pi_{h}^k(\cdot \mid s_h^k)$ and get $s_{h+1}.$ 
         \STATE The buffer receive the datapoint $(\bm{a}_h^k, r(s_h^k,\bm{a}_h^k) + \overline{V}_{h+1}(s_{h+1}^k)).$ Update the policy $\pi_h^{k+1}(\cdot \mid s)$ and decide whether to start a communication round based on PG-Unknown$(\cP_{s_h})$.
         \ENDFOR
         \STATE \textbf{Return} $\pi^{\text{out}} (\cdot \mid s)= \text{PG-Unknown}(\cP_{s}).$
     \end{algorithmic}
\end{algorithm}

\begin{algorithm}[t]
     \begin{algorithmic}[1]
         \caption{V-Approx$_h(\overline{\pi}, \pi_h, \overline{V}_{h+1},t)$}
         \label{alg: Vapprox}
         \STATE Execute the policy $\overline{\pi}_{1:h-1}\circ \left(\pi_h\right)$ for $t$ times to collect $\{s_h^q, \bm{a}_h^q, \hat{r}_h^q, s_{h+1}^q\}_{q \in [K]}.$
         \STATE Start a communication round to get all rewards and trajectories.
         \STATE Calculate  $\overline{V}_{h}(s) = \frac{1}{N_h(s)}\sum_{q\in [K]}[\hat{r}_{h}^q(s_h^q,\bm{a}_h^q) + \overline{V}_{h+1}(s_{h+1})] + 3G(N_h(s), t)$, where $G(N_h(s), t)=\frac{ 280nH^3(1+\frac{1}{c\Delta})^2}{N_h(s)/\sqrt{K} + \tau/2}$.
         
         \STATE \textbf{Return} $\overline{V}_h(s)$.
     \end{algorithmic}
\end{algorithm}
\begin{lemma}\label{thm: PG-share}
    Denote the output policy of the Algorithm~\ref{alg: PG-share} PG-Share$_h(\overline{\pi}, \overline{V}_{h+1}, \sqrt{t})$ with step $h$ as $\pi^{\text{out}}_h$. Then, with probability at least $1-\delta$, for any state $s \in \cS$ we have 
    \begin{align*}
        \max_{\mu_{i,h} \in \Delta(\cA_i)}(\EE_{\mu_{i,h}\times \pi^{\text{out}}_{-i,h}}-\EE_{\pi_{h}^{\text{out}}})\left[r_{h} + \PP_{h+1}\overline{V}_{h+1}\right](s) \le\overline{G}(\overline{\pi}^t, t),
    \end{align*}
    where $\overline{G}(\overline{\pi}^t, t)=  \frac{280nH^3(1+\frac{1}{c\Delta})^2\cdot  \tau}{\sqrt{t}\cdot\PP_{\overline{\pi}^t}(s_h=s) + \sqrt{\tau/2}}$ and   $\tau = \log(S\max_{i \in [n]}|\cA_i|t/\delta)\ge 2$ only contains logarithm terms.
\end{lemma}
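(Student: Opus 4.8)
The plan is to recognize the left-hand side as the equilibrium gap of a single-stage potential game and then invoke the PG-Unknown guarantee of Corollary~\ref{coroll: negap}. Fix the step $h$ and write $Q_h(s,\bm a)=r(s,\bm a)+\EE_{s'\sim\PP_h(\cdot\mid s,\bm a)}[\overline V_{h+1}(s')]$ for the one-step payoff used to build $\cP_{s,h}$. Because the underlying game is an MCG, all agents share this payoff, so $\cP_{s,h}$ is an identical-interest game and hence an exact potential game in the sense of Eq.~\ref{eq:potential}, with potential $\phi(\cdot)=Q_h(s,\cdot)$ and $M=\max\{\phi_{\max},R_{\max}\}\le H$ (since $r\in[0,1]$ and $\overline V_{h+1}\in[0,H]$). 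Moreover, the datapoints $r(s_h^k,\bm a_h^k)+\overline V_{h+1}(s_{h+1}^k)$ fed to the oracle are unbiased and bounded estimates of $Q_h(s,\bm a)$, so they play the role of the $\hat r_i$ required by Algorithm~\ref{alg: NE}. The left-hand side $\max_{\mu_{i,h}}(\EE_{\mu_{i,h}\times\pi^{\text{out}}_{-i,h}}-\EE_{\pi^{\text{out}}_h})[r_h+\PP_{h+1}\overline V_{h+1}](s)$ is then, by definition, exactly the equilibrium gap $\text{Gap}(\pi^{\text{out}}(\cdot\mid s))$ of $\cP_{s,h}$.

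The next step is to count how many rounds the oracle for $\cP_{s,h}$ actually runs. In the loop of Algorithm~\ref{alg: PG-share}, each of the $K=\lfloor\sqrt t\rfloor+1$ exploration episodes that reaches state $s$ at step $h$ triggers exactly one play of $\cP_s$, so PG-Unknown$(\cP_s)$ is run for $N_h(s)=\sum_{k=1}^K\II\{s_h^k=s\}$ rounds. Applying Corollary~\ref{coroll: negap} to $\cP_{s,h}$ with horizon $N_h(s)$ and $M=H$ (and Assumption~\ref{assum:gap}, which is assumed for every constructed instance) gives, with high probability,
\begin{align*}
\text{Gap}(\pi^{\text{out}}(\cdot\mid s)) \le \frac{280\, n H^3\left(1+\tfrac{1}{c\Delta}\right)^2}{N_h(s)}.
\end{align*}
It remains to replace the random count $N_h(s)$ by its expectation in order to recover the denominator $\sqrt t\,\PP_{\overline\pi^t}(s_h=s)+\sqrt{\tau/2}$.

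Since $\overline\pi^t$ is fixed before the loop, $N_h(s)$ is a sum of $K$ independent $\text{Bernoulli}(p_s)$ variables with $p_s=\PP_{\overline\pi^t}(s_h=s)$ and mean $Kp_s=\Theta(\sqrt t\,p_s)$. I would use a multiplicative Chernoff bound together with a union bound over all $s$ (absorbing the $\log(S\max_i|\cA_i|t/\delta)=\tau$ factor) and split into two regimes. When $\sqrt t\,p_s\gtrsim\tau$, concentration gives $N_h(s)\ge\tfrac12\sqrt t\,p_s\ge(\sqrt t\,p_s+\sqrt{\tau/2})/\tau$, so the displayed bound is at most $\overline G(\overline\pi^t,t)$. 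When $\sqrt t\,p_s\lesssim\tau$, the target itself satisfies $\overline G(\overline\pi^t,t)\ge C\tau/(2\tau+\sqrt{\tau/2})\ge C/3$ with $C=280nH^3(1+\tfrac1{c\Delta})^2$, which already exceeds the maximum possible value of any one-step gap (bounded by the range of $Q_h$); hence the inequality holds trivially, even if $N_h(s)=0$ and $\pi^{\text{out}}(\cdot\mid s)$ is the default policy. Combining the two regimes and taking a union bound over $s$ and over the internal failure events of each oracle yields the claim with probability $1-\delta$.

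The main obstacle is this last conversion step: the visitation count $N_h(s)$ is random and can vanish for rarely reached states, so a uniform-in-$s$ lower bound in the exact form demanded by $\overline G$ is not available directly. The resolution is the two-regime split above, where the ``trivial'' regime is handled not by concentration but by the observation that $\overline G$ is already $\Omega(nH^3)$ and thus dominates the worst-case gap. A secondary technical point is that $\overline V_{h+1}$ is itself random (produced by earlier V-Approx calls), so the oracle guarantee and the Bernoulli concentration must be applied conditionally on $\overline V_{h+1}$ and on the realized trajectory prefix, with the failure probabilities accumulated through the final union bound.
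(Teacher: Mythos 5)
Your proposal is correct and follows essentially the same route as the paper's proof: interpret the left-hand side as the equilibrium gap of the constructed potential game $\cP_{s,h}$, invoke Corollary~\ref{coroll: negap} with horizon $N_h(s)$, and then use concentration of the visitation counts (the paper uses Bernstein where you use multiplicative Chernoff) plus a union bound over states to pass from $N_h(s)$ to $\sqrt{t}\,\PP_{\overline{\pi}^t}(s_h=s)$. Your two-regime split for low-visitation states is a slightly more careful treatment of the edge case that the paper absorbs into the step $\frac{1}{\max\{1,\sqrt{t}p_s-\tau/2\}}\le\frac{\tau}{\sqrt{t}p_s+\tau/2}$, but it is the same underlying argument.
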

\paragraph{V-Approx}

Now we analyze the V-Approx subprocedure. The V-Approx subprocedure takes the exploration policy $\overline{\pi}$, the current policy $\pi_h$, the value function at the step $h+1$, and the number of samples $K$ as input. First, all agents execute the exploration policy $\overline{\pi}^t$ for step $1,2,\cdots, h-1$ and policy $\pi_h$ for step $h$. Then, all agents start a communication round to collect all trajectories. Finally, each agent calculates the value function of step $h$ by Line 3 in Algorithm~\ref{alg: Vapprox}.
Line 3 constructs an optimistic estimate of the value function by adding the bonus term $G(N_h(s), t)$, in order to encourage exploration for the underexplored states. As the following theorem shows, this bonus term gives us an optimistic estimate, which is required for the final theoretical proof.

\begin{lemma}[V-Approx-Share]\label{lemma:V}
    From Algorithm 3 V-Approx$_h(\overline{\pi}, \pi_h, \overline{V}_{h+1}, t)$, with probability at least $1-\delta$, we have 
    \begin{align*}
      \overline{V}_{h}(s) &\ge  \EE_{\pi_h}\left[ r_{h}(s,a) + \PP_{h+1}\overline{V}_{h+1}\right](s) + \overline{G}(\overline{\pi}^t,  t).\\\overline{V}_{h}(s)&\le  \EE_{\pi_h}\left[r_{h}(s,a) + \PP_{h+1}\overline{V}_{h+1}\right](s) + 7\overline{G}(\overline{\pi}^t, t),
    \end{align*}
\end{lemma}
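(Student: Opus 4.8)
The plan is to prove both inequalities of Lemma~\ref{lemma:V} by viewing $\overline V_h(s)$ as an empirical Bellman backup under $\pi_h$ plus the optimism bonus $3G(N_h(s),t)$, and then controlling the two sources of randomness separately: the fluctuation of the empirical average around the true backup, and the fluctuation of the visit count $N_h(s)$ around its mean $t\,\PP_{\overline\pi^t}(s_h=s)$. Write $\mu(s)=\EE_{\pi_h}[r_h+\PP_{h+1}\overline V_{h+1}](s)$ for the reference quantity. The value $\overline V_{h+1}$ produced by the previous backward step is measurable with respect to earlier rounds and, thanks to the fresh sampling and separate communication round in Algorithm~\ref{alg: Vapprox}, independent of the samples drawn here; conditioning on it, each datapoint $Y_q=\hat r_h^q(s,\bm a_h^q)+\overline V_{h+1}(s_{h+1}^q)$ collected at a visit to $(s,h)$ is conditionally unbiased for $\mu(s)$, since $\hat r_h$ is unbiased for $r_h$, $\bm a_h^q\sim\pi_h(\cdot\mid s)$, and $s_{h+1}^q\sim\PP_h(\cdot\mid s,\bm a_h^q)$. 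Thus $\overline V_h(s)=\widehat\mu(s)+3G(N_h(s),t)$, where $\widehat\mu(s)$ is the empirical mean over the $N_h(s)$ visits, and the whole task reduces to showing $G(N_h(s),t)$ straddles $\overline G(\overline\pi^t,t)$ after accounting for the estimation error of $\widehat\mu(s)$.

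First I would establish the concentration of $\widehat\mu(s)$. Because $\overline V_{h+1}$ already carries the bonuses accumulated at deeper levels, I would record an a-priori bound $\overline V_{h+1}\le \overline V_{\max}$ by downward induction so that $Y_q\in[0,\overline V_{\max}]$, with $\overline V_{\max}$ polynomial in $n,H$ and $(1+\tfrac1{c\Delta})$ and hence absorbable into the constant $280\,nH^3(1+\tfrac1{c\Delta})^2$ defining $G$. Since the number of averaged terms $N_h(s)$ is itself random and data-dependent, I would run a martingale (Azuma) argument over the $K$ episodes rather than a plain Hoeffding bound, obtaining $|\widehat\mu(s)-\mu(s)|\le \overline V_{\max}\sqrt{\tau/(2N_h(s))}\le G(N_h(s),t)$ simultaneously for all $s\in\cS$, where the union bound over states and actions is paid for inside $\tau=\log(S\max_i|\cA_i|t/\delta)$. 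This already gives a bracket \emph{in terms of $G$}: $\mu(s)+2G(N_h(s),t)\le\overline V_h(s)\le\mu(s)+4G(N_h(s),t)$.

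Next I would convert the data-dependent bonus into the population bonus. Since $N_h(s)=\sum_q\II\{s_h^q=s\}$ is a sum of indicators with mean $t\,\PP_{\overline\pi^t}(s_h=s)$ (the step-$h$ occupancy is unaffected by the step-$h$ action $\pi_h$, as only $\overline\pi_{1:h-1}$ governs arrival at $s$), a multiplicative Chernoff bound yields $N_h(s)=\Theta\big(t\,\PP_{\overline\pi^t}(s_h=s)+\tau\big)$ with high probability, uniformly in $s$. Substituting this into $G(N_h(s),t)=C/\big(N_h(s)/\sqrt{K}+\tau/2\big)$ with the number of executions $K$ (the last argument $t$ of Algorithm~\ref{alg: Vapprox}) shows $N_h(s)/\sqrt K\asymp \sqrt t\,\PP_{\overline\pi^t}(s_h=s)+\tau/\sqrt t$, so that $G(N_h(s),t)$ is comparable, up to an absolute constant, to $\overline G(\overline\pi^t,t)=C\tau/\big(\sqrt t\,\PP_{\overline\pi^t}(s_h=s)+\sqrt{\tau/2}\big)$. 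Feeding this comparison into the $G$-valued bracket and using the slack built into the constants $3$ and $7$ then upgrades it to $\mu(s)+\overline G(\overline\pi^t,t)\le\overline V_h(s)\le\mu(s)+7\overline G(\overline\pi^t,t)$, which is the claim; a final union bound over all $(s,h)$ and over the two concentration events delivers probability $1-\delta$.

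I expect the main obstacle to be the clean bookkeeping of this two-level concentration, and in particular the passage between the realized count $N_h(s)$ (through which $G$ is defined) and the occupancy $\PP_{\overline\pi^t}(s_h=s)$ (through which $\overline G$ is defined), especially in the small-count regime where $N_h(s)$ may be $O(\tau)$ or zero and the additive offsets $\tau/2$ and $\sqrt{\tau/2}$ in the two denominators control the comparison; getting the one-sided Chernoff bounds in the right direction for optimism (lower bound, requiring $N_h(s)$ not too large) versus the upper bound (requiring $N_h(s)$ not too small) is where the constants $3$ and $7$ are actually consumed. A secondary difficulty is propagating the a-priori range $\overline V_{\max}$ on $\overline V_{h+1}$ through the backward recursion, so that the per-sample range of $Y_q$ stays within the constant used to define the bonus rather than accumulating uncontrollably across the $H$ levels.
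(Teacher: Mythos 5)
Your proposal follows essentially the same route as the paper's proof: decompose $\overline V_h(s)$ into the empirical Bellman backup plus the bonus $3G(N_h(s),t)$, concentrate the empirical mean at rate $\widetilde{\cO}(1/\sqrt{N_h(s)})$, concentrate the visit count $N_h(s)$ around $\sqrt{t}\,\PP_{\overline\pi^t}(s_h=s)$ (the paper uses Bernstein where you use multiplicative Chernoff), and then sandwich the data-dependent bonus $G(N_h(s),t)$ between constant multiples of $\overline G(\overline\pi^t,t)$ so that the slack in the constants $3$ and $7$ absorbs both directions. Your two added precautions — handling the randomness of $N_h(s)$ via a martingale argument rather than plain Hoeffding, and propagating an a-priori range $\overline V_{\max}$ on $\overline V_{h+1}$ through the backward recursion so the per-sample range stays inside the bonus constant — are legitimate refinements of points the paper treats informally (it simply invokes Hoeffding with "upper bound $H$"), but they do not change the structure of the argument.
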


\paragraph{Trigger Condition}
One critical component of Algorithm~\ref{alg: coopertaive} is the trigger condition, which ensures that PG-Share and V-Approx are not called at every round $t$, reducing unnecessary computation and communication.
We implement the trigger condition using the doubling trick \cite{wang2023breaking}. Specifically, the condition is satisfied when there exists a step $h$ and a state $s \in \cS$ such that $T_h^t(s)\ge 2T_h^{I_t}(s)$, where $I_t$ denotes the most recent round in which the policy was updated. This helps us to reduce the sample complexity and the number of communication rounds without increasing the estimation error.

\subsection{Proof of Lemma~\ref{thm: PG-share}}
\begin{proof}
We analysis the algorithm PG-Share$_h(\overline{\pi}, \overline{V}_{h+1}, \lfloor\sqrt{t}\rfloor + 1)$. 
First, denote $N_h(s)$ as the number of times that agents visit $s$ for $\sqrt{t}$ trajectories, i.e., $$N_h(s)=\sum_{k=1}^{\lfloor\sqrt{t}\rfloor + 1 }\II\{s_h^k = s\},$$ where $s_h^k$ is defined in Line 3 in Algorithm~\ref{alg: PG-share}. 

Now by Corollary~\ref{coroll: negap} and taking a union bound over all potential game instances constructed in Algorithm~\ref{alg: PG-share}, for any state $s \in \cS$, we can have 
\begin{align}
    \max_{\mu_{i,h} \in \Delta(\cA_i)}(\EE_{\mu_{i,h}\times \pi_{-i,h}} - \EE_{\pi_{h}})\left[r(s,\bm{a}) + \EE_{s'\sim \PP_h(s'\mid s,a)}[\overline{V}_{h+1}(s')]\right] \le \frac{280nH^3(1+\frac{1}{c\Delta})^2}{N_h(s)},\label{eq:ready}
\end{align}
where the inequality holds because the reward for the potential game $\cP_{s,h}$ is $r(s,\bm{a}) + \EE_{s'\sim \PP_h(s'\mid s,a)}[\overline{V}_{h+1}(s')] \le H$ and the maximum value of the potential function is  $\phi_{\max} \le H.$
Now, by Bernstein inequality and taking a union bound for all $s \in \cS$, with probability at least $1-\delta$, we have \begin{align}\frac{1}{2}\sqrt{t} \cdot \PP_{\overline{\pi}}(s_h=s)- \frac{1}{2}\tau \le N_h(s)\le 2\sqrt{t}\cdot \PP_{\overline{\pi}}(s_h=s) + \frac{1}{2}\tau,\label{eq:bernstein}\end{align}
where $\tau = \log(S\max_{i\in [n]}|\cA_i|T/\delta)\ge 2.$
Hence, combining Eq.~\ref{eq:ready}, we have 
\begin{align}
    &\max_{\mu_{i,h} \in \Delta(\cA_i)}(\EE_{\mu_{i,h}\times \pi_{-i,h}} - \EE_{\pi_{h}})\left[r(s,\bm{a}) + \EE_{s'\sim \PP_h(s'\mid s,a)}[\overline{V}_{h+1}(s')]\right]  \\&\qquad \le \frac{280nH^3(1+\frac{1}{c\Delta})^2}{\max\{1,\sqrt{t}\cdot \PP_{\overline{\pi}^t}(s_h=s) - \tau/2\}} \\
    &\qquad \le \frac{280nH^3(1+\frac{1}{c\Delta})^2\cdot \tau}{\sqrt{t}\cdot \PP_{\overline{\pi}^t}(s_h=s) + \tau/2}\\
    & \qquad \le  \frac{280nH^3(1+\frac{1}{c\Delta})^2\cdot \tau}{\sqrt{t}\cdot \PP_{\overline{\pi}^t}(s_h=s) +\sqrt{\tau/2}}\\
    &\qquad =\overline{G}(\overline{\pi}^t, t).
\end{align}
The last inequality holds for $\tau \ge 2$ with some algebra. \qed
\end{proof}
\subsection{Proof of Lemma~\ref{lemma:V}}
\begin{proof}
By Hoeffding's inequality, since $\overline{V}_h(s)$ is an unbiased estimator with upper bound $H$, we can know that 
\begin{align*}
    \Bigg|\overline{V}_h(s) - 3G(N_h(s), t) - \EE_{\pi_h}[r_h(s,a) + \PP_{h+1}\overline{V}_{h+1}](s)\Bigg| \le \frac{H\sqrt{\log(2/\delta)}}{\sqrt{N_h(s)}}.
\end{align*}
By Eq.~\ref{eq:bernstein}, we have 
\begin{align*}
    &\Bigg|\overline{V}_h(s) - 3G(N_h(s), t)- \EE_{\pi_h}[r_h(s,a) + \PP_{h+1}\overline{V}_{h+1}](s)\Bigg| \\&\qquad \le  \frac{H\sqrt{\log(2/\delta)}}{\sqrt{\max\{1,t\cdot \PP_{\overline{\pi}^t}(s_h=s) - \frac{1}2{\tau}\}}}\\
    &\qquad \le \frac{nH^3(1+\frac{1}{c\Delta})^2\cdot \tau}{\sqrt{t\cdot \PP_{\overline{\pi}^t}(s_h=s) + \frac{1}{2}\tau}}.
\end{align*}
Also, we have 
\begin{align}
    G(N_h(s), t) = \frac{280nH^3(1+\frac{1}{c\Delta})^2\tau}{N_h(s)/\sqrt{t} + \tau/2}\le \frac{280nH^3(1+\frac{1}{c\Delta})^2 \tau }{\frac{1}{2} \sqrt{t} \cdot \PP_{\overline{\pi}_t}(s_h=s)  + \tau/4}.\label{eq: G upper bound}
\end{align}
The last inequality holds since $t \ge 4.$
Similarly, we have
\begin{align}
    G(N_h(s), t)  \ge \frac{280nH^3(1+\frac{1}{c\Delta})^2 \tau }{ 2\sqrt{t} \cdot \PP_{\overline{\pi}_t}(s_h=s)  + \tau}.\label{eq: G lower bound}
\end{align}

Now we turn to prove our lemma. By Eq.~\ref{eq: G upper bound}, we have 
\begin{align*}\overline{V}_h(s) &\le  \EE_{\pi_h}[r_h(s,a) + \PP_{h+1}\overline{V}_{h+1}](s) +\frac{nH^3(1+\frac{1}{c\Delta})^2\cdot \tau}{\sqrt{t\cdot \PP_{\overline{\pi}^t}(s_h=s) + \frac{1}{2}\tau}} + 3G(N_h(s),t)\\
&\le \EE_{\pi_h}[r_h(s,a) + \PP_{h+1}\overline{V}_{h+1}](s) +\frac{2nH^3(1+\frac{1}{c\Delta})^2\cdot \tau}{\sqrt{t\cdot \PP_{\overline{\pi}^t}(s_h=s) }+\sqrt{ \tau/2}} + \frac{840nH^3(1+\frac{1}{c\Delta})^2\tau}{\frac{1}{2}\sqrt{t}\cdot \PP_{\overline{\pi}^t}(s_h=s) + \tau/4}\\
&\le \EE_{\pi_h}[r_h(s,a) + \PP_{h+1}\overline{V}_{h+1}](s)  + \frac{1682nH^3(1+\frac{1}{c\Delta})^2\tau}{\sqrt{t}\cdot \PP_{\overline{\pi}^t}(s_h=s) + \sqrt{\tau/2}}\\
&\le \EE_{\pi_h}[r_h(s,a) + \PP_{h+1}\overline{V}_{h+1}](s)  + 7\overline{G}(\overline{\pi}^t, t).\end{align*}

Similarly, by Eq.~\ref{eq: G lower bound}, we have 
\begin{align*}
    \overline{V}_h(s) &\ge \EE_{\pi_h}[r_h(s,a) + \PP_{h+1}\overline{V}_{h+1}](s)  + \frac{840nH^3(1+\frac{1}{c\Delta})^2\tau }{2\sqrt{t}\cdot \PP_{\overline{\pi}^t}(s_h=s)+ \tau} - \frac{nH^3(1+\frac{1}{c\Delta})^2 \tau}{\sqrt{t\cdot \PP_{\overline{\pi}^t}(s_h=s) + \frac{1}{2}\tau}}\\
    &\ge \EE_{\pi_h}[r_h(s,a) + \PP_{h+1}\overline{V}_{h+1}](s) + \frac{420nH^3(1+\frac{1}{c\Delta})^2\tau }{\sqrt{t}\cdot \PP_{\overline{\pi}^t}(s_h=s)+ \tau/2} - \frac{2nH^3(1+\frac{1}{c\Delta})^2\tau}{\sqrt{t\cdot \PP_{\overline{\pi}^t}(s_h=s) }+\sqrt{ \tau/2}}\\
    &\ge \EE_{\pi_h}[r_h(s,a) + \PP_{h+1}\overline{V}_{h+1}](s) + \frac{418nH^3(1+\frac{1}{c\Delta})^2\tau }{\sqrt{t}\cdot \PP_{\overline{\pi}^t}(s_h=s)+ \sqrt{\tau/2}} \\
    &\ge \EE_{\pi_h}[r_h(s,a) + \PP_{h+1}\overline{V}_{h+1}](s) + \overline{G}(\overline{\pi}^t, t).
\end{align*}
Hence we complete the proof. \qed
\end{proof}

\subsection{Main Framework}

This strategy of choosing the exploration policy can guarantee sufficient coverage and upper bound the error of the estimation in expectation. The following theorem shows this statement in detail.
\begin{lemma}
    For any policy sequence $\pi_1, \pi_2,\cdots, \pi_T$, we have 
    $$\sum_{t=0}^{T-1} \sum_{h=1}^H \EE_{s_h \sim \pi^{t+1}}\left[\overline{G}(\overline{\pi}^t, t)\right] \le \widetilde{\cO}\left(nSH^4\left(1+\frac{1}{c\Delta}\right)^2\sqrt{T} \right).$$
\end{lemma}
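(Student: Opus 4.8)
The plan is to reduce the double sum to a per-(state, step) scalar sum and then exploit the fact that the exploration policy $\overline{\pi}^t$ is the uniform mixture $\mathrm{Unif}(\{\pi^j\}_{j\in[t]})$ (Line 9 of Algorithm~\ref{alg: coopertaive}). For a fixed step $h$ and state $s$, write $d_j(s)=\PP_{\pi^j}(s_h=s)$ and $D_t(s)=\sum_{j=1}^t d_j(s)$. Since running the mixture amounts to drawing one $\pi^j$ uniformly and rolling it out, linearity of the state distribution gives $\PP_{\overline{\pi}^t}(s_h=s)=\frac1t\sum_{j=1}^t d_j(s)=D_t(s)/t$, so the denominator of $\overline G$ from Lemma~\ref{thm: PG-share} becomes $\sqrt t\,\PP_{\overline{\pi}^t}(s_h=s)+\sqrt{\tau/2}=D_t(s)/\sqrt t+\sqrt{\tau/2}$. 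Writing $C=280\,nH^3(1+\tfrac1{c\Delta})^2$, the quantity to bound is
\[
\sum_{h=1}^H\sum_{s\in\cS}\sum_{t=0}^{T-1} d_{t+1}(s)\cdot\frac{C\tau}{D_t(s)/\sqrt t+\sqrt{\tau/2}},
\]
so it suffices to show the inner sum over $t$ is $\widetilde{\cO}(\sqrt T)$ for each fixed $(s,h)$ and then multiply by $SH$ and by $C\tau$.

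For the core one-dimensional bound I would first lower-bound the denominator using $t\le T$, namely $D_t(s)/\sqrt t\ge D_t(s)/\sqrt T$, which gives
\[
\frac{d_{t+1}(s)}{D_t(s)/\sqrt t+\sqrt{\tau/2}}\le\frac{\sqrt T\,d_{t+1}(s)}{D_t(s)+b},\qquad b:=\sqrt{T\tau/2}\ge 1.
\]
Since $d_{t+1}(s)=D_{t+1}(s)-D_t(s)$ and each increment is at most $1\le b$, the ratio $u_t:=d_{t+1}(s)/(D_t(s)+b)$ lies in $[0,1]$, so the elementary inequality $\ln(1+u)\ge u/2$ on $[0,1]$ yields $u_t\le 2\ln\frac{D_{t+1}(s)+b}{D_t(s)+b}$. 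This telescopes:
\[
\sum_{t=1}^{T-1}\frac{d_{t+1}(s)}{D_t(s)+b}\le 2\ln\frac{D_T(s)+b}{b}\le 2\ln\!\Big(1+\tfrac{T}{b}\Big),
\]
using $D_T(s)\le T$. Hence the inner sum for $t\ge1$ is at most $2\sqrt T\,\ln(1+T/b)=\widetilde{\cO}(\sqrt T)$, and the single $t=0$ term contributes only the lower-order constant $C\sqrt{2\tau}$ (interpreting $\sqrt0\cdot\PP_{\overline{\pi}^0}=0$ so the denominator is $\sqrt{\tau/2}$).

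Finally I would sum over $s\in\cS$ and $h\in[H]$: since each $D_T(s)\le T$ the logarithmic factor is bounded uniformly, so the sum over the $S$ states and $H$ steps merely contributes a factor $SH$, and reinstating $C\tau=280\,nH^3(1+\tfrac1{c\Delta})^2\tau$ gives the claimed $\widetilde{\cO}\!\big(nSH^4(1+\tfrac1{c\Delta})^2\sqrt T\big)$, with the $\tau$'s absorbed into $\widetilde{\cO}$. The main obstacle—and the only genuinely nontrivial maneuver—is the core one-dimensional estimate: the numerator involves the \emph{new} visitation $d_{t+1}(s)$ while the denominator only sees the \emph{past} cumulative mass $D_t(s)$, so there is no pointwise control of one by the other; the resolution is the replacement $\sqrt t\to\sqrt T$ followed by the telescoping $\log$ bound, which trades a harmless extra $\ln T$ factor for a clean cumulative cancellation. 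Care is also needed with the edge case $t=0$ and the degenerate mixture $\overline{\pi}^0$, and with checking $b\ge1$ (which holds because $\tau\ge2$ and $T\ge1$) so the increments stay in the regime where $\ln(1+u)\ge u/2$ applies.
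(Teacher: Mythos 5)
Your proof is correct and follows essentially the same route as the paper's: reduce to a per-$(s,h)$ sum, use the uniform-mixture identity $t\,\PP_{\overline{\pi}^t}(s_h=s)=\sum_{j\le t}\PP_{\pi^j}(s_h=s)$, replace $\sqrt t$ by $\sqrt T$, and invoke the standard telescoping bound $\sum_t a_{t+1}/(\sum_{i\le t}a_i+b)=\cO(\log T)$ (the paper's Lemma on bounded sequences). Your derivation of that elementary step via $\ln(1+u)\ge u/2$ on $[0,1]$ is in fact a cleaner rendering of the paper's $e^{b_t}/4\le 1+b_t$ argument, and your explicit treatment of the $t=0$ term and of $b=\sqrt{T\tau/2}\ge 1$ is a welcome addition.
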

\begin{proof}
    Note that \begin{align*}&\sum_{t=0}^{T-1} \sum_{h=1}^H \EE_{s_h \sim \pi^t}[\overline{G}(\overline{\pi}^t, t)] \\&\le 68nH^3(1+\frac{1}{c\Delta})^2\tau\cdot \sum_{t=0}^{T-1}\sum_{h=1}^H\sum_{s \in \cS} \frac{\sqrt{t}\cdot \PP_{\pi^{t+1}}(s_h=s)}{t\cdot  \PP_{\overline{\pi}^t}(s_h=s) + \tau/2}
    \\&= 68nH^3(1+\frac{1}{c\Delta})^2\tau\cdot \sum_{t=0}^{T-1} \sum_{h=1}^H\sum_{s \in \cS} \frac{\sqrt{t}\cdot \PP_{\pi^{t+1}}(s_h=s)}{\sum_{t'=1}^{t}\cdot \PP_{\overline{\pi}^{t'}}(s_h=s) + 1}\end{align*}
    Now we provide a lemma that can bound both terms $\text{(A)}$ and $\text{(B)}.$
    \begin{lemma}\label{lemma: sum}
        For any non-negative sequence $a_1,\cdots, a_T \in [0,1]$, we have 
        \begin{align*}
            \sum_{t=0}^{T-1} \frac{a_{t+1}}{\sum_{i=1}^{t}a_{i} + 1} \le 4\log T.
        \end{align*}
    \end{lemma}
    \begin{proof}
        In fact, if we set $b_t = \frac{a_t}{\sum_{i=1}^{t-1}\cA_i + 1}\le 1$, we know $e^{b_t}/4\le 1+b_t $ and $(1/4)^T\prod_{t=1}^T e^{b_t}\le \prod_{t=1}^T (1+b_t) \le (a_1+a_2+\cdots + a_T)\le T$, which implies $\sum_{t=1}^T b_t \le 4\log T.$
    \end{proof}

Hence, by Lemma~\ref{lemma: sum}, we have 
\begin{align*}
    \sum_{t=1}^T \sum_{h=1}^H \EE_{s_h \sim \pi^t}[\overline{G}(\overline{\pi}^t, t)] &\le 68nH^3 (1+\frac{1}{c\Delta})^2 \tau \sqrt{T} \cdot SH\cdot 4\log T \\&= \widetilde{\cO}\left(nSH^4\left(1+\frac{1}{c\Delta}\right)^2\sqrt{T} \right),
\end{align*}
where $\widetilde{\cO}$ ignores all logarithm terms in $S, H, \max_i |\cA_i|, 1/\delta, T$.
We complete the proof.
\end{proof}
\paragraph{Final Proof}

First, we assume $\pi^{t+1}$ updates at each timestep, without considering the trigger condition.
By the definition of V-function and the definition of the best response, we know that 
\begin{align*}
    \max_{\mu_{i,h}^* \in \Delta(\cA_i)}\EE_{\mu_{i,h}^* \times\pi_{-i,h}}\left[r_{h} + \PP_{h+1}V_{h+1}^{\dagger, \pi_{-i}}\right](s) = V_{h}^{\dagger, \pi_{-i}}(s).
\end{align*}
Hence, by the theoretical guarantee of PG-share Theorem~\ref{thm: PG-share}, for any $t \ge 0$ we can easily know that 
\begin{align*}
     &\max_{\mu_{i,h} \in \Delta(\cA_i)}\EE_{\mu_{i,h}\times \pi_{-i,h}^{t+1}}\left[r_{h} + \PP_{h+1}\overline{V}_{h+1}^{t+1}\right](s) \\&\le \EE_{\pi_{h}^{t+1}}\left[r_{h} + \PP_{h+1}\overline{V}_{h+1}^{t+1}\right](s)  + \overline{G}(\overline{\pi}^t, t)\\
     &\le \overline{V}_{h}^{t+1}(s). 
\end{align*}
The last inequality holds from Lemma \ref{lemma:V}.
Hence, by a simple recursion we have 
\begin{align*}
    \overline{V}_{h}^{t+1}(s) \ge V_{h}^{\dagger, \pi_{-i}}(s), \ \ \forall h \in [H].
\end{align*}

Similarly, 
since
\begin{align*}
    \EE_{\pi_{h}^{t+1}}\left[r_{h} + \PP_{h+1}V_{h+1}^{\pi_{h+1}^{t+1}}\right](s) = V_h^{\pi^{t+1}}(s),
\end{align*}
by Lemma \ref{lemma:V} and a simple recursion, we can have 
\begin{align*}
    \overline{V}_{h}^{t+1}(s) \le V_{h}^{\pi^{t+1}}(s) + 7\sum_{h'=h}^H \EE_{\pi^{t+1}}\left[\overline{G}(\overline{\pi}^t, t)\right],
\end{align*}
which implies that 
\begin{align}
    \sum_{i=1}^n \sum_{t=1}^T V_{1}^{\dagger, \pi_{-i}^t}(s_1) - V_{1}^{\pi^t}(s_1) &\le 7\sum_{i=1}^n \sum_{t=1}^T \sum_{h=1}^H \EE_{\pi^t}\left[\overline{G}(\overline{\pi}^{t-1}, t))\right] \label{eq:G sum}\\&\le \widetilde{\cO}\left(n^2SH^4\cdot \left(1+\frac{1}{c\Delta}\right)^2\sqrt{T}\right).\nonumber
\end{align}

Now we consider the trigger condition. In fact, by Bernstein' inequality, with probability at least $1-\delta$, we know that 
\begin{align*}
  \frac{1}{2}t\cdot \PP_{\overline{\pi}^t}(s_h=s) - \frac{1}{2}\tau \le  T_h^t(s) \le 2t\cdot \PP_{\overline{\pi}^t}(s_h=s) + \frac{1}{2}\tau.
\end{align*}
Now suppose $t' = \min\{t'\ge I_t\mid T_h^{t'}(s) \ge T_h^{I_t}(s)\}$, which is the minimum time step that satisfies the trigger condition. Hence, we have
\begin{align*}\sqrt{t'}\PP_{\overline{\pi}^{t'}}(s_h=s)  + \sqrt{\tau/2}&\le \frac{2T_h^{t
'}(s) + 2\tau}{\sqrt{t}} + \sqrt{
\tau/2}\\&\le \frac{4T_h^{I_t}(s)}{\sqrt{t}} + 2\tau  + 2\sqrt{\tau/2}\\&\le 8\left(\sqrt{I_t}\cdot \PP_{\overline{\pi}^{I_t}}(s_h=s) + \sqrt{\tau/2}\right).\end{align*}
Now since we check the trigger condition when $t-I_t=2^a$ by the doubling trick, we know that $t-I_t \le 2(t'-I_t)$. Hence, 
\begin{align}
    t\PP_{\overline{\pi}^{t}}(s_h=s) &= \sum_{j=1}^t \PP_{\pi^j}(s_h=s) \\
    & = \sum_{j=1}^{I_t}\PP_{\pi^j}(s_h=s) + (t-I_t)\PP_{\pi^{I_t+1}}(s_h=s).
\end{align}
The second equation is because $\pi^j=\pi^{I_t}$ for all $j \in [I_t+1, t]$ since it is not been updated. 
Now we can further bound it by 
\begin{align}
    t\PP_{\overline{\pi}^{t}}(s_h=s) 
    & = \sum_{j=1}^{I_t}\PP_{\pi^j}(s_h=s) + 2(t'-I_t)\PP_{\pi^{I_t+1}}(s_h=s)\\
    & \le 2\times \left(\sum_{j=1}^{I_t}\PP_{\pi^j}(s_h=s) + (t'-I_t)\PP_{\pi^{I_t+1}}(s_h=s)\right)\\
    & = 2t'\PP_{\overline{\pi}^{t'}}(s_h=s).
\end{align}
Hence, we have 
\begin{align*}
    \sqrt{t}\PP_{\overline{\pi}^t}(s_h=s) + \sqrt{\tau/2} &\le \sqrt{2t'}\cdot 2\PP_{\overline{\pi}^{t'}}(s_h=s) + \sqrt{\tau/2}\\
    &\le 2\sqrt{2}\left(\sqrt{t'}\PP_{\overline{\pi}^{t'}}(s_h=s)  + \sqrt{\tau/2}\right)\\
    &\le 16\sqrt{2}\left(\sqrt{I_t}\cdot \PP_{\overline{\pi}^{I_t}}(s_h=s) + \sqrt{\tau/2}\right),
\end{align*}
which implies that $$\overline{G}(\overline{\pi}^{I_t}, I_t) \le 16\sqrt{2}\cdot \overline{G}(\overline{\pi}^t, t).$$
Now follow the same proof process to by Eq.~\ref{eq:G sum}, we can get 
\begin{align*}
    \sum_{i=1}^n \sum_{t=1}^T V_1^{\dagger, \pi_{-i}^t}(s_1) - V_1^{\pi^t}(s_1) &=\sum_{i=1}^n \sum_{t=1}^T V_1^{\dagger, \pi_{-i}^{I_t+1}}(s_1) - V_1^{\pi^{I_t+1}}(s_1)\\&\le 7 \sum_{i=1}^n \sum_{t=1}^T \sum_{h=1}^H \EE_{\pi^{I_t+1}}[\overline{G}(\overline{\pi}^{I_t+1}, I_t+1)]\\
    &\le 102\sqrt{2} \cdot \sum_{i=1}^n \sum_{t=1}^T \sum_{h=1}^H \EE_{\pi^{I_t+1}}[\overline{G}(\overline{\pi}^{t},t)]\\
    & =102\sqrt{2} \cdot \sum_{i=1}^n \sum_{t=1}^T \sum_{h=1}^H \EE_{\pi^{t}}[\overline{G}(\overline{\pi}^{t},t)]\\
    &=\widetilde{\cO}\left(n^2SH^4\cdot \left(1+\frac{1}{c\Delta}\right)^2\sqrt{T}\right).
\end{align*}
Letting $T = 1/\varepsilon^2$, we know that 
\begin{align*}
    \frac{1}{T}\sum_{t=1}^T \text{Gap}(\pi^t) \le \widetilde{\cO}\left(n^2 SH^4 \cdot \left(1+\frac{1}{c\Delta}\right)^2 \right) \cdot \varepsilon.
\end{align*}
The final step of Algorithm~\ref{alg: coopertaive} is to estimate the equilibrium gap and find the policy with a minimum gap.
In fact, by collect $\cO(H^2 \log(1/\delta)/\varepsilon^2)$ trajectories to estimate one value function, we can bound the estimation error by Hoeffding's inequality as 
\begin{gather*}
    \Big|\hat{V}^\pi(s_1) - V^\pi(s_1)\Big| \le \varepsilon, \\ 
    \Big|\hat{V}^{a_i\times \pi_{-i}}(s_1) - V^{a_i\times \pi_{-i}}(s_1)\Big| \le \varepsilon,\  \forall a_i \in \cA_i,
\end{gather*}
which implies that \begin{align*}\text{Gap}(\pi^t) &\le \sum_{i=1}^n \max_{a_i \in \cA_i}\left(V^{a_i \times \pi^t_{-i}}(s_1) - V^{\pi^t}(s_1)\right) \\&\le \sum_{i=1}^n \max_{a_i \in \cA_i}\left(\hat{V}^{a_i \times \pi^t_{-i}}(s_1) - \hat{V}^{\pi^t}(s_1)\right)+ 2n\varepsilon\\
&\le g^t + 2n\varepsilon\end{align*}
Hence, we have 
\begin{align}
    \text{Gap}(\pi^{t^*}) &\le g(t^*) + 2n\varepsilon \le \frac{1}{T}\sum_{t=1}^T g(t) + 2n\varepsilon \\&\le \frac{1}{T}\sum_{t=1}^T \text{Gap}(\pi^t) + 2n\varepsilon\\
    & \le \widetilde{\cO}\left(n^2 S H^4 \cdot \left(1+\frac{1}{c\Delta}\right)^2 \right) \cdot \varepsilon.
\end{align}
We complete the proof.
\paragraph{Communication Rounds}Now we calculate the number of communication rounds. For checking the trigger condition, it contains $\log(T) = \log(1/\varepsilon^2)$ communication rounds. Note that trigger condition can only hold for no more than $SH\log(T) = SH\log(1/\varepsilon^2)$ rounds.

At each time that trigger condition is satisifed, by Corollary~\ref{coroll: negap}, PG-Share Algorithm will take $\cO\left(SH\cdot H(\sqrt{T})^{3/4}\right) = \cO\left(SH^2T^{3/8}\right) = \cO\left(\frac{SH^2}{\varepsilon^{3/4}}\right)$ communication rounds since all PG-Share Oracles contain $SH$ potential game instances. 

Also, V-Approx algorithm will take 1 communication rounds. Hence, these two algorithms need $\cO\left(\frac{SH^2 \log(1/\varepsilon^2)}{\varepsilon^{3/4}}\right)$ rounds in total. 
Moreover, estimating the equilibrium gap needs at most $\cO(SH\log T)$ communication rounds.  This is because the number of distinct policies in $\{\pi^t\}_{t \in [T]}$ is at most $\cO(SH\log T)$ by our trigger condition.
Hence, it totally needs $\widetilde{\cO}\left(\frac{S^2H^3 }{\varepsilon^{3/4}}\right)$ communication rounds.

\paragraph{Sample Complexity}
At each time that trigger condition is satisifed, by Corollary~\ref{coroll: negap}, all PG-Share oracles takes no more than $\widetilde{\cO}\left(\frac{SH\cdot H^3(\sum_{i=1}^n|\cA_i|)\sqrt{T}^{11/4}}{\Delta}\right)$ since all oracles contain $SH$ potential game instances. Since the trigger condition happens for at most $\widetilde{\cO}(SH)$ times, the total sample complexity for PG-Share oracles will be no more than $\widetilde{\cO}\left(\frac{S^2H^5\cdot (\sum_{i=1}^n|\cA_i|)\sqrt{T}^{11/4}}{\Delta}\right) = \widetilde{\cO}\left(\frac{S^2H^5\cdot (\sum_{i=1}^n|\cA_i|)}{\varepsilon^{11/4}\Delta}\right).$
For algorithm V-Approx, it takes at most $T\cdot H$ samples for each time. Hence, the number of total samples is at most $\widetilde{\cO}(H^2T) = \widetilde{\cO}\left(\frac{H^2}{\varepsilon^2}\right)$. 
Finally, to estimate the equilibrium gap, we need $\widetilde{\cO}\left(\frac{H^2(\sum_{i=1}^n |\cA_i|)\cdot \log(1/\delta)}{\varepsilon^2}\cdot SH\log (1/\varepsilon^2)\right) = \widetilde{\cO}\left(\frac{SH^3(\sum_{i=1}^n |\cA_i|)\cdot \log(1/\delta)}{\varepsilon^2}\right)$ samples. 

Hence, in total, the sample complexity will be no more than $\widetilde{\cO}\left(\frac{S^2H^5\cdot (\sum_{i=1}^n|\cA_i|)}{\varepsilon^{11/4}\Delta}\right).$ \qed

\section{Environment Details}\label{appendix:env details}
\subsection{Potential Games}
Our environment for potential game is a fully cooperative environment, in which all agents share a same reward $r(\bm{a})$. The number of agents is 3, while the number of actions for each agent is 10. We assign rewards for each action randomly sampled from the interval $[0,0.2]$. The learning rate of all algorithms are set to 0.1. 

We evaluate all algorithms over $N=5000$ episodes, where agents access the reward buffer once every 500 episodes. For the base policy prediction method, we precompute five base policies $\pi_1, \cdots, \pi_5$ by performing old gradient descent in the communication round and use $\pi_i$ for the base policy of the importance sampling for the next $[100(i-1), 100i-1]$ episodes. In all three algorithms, 100 samples are collected per communication round. In particular, in the base policy prediction algorithm, this corresponds to 20 samples per base policy. The standard NPG uses $20$ episodes for each round and directly gets access to the reward at each timestep, inducing 5000 communication rounds. All results are averaged over five random seeds, with error bars indicating the standard deviation.

\subsection{Congestion Games}

In this experiment, we evaluate our algorithms in a congestion game, which borrows the same experiment setting as \citep{sun2023provably}.
To be more specific, we consider $n=8$ agents, each with an action space $A_i = {A,B,C,D}$. The environment consists of two states, $S={\text{safe}, \text{distancing}}$. An agent’s reward for selecting action $a$ is given by $w^s_a$ times the number of other agents choosing the same action, where the weights satisfy $w^s_A < w^s_B < w^s_C < w^s_D$. Rewards in the distancing state are uniformly reduced relative to the safe state. The state transition depends on joint actions: if more than half of the agents select the same action, the system transitions to the distancing state; if actions are evenly distributed, it transitions to the safe state.

We run 500 episodes in total, and agents access the reward buffer every 30 episodes. For the base policy prediction method, we predict six base policies and use each for 5 episodes. 
All results are averaged over four random seeds, with error bars indicating the standard deviation.

\subsection{MAPPO}
In this experiment, the environments are MPE \citep{lowe2017multi} and SMAC environments. For MPE environments, we refer \citep{lowe2017multi, yu2022surprising} for the detailed introduction. For SMAC environments, we refer \citep{samvelyan2019starcraft} for the detailed introduction. For both experiments, we run 5 random seeds, with error bars indicating the standard deviation.

At each timestep, we perform 10 gradient updates, following the standard MAPPO setup. We define a communication interval parameter $I$. In our BPP algorithm, at each communication round, we precompute $I$ policies $\pi_1, \cdots, \pi_I$ for the next $I$ steps, by performing the gradient descent using the gradient in the communication round. Then, for the next $I$ steps, we use these $I$ base policies as the base policy of the importance sampling, to get the new gradient.

\paragraph{MPE} In the MPE experiment, for the Spread environment, we use three homogeneous agents that share a replay buffer and a common policy structure. The learning rate is set to 7e-4, the episode length is set to 25, and the number of landmarks is 3. In the Reference environment, we use two homogeneous agents with the same learning rate, episode length, and number of landmarks.

\paragraph{SMAC}
In the SMAC environment, we set the learning rate to 5e-4, set episode length to 400, and run 2e6 gradient updates in all three environments \textit{1c3s5z}, \textit{MMM}, and \textit{3s$\_$vs$\_$3z}. 

For more comprehensive evaluation, we also apply our BPP mechanism to the Independent PPO \citep{de2020independent} (IPPO) algorithm, where each agent is treated as an independent PPO learner that optimizes its own policy using only local observations and rewards. As shown in the following Figure \ref{fig:convergencepotential}, the left two figures show the tradeoff between communication interval and the performance, while the right two figures show that our BPP mechanism leads to clear improvements on IPPO with the same communication constraint and further demonstrates the generality of our approach.

\begin{figure}[H]
\setlength{\abovecaptionskip}{2pt}
    \centering

    \begin{subfigure}{\linewidth}
        \centering
        \begin{subfigure}{0.45\linewidth}
            \includegraphics[width=\linewidth]{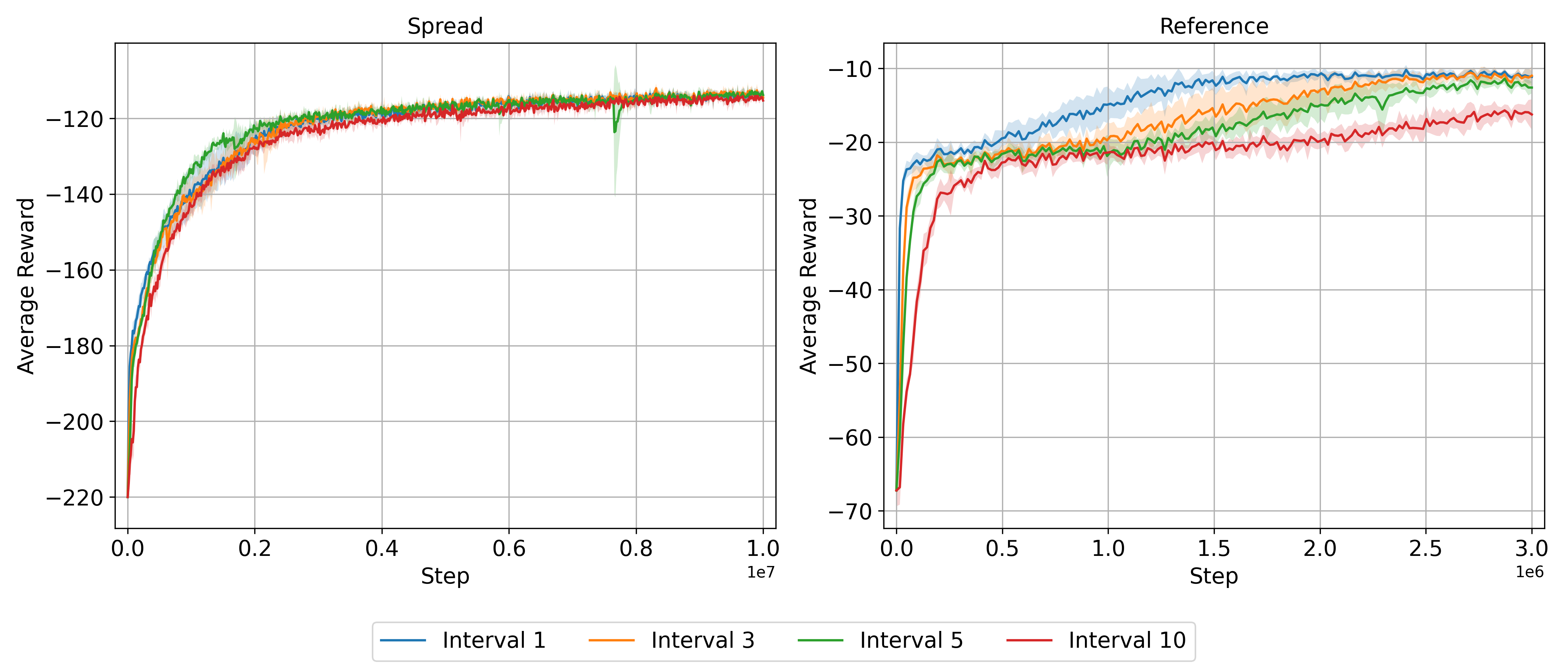}
            \label{fig:ipposub1}
        \end{subfigure}
        \begin{subfigure}{0.45\linewidth}
            \includegraphics[width=\linewidth]{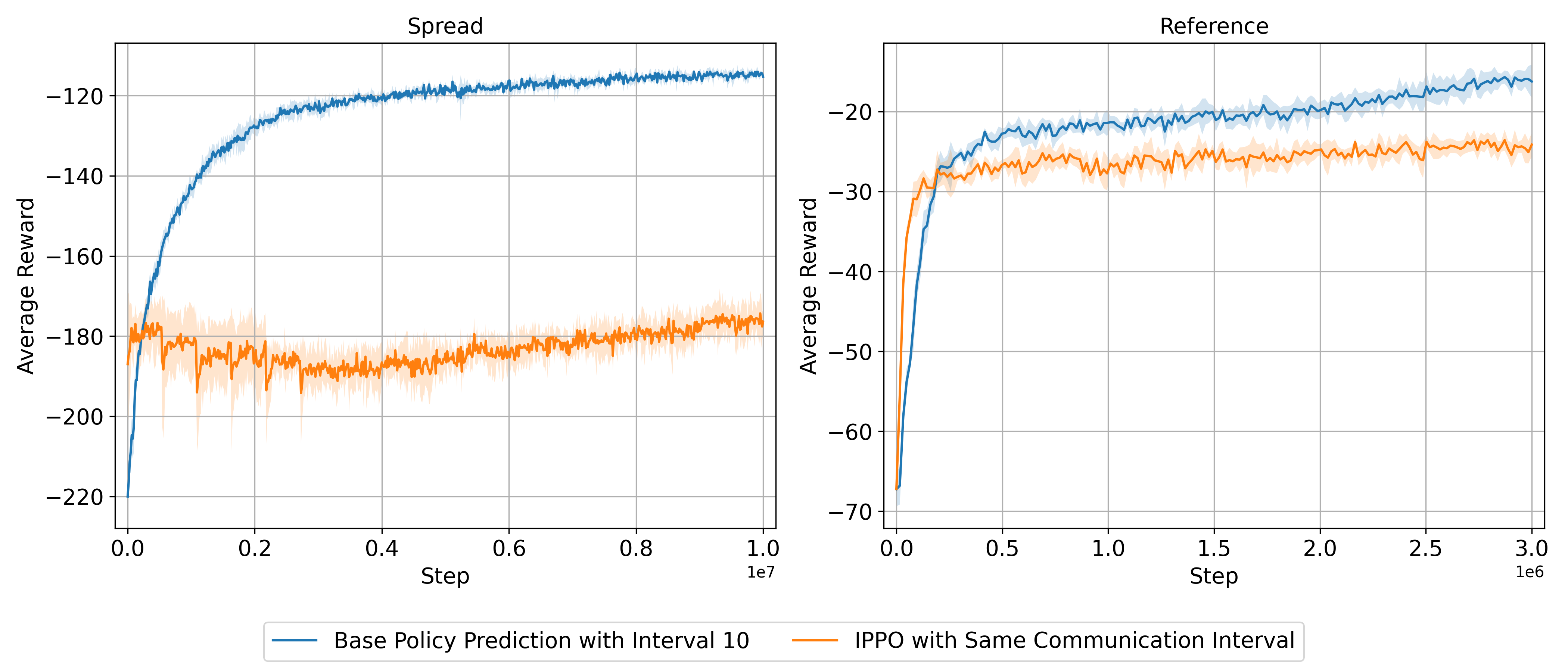}
            \label{fig:ipposub4}
        \end{subfigure}
        
        \label{fig:ippogroup1}
    \end{subfigure}
    

    
    \caption{The Left two figures show the comparisons of base policy prediction under different communication intervals. The right two figures show the comparisons between IPPO and base policy prediction.}
    \label{fig:convergencepotential}
\end{figure}

\section{Additional Related Works}\label{appendix:additional comparisons}
To improve communication efficiency, many prior studies have explored how to reduce the amount of information exchanged in multi-agent RL. Common approaches include parameter sharing and constructing communication networks among agents \citep{yi2022learning, kim2020communication, sukhbaatar2016learning, peng2017multiagent}, as well as sharing critic updates across agents \citep{chen2024communication, zhang2019efficient}. Even if they study how to compress the communication, most of these methods still require communication at every iteration.

Paper \citep{zhang2019efficient} further reduces communication frequency by broadcasting messages through a learnable message encoder only when the local Q-function exhibits high variance. In contrast, our method does not rely on any message encoder that needs extra training and can therefore be much easier to integrate into existing policy gradient–style algorithms. Moreover, we assume the agent can not even see the local reward functions without communication to the data buffer, making it impossible for agents to estimate local value functions. Last, our BPP mechanism also has a good theoretical guarantee for both performance and the communication cost.

Independent to our setting, some paper \citep{hsu2024randomized,dubey2021provably, min2023cooperative} also consider the cooperative MARL, with some analysis for communication complexity. These papers focus on parallel MDP, where each agent $i$ corresponds to a different MDP with independent states $s_i$ and action $a_i$, and the reward functions for each agent $i$ only depend on $s_i$ and $a_i$. The only connection among all agents is that they share the same features for a linear MDP. Their communication reducing strategy is highly dependent on this particular structure, since it computes the covariance matrix of the features in the historical data. Compared to these works, our setting is much more general since the rewards for each agent $i$ are dependent on joint action $a = (a_1, \cdots, a_n)$, and all agents also have a global transition kernel $\mathbb{P}(s'\mid s,a)$ dependent on the joint action. This is much more practical for a multi-agent RL benchmark.
Empirically, their communication strategy cannot be applied to our setting and experiments since the strategy highly depends on this particular parallel MDP structure. Compared to them, our MBP mechanism is much more practical and can be applied to any policy gradient-style algorithm. 
\end{document}